 \newtheorem{theorem}{Theorem}[section]
 \newtheorem{lemma}[theorem]{Lemma}
 \newtheorem{remark}[theorem]{Remark}
 \newtheorem{proposition}[theorem]{Proposition}
 \newtheorem{corollary}[theorem]{Corollary}
 \newtheorem{assumption}[theorem]{Assumption}
 \newenvironment{proof}
   {\begin{list}{\textbf{Proof}:}
                {\setlength{\leftmargin}{0em}
                 \setlength{\labelwidth}{-0.5em}
                }
   }
   {\hspace*{\fill}$\Box$\end{list}}
 \newenvironment{proof*}
   {\begin{list}{\textbf{Proof}:}
                {\setlength{\leftmargin}{0em}
                 \setlength{\labelwidth}{-0.5em}
                }
   }
   {\end{list}}
  \newcommand{\R}{\mathds{R}}
  \newcommand{\N}{\mathds{N}}
\newcommand{\la}{\langle}
\newcommand{\ra}{\rangle_{H}}
\newcommand{\SVMplz}{f_{P,\lambda_{0}}}
\newcommand{\Ls}{L^{\prime}}
\newcommand{\Lss}{L^{\prime\prime}}
\newcommand{\X}{\mathcal{X}}
\newcommand{\Y}{\mathcal{Y}}
\newcommand{\XX}{\X\times\X}
\newcommand{\XY}{\X\times\Y}
\newcommand{\G}{\mathcal{G}}
\newcommand{\Transp}{^\textsf{\tiny T}}
\begin{document}

\title{Asymptotic Confidence Sets for   
       General Nonparametric Regression and Classification
       by Regularized Kernel Methods}

 \author{Robert Hable$^{\,\ast}$ \\ 
         Department of Mathematics \\
         University of Bayreuth \\
         D-95440 Bayreuth, Germany
         }
 \date{}
         
\maketitle

\begin{abstract}
  Regularized kernel methods such as, e.g.,
  support vector machines and 
  least-squares support vector regression
  constitute an important class of standard learning
  algorithms in machine learning. 
  Theoretical investigations 
  concerning asymptotic properties
  have manly focused on rates of convergence
  during the last years but there are only very few 
  and limited (asymptotic) results on statistical
  inference so far. As this is a serious limitation
  for their use in mathematical statistics,
  the goal of the article is to fill this gap.
  Based on asymptotic normality of many of these 
  methods, the article derives a strongly consistent
  estimator for the unknown covariance matrix of the 
  limiting normal distribution.
  In this way, we obtain asymptotically correct
  confidence sets for $\psi(f_{P,\lambda_0})$
  where $f_{P,\lambda_0}$ denotes the minimizer
  of the regularized risk in the reproducing kernel
  Hilbert space $H$ and $\psi:H\rightarrow\mathds{R}^m$
  is any Hadamard-differentiable functional.
  Applications include (multivariate) pointwise
  confidence sets for values of $f_{P,\lambda_0}$
  and confidence sets for gradients, integrals,
  and norms.
\end{abstract}
\textit{Keywords:} 
  Asymptotic confidence sets, asymptotic normality,
  least-squares support vector regression,
  regularized kernel methods, support vector machines.
  
\textit{MSC:} 62G08, 62G15

\section{Introduction}  \label{sec-introduction}

Regularized kernel methods constitute an important class of
standard learning algorithms in machine learning theory.
The prominent learning algorithms \emph{support vector machine}
(for classification) and 
\emph{(least-squares) support vector regression} 
(for regression) also belong to this class; see, e.g.,
\cite{vapnik1998}, \cite{schoelkopf2002}, and \cite{steinwart2008}.
While these methods are standard in machine learning theory
and are widely applied, their propagation
in mathematical statistics
is still limited. This is partly due to the fact that
there is a lack of results on statistical inference for these methods
so far.
In machine learning theory, the goal in a supervised learning problem
is to find a ``good''
predictor $f:\X\rightarrow\Y$ which maps
the observed value $x\in\X$ of an input variable $X$ to a prediction of
the unobserved value $y\in\Y$ of an output variable.
A learning algorithm $S_n$ is a mapping 
which maps a set $D_n$ of observed training data 
$(x_1,y_1),\dots,(x_n,y_n)$ to a predictor $f_{D_n}$.
In mathematical statistics, such a problem would rather be called
a (nonparametric) regression (or a classification) problem, 
$X$ is the covariate,
$Y$ is the response variable, $S_n$ is an estimator and
$S_n(D_n)=f_{D_n}$ is the estimated function.
In both contexts, a ``good'' predictor/estimate $f$
has a small expected loss (also called risk) 
$\mathcal{R}_{L,P}(f)=\int L(x,y,f(x))\,P(d(x,y))$ where
$L$ is a ``suitable'' loss function and $P$ is the
joint distribution of $X$ and $Y$.  
However, depart form the different terminology, there is also a real
difference: 
In machine learning, the goal is to find any
predictor $f$ which has a small risk and, accordingly, 
a learning algorithm $S_n$ should be risk-consistent, i.e.,
$\mathcal{R}_{L,P}(S_n(D_n))$ converges to the infimal risk
$\inf_f\mathcal{R}_{L,P}(f)$ for $n\rightarrow\infty$.
In statistics, it is e.g.\ common to make a signal
plus noise assumption such as $Y=f_0(X)+g(X)\varepsilon$ and
the goal is to estimate the unknown regression function
$f_0$. Under suitable assumptions, $f_0$ minimizes
$\mathcal{R}_{L,P}(f)$ in certain sets of functions $f$.
While, in machine learning, one is mainly interested in
minimizing the risk, in statistics, 
one is mainly interested in the minimizer
and, accordingly, an
estimator $S_n$ should be consistent in the sense that
$S_n$ converges to $f_0$. For statistical inference, it is also
crucial to have estimates for the error of the
estimator, e.g., in order to obtain confidence sets or
hypothesis tests. While consistency results 
for the risk, e.g.\ \cite{steinwart2002},
\cite{zhang2004b}, \cite{steinwart2005},
and \cite{christmannsteinwart2007},
and also for the functions, 
e.g.\ \cite[\S\,3]{Steinwart:Christmann:2009} and
\cite[Cor.\ 3.7]{HableChristmann2011}, 
are well-known
for regularized kernel methods, 
there are only
very few and limited results concerning statistical
inference. In order to fill this gap,
asymptotic confidence sets for a
wide class of regularized kernel methods are developed
in the following. This is possible now because 
\cite{hable2012a} derives asymptotic normality of these methods
and, based on this result, 
estimating the error of the estimate gets tractable.
Let $f_{\mathbf{D}_{n},\Lambda_n}$ be the (nonparametric) estimate 
obtained by a regularized kernel method 
($\Lambda_n$ is a data-driven regularization parameter),
fix any $\lambda_0\in(0,\infty)$,
and let $\SVMplz$ be the minimizer of the regularized problem
\begin{eqnarray}\label{intro-regularized-problem}
  f\;\mapsto\;
  \mathcal{R}_{L,P}(f)
      \,+\,\lambda_0\|f\|_{H}^2
\end{eqnarray}
in the function space $H$ (a so-called reproducing kernel 
Hilbert space). According to
\cite[Theorem 3.1]{hable2012a}, 
under some assumptions,
the sequence of function-valued
random variables
$\sqrt{n}
    \big(f_{\mathbf{D}_{n},\Lambda_n}-\SVMplz
    \big)
$
weakly converges to a Gaussian process in $H$.
As a consequence, for differentiable functions $\psi:H\rightarrow\R^m$,
it follows that 
$$\sqrt{n}
    \Big(\psi\big(f_{\mathbf{D}_{n},\Lambda_{n}}
             \big)-
         \psi\big(\SVMplz\big)
    \Big)
    \;\;\leadsto\;\;\mathcal{N}_m(0,\Sigma_P)\;.
$$
In order to obtain asymptotic confidence sets, e.g., for
the vector of values $\SVMplz(x_j)$, $j\in\{1,\dots,m\}$,
one has to choose $\psi(f)=(f(x_1),\dots,f(x_m))$ and
it only remains to estimate the asymptotic covariance matrix $\Sigma_P$.
The derivation of a consistent estimator is not 
a trivial task and is the main issue of the article.
However, pointwise confidence sets for the true values of $\SVMplz$
are not the only possibility to directly apply the results of the
article. We also obtain confidence sets e.g.\ for
integrals of $\SVMplz$ (choose $\psi(f)=\int_B f\,d\lambda$)
or for the differential of $\SVMplz$ in a point $x_0$
(choose $\psi(f)=\partial f(x_0)$) and many others.
Essentially, it is only needed that $\psi$ takes its values in
$\R^m$ for any $m\in\N$ and is suitably differentiable. 

Note that we are only able to derive asymptotic confidence sets for
the unknown solution $\SVMplz$ of the regularized problem
(\ref{intro-regularized-problem}). 
Of course, it would be desirable to obtain asymptotic confidence
sets for the minimizer of the unregularized 
risk $\mathcal{R}_{L,P}$.
However, 
in our completely nonparametric setting ($P$ is totally unknown), 
this would require
a uniform rate of convergence of the learning algorithm/estimator
to the minimizer of $\mathcal{R}_{L,P}$ 
(if such a minimizer exists at all) and it is well-known 
from the no-free-lunch theorem \cite{devroye1982} that such a
uniform rate of convergence does not exist. 
That is, similar results for the
minimizer of the unregularized $\mathcal{R}_{L,P}$ can only
be obtained under substantial assumptions on the unknown
distribution $P$.\\
Accordingly, the approach in the present article which focuses
on applications in statistical inference 
considerably differs from the 
approach common in machine learning 
theory which focuses on (as fast as possible)
rates of convergence of the risk, e.g.,
\cite{steinwartscovel2007}, \cite{caponnetto2007},
\cite{blanchard2008}, \cite{steinwarthush2009},
\cite{mendelson2010}. This approach considers
large classes $\mathcal{P}$ of probability measures
for which learning rates, e.g., in the form
$$P^n\Big(
     \mathcal{R}_{L,P}(f_{D_n,\lambda_n})
     \,-\,\inf_f\mathcal{R}_{L,P}(f)
     \,\leq\,c_{P,\delta}\cdot n^{-\beta}
     \Big)
     \geq 1-\delta\;,
$$
exist and
where the rate of convergence $\beta>0$ does not depend on $P$
and the infimum is taken over all measurable functions
$f:\X\rightarrow\R$.   
Such learning rates are an important 
tool in order to compare theoretical
properties of
different learning algorithms. However, these results cannot be applied
offhand for statistical inference in real applications because
the constant $c_{P,\delta}$ is usually unknown. 
Furthermore, the focus lies on maximizing $\beta$ 
which, typically, results
in an increase of the constant $c_{P,\delta}$ so that
the bound $c_{P,\delta}\cdot n^{-\beta}$ might be large for 
ordinary sample
sizes $n$. In addition, whether a probability measure
belongs to $\mathcal{P}$ is often subject to assumptions which
are hard to communicate to practitioners and
to be satisfactorily checked 
or made plausible in applications. 
A common assumption is, e.g., Tsybakov's noise assumption 
\cite[p.\ 138]{Tsybakov2004}.

\bigskip

The present article derives asymptotic confidence sets 
for $\psi(\SVMplz)$ based on the asymptotic normality results 
of \cite{hable2012a}. 
So far, there are only very few publications which
are concerned with statistical inference for regularized kernel
methods.
In the special case of classification
by use of the hinge loss and linear SVMs (i.e. linear kernel),
asymptotic normality of the coefficients of the linear SVM
is shown in \cite{koo2008} under a number of regularity assumptions
(e.g.\ existence of continuous densities). 
Though this could
yield an alternative way of deriving asymptotic confidence sets
in this special case, this has not been done so far.
In the special case of classification
by use of the hinge loss and SVMs with finite-dimensional kernels
(i.e.\ a parametric setting),
\cite{jiang2008} shows asymptotic normality 
of the prediction error estimators and derive
confidence intervals for the prediction error of the
empirical SVM. 
In the special case of regression by use of
least-squares support vector regression,
\cite{brabanter2011} proposes approximate confidence 
intervals for the regression function whose 
derivation is partly based on heuristics;
it is not documented
whether these intervals approximately hold the intended
confidence level in simulated examples.

\bigskip

In the following Section \ref{subsec-setup}, some basics
of regularized kernel methods are recalled. The main
part of the article, Section \ref{sec-asymptotic-confidence},
consists of two subsections:
Subsection \ref{subsec-theory}
derives an asymptotically consistent estimator of 
$\Sigma_P$ and asymptotic confidence intervals; Subsection
\ref{subsec-computation} shows how the calculation
of the estimator can be done in a computationally tractable way. 
All proofs are given in the appendix.

\section{Regularized Kernel Methods}\label{subsec-setup}

Let $(\Omega,{\cal A},Q)$ be a probability space, 
let $\mathcal{X}$ be a closed and bounded subset of
$\R^d$,
and let $\mathcal{Y}$ be a 
closed subset of $\mathds{R}$ with Borel-$\sigma$-algebra
$\mathfrak{B}(\mathcal{Y})$\,. The Borel-$\sigma$-algebra
of $\mathcal{X}\times\mathcal{Y}$ is denoted by
$\mathfrak{B}(\mathcal{X}\times\mathcal{Y})$.
Let
\begin{eqnarray*}
  X_{1},\dots,X_{n}\;:\;\;(\Omega,{\cal A},Q)
  \;\longrightarrow\;\big(\mathcal{X},\mathfrak{B}(\mathcal{X})\big)\,, \\
  Y_{1},\dots,Y_{n}\;:\;\;(\Omega,{\cal A},Q)
  \;\longrightarrow\;\big(\mathcal{Y},\mathfrak{B}(\mathcal{Y})\big)\;\;
\end{eqnarray*}
be random variables such that
$\,(X_{1},Y_{1}),\dots,(X_{n},Y_{n})\,$ are independent 
and identically distributed according to some unknown
probability measure
$P$ on
$\big(\mathcal{X}\times\mathcal{Y},
  \mathfrak{B}(\mathcal{X}\times\mathcal{Y})
 \big)
$.
Define
$$\mathbf{D}_{n}\;:=\;\big((X_{1},Y_{1}),\dots,(X_{n},Y_{n})\big)
  \qquad\forall\,n\in\mathds{N}\;.
$$

A measurable map
$\,L:\mathcal{X}\times\mathcal{Y}\times\mathds{R}\rightarrow[0,\infty)\,
$
is called \emph{loss function}. A loss function $L$ is called
\emph{convex} loss function if it is convex in its third argument, 
i.e.
$t\mapsto L(x,y,t)$ is convex for every $(x,y)\in\XY$. Furthermore,
a loss function $L$ is called $P$-integrable Nemitski loss function
of order $p\in[1,\infty)$ if there is a $P$-integrable function
$b:\XY\rightarrow\R$ 
and a constant $c\in(0,\infty)$
such that
$$\big|L(x,y,t)\big|\;\leq\;b(x,y)+c|t|^{p}
 \qquad\forall\,(x,y,t)\in\XY\times\R\;.
$$
If $b$ is even $P$-\emph{square}-integrable, $L$ is called
$P$-\emph{square}-integrable Nemitski loss function
of order $p\in[1,\infty)$.
The \emph{risk} of a measurable function 
$f:\mathcal{X}\rightarrow\mathds{R}$
is defined by
$$\mathcal{R}_{L,P}(f)\;=\;
  \int_{\mathcal{X}\times\mathcal{Y}}L\big(x,y,f(x)\big)\,
  P\big(d(x,y)\big)\;.
$$
The goal is to estimate a function $f:\mathcal{X}\rightarrow\R$
which minimizes this risk. 
The estimates obtained from regularized kernel methods
are elements of so-called reproducing kernel Hilbert spaces
(RKHS) $H$. An RKHS $H$ is a certain Hilbert space 
of functions $f:\mathcal{X}\rightarrow\mathds{R}$ which is
generated by a \emph{kernel}  
$k:\mathcal{X}\times\mathcal{X}\rightarrow\mathds{R}$\,.
See e.g.\ \cite{schoelkopf2002}, \cite{berlinet2004},
\cite{steinwart2008},
or \cite{Hofmann:Schoelkopf:2008}
for details about these concepts.

Let $H$ be such an RKHS. Then, the
\emph{regularized risk} of an element $f\in H$ is defined to be
$$\mathcal{R}_{L,P,\lambda}(f)
  \;=\;\mathcal{R}_{L,P}(f)\,+\,\lambda\|f\|_{H}^2\;,
  \qquad\text{where}\;\;\;\lambda\in(0,\infty)\,.
$$

An element $f\in H$ is denoted by $f_{P,\lambda}$ if it
minimizes the regularized risk in $H$\,. That is,
$$\mathcal{R}_{L,P}(f_{P,\lambda})
      \,+\,\lambda\|f_{P,\lambda}\|_{H}^2\;=\;
  \inf_{f\in H}\,\big(\mathcal{R}_{L,P}(f)\,+\,\lambda\|f\|_{H}^2
                 \big)\;.
$$
The estimator is defined by
$$S_{n}\;:\;\;(\mathcal{X}\times\mathcal{Y})^n\times(0,\infty)
  \;\rightarrow\;H\,,\qquad
  (D_{n},\lambda)\;\mapsto\;
  f_{D_{n},\lambda}
$$
where $f_{D_{n},\lambda}$ is that function $f\in H$ which minimizes
\begin{eqnarray}\label{setup-regularized-empirical-risk}
  \frac{1}{n}\sum_{i=1}^{n}L\big(x_{i},y_{i},f(x_{i})\big)
  \,+\,\lambda\|f\|_{H}^{2}
\end{eqnarray}
in $H$ for 
$D_{n}=((x_{1},x_{2}),\dots,(x_{n},y_{n}))\,\in\,
 (\mathcal{X}\times\mathcal{Y})^{n}
$\,.
The estimate $f_{D_{n},\lambda}$ 
uniquely exists for every $\lambda\in(0,\infty)$ and every
data-set $D_{n}\in(\XY)^{n}$
if $t\mapsto L(x,y,t)$ is convex for every $(x,y)\in\XY$.

\smallskip

In the article, the symbol $\leadsto$ denotes
weak convergence of probability measures
or random variables.

\section{Asymptotic Confidence Intervals}\label{sec-asymptotic-confidence}

\subsection{Theory}\label{subsec-theory}

The derivation of asymptotic confidence sets is based on
the result in \cite{hable2012a} that, under some assumptions,
$$\sqrt{n}
    \big(f_{\mathbf{D}_{n},\Lambda_n}-\SVMplz
    \big)
    \;\;\leadsto\;\;\mathds{H}_P
    \qquad \text{in}\;\;H
$$
where $\mathds{H}_P$ is a mean-zero Gaussian process in $H$
and $\Lambda_n$ is a random regularization parameter
(e.g.\ data-driven).
Therefore, the same assumptions as in 
\cite{hable2012a} are needed; they are
collocated in the following:
\begin{assumption}\label{basic-assumptions}
  Let $\mathcal{X}\subset\R^d$ be closed and bounded and
  let $\mathcal{Y}\subset\R$ be closed. 
  Assume that $k:\XX\rightarrow\R$ is the restriction of an
  $r$\,-\,times continuously differentiable kernel 
  $\tilde{k}:\R^d\times\R^d\rightarrow\R$
  such that
  $r>d/2$ and $k\not=0$.
  Let $H$ be the RKHS of $k$
  and let $P$ be a probability measure on $(\XY,\mathfrak{B}(\XY))$\,.
  Let 
  $$L\;:\;\;\XY\times\R\;\rightarrow\;[0,\infty)\,,\qquad
    (x,y,t)\;\mapsto\;L(x,y,t)
  $$
  be a convex, $P$-square-integrable Nemitski loss function 
  of order $p\in[1,\infty)$
  such that
  the partial derivatives
  $$\Ls(x,y,t)\;:=\;\frac{\partial L}{\partial t}(x,y,t)
    \qquad\text{and}\qquad
    \Lss(x,y,t)\;:=\;\frac{\partial^2 L}{\partial^2 t}(x,y,t)
  $$
  exist for every $(x,y,t)\in\XY\times\R$\,.
  Assume that the maps
  $$(x,y,t)\;\mapsto\;\Ls(x,y,t) \qquad\text{and}\qquad
    (x,y,t)\;\mapsto\;\Lss(x,y,t)
  $$
  are continuous. Furthermore, assume that for every $a\in(0,\infty)$,
  there is a $b_{a}^{\prime}\in L_{2}(P)$ and a constant
  $b_{a}^{\prime\prime}\in [0,\infty)$ such that,
  for every $(x,y)\in\XY$,
  \begin{eqnarray}\label{theorem-sqrt-n-consistency-1}
    \sup_{t\in[-a,a]}\big|\Ls(x,y,t)\big|\;\leq\;b_{a}^{\prime}(x,y)
    \quad\;\text{and}\quad\;
    \sup_{t\in[-a,a]}\big|\Lss(x,y,t)\big|\;\leq\;
         b_{a}^{\prime\prime}\;.
  \end{eqnarray}
\end{assumption}

\medskip

These assumptions are relatively mild. 
In particular, the assumptions on
$k$ are fulfilled for all of the most common kernels
(e.g.\ Gaussian RBF kernel, polynomial kernel, exponential kernel, 
linear kernel). Though assuming differentiability 
of the loss function
is an obvious
restriction (as it does not cover 
some of the most popular loss functions
as hinge, epsilon-insensitive, and pinball), 
this assumption is not based on
any unknown entity such as the model distribution $P$\,.
Therefore, a practitioner can a priori
meet this requirement by a suitable choice of the
loss function (e.g.\ the least-squares loss
for regression, the logistic loss for classification
(or smoothed versions of hinge, epsilon-insensitive, and pinball).
This is contrary to the assumptions common
in order to establish rates of convergence to the infimal risk.
Typically, the assumptions used there depend on the unknown $P$
so that they can hardly be checked in applications
and are mathematically involved so that they 
can hardly be
communicated to practitioners.
In Assumption 
\ref{basic-assumptions}, the only assumptions on $P$ 
are integrability assumptions,
which are natural as such assumptions are 
necessary even for ordinary central limit theorems.
Explicit examples where Assumption 
\ref{basic-assumptions} is fulfilled are given in
Section \ref{sec-applications}.

\medskip

Under these assumptions, we have asymptotic normality:
\begin{theorem}{\cite[Theorem 3.1]{hable2012a}}
\label{theorem-sqrt-n-consistency}
  Let Assumption \ref{basic-assumptions} be fulfilled. Then, 
  for every $\lambda_{0}\in(0,\infty)$,
  there is a tight, Borel-measurable
  Gaussian process 
  $$\mathds{H}_P\;:\;\;\Omega\;\rightarrow\;H\,,\quad\;
    \omega\;\mapsto\;\mathds{H}_P(\omega)
  $$ 
  such that, 
  \begin{eqnarray}\label{theorem-sqrt-n-consistency-2}
    \sqrt{n}
    \big(f_{\mathbf{D}_{n},\Lambda_{n}}-\SVMplz
    \big)
    \;\;\leadsto\;\;\mathds{H}_P
    \qquad \text{in}\;\;H
  \end{eqnarray}
  for every Borel-measurable sequence
  of random regularization parameters $\Lambda_{n}$ with
  $$\sqrt{n}\big(\Lambda_{n}-\lambda_0\big)
    \;\xrightarrow[\;n\rightarrow\infty\;]{}\;0
    \qquad\text{in probability\,.}
  $$
  The Gaussian process $\mathds{H}_P$ 
  is zero-mean; i.e.,
  $\mathds{E}\la f,\mathds{H}_P\ra=0$ for every $f\in H$\,.
\end{theorem}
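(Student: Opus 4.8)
The plan is to characterize $\SVMpl$ as the zero of a score operator on $H$ and then to read off the limit law by the functional delta method applied to the empirical measure. Since $t\mapsto L(x,y,t)$ is convex and differentiable, the minimizer of $f\mapsto\Rs_{L,P}(f)+\lambda\|f\|_H^2$ is characterized by the vanishing of its G\^ateaux gradient; writing $\Phi(x):=k(x,\cdot)\in H$ for the canonical feature map and using the reproducing property $g(x)=\la\Phi(x),g\ra$, this first-order condition reads
\[
  G(P,\lambda,f)\;:=\;2\lambda f+\E_P\big[\Ls(X,Y,f(X))\,\Phi(X)\big]\;=\;0\in H.
\]
First I would check that this $H$-valued (Bochner) integral is well defined and that $\SVMpl$ is its unique zero, and likewise that the empirical estimate satisfies $f_{\mathbf{D}_n,\lambda}=f_{\Emp_n,\lambda}$ with $G(\Emp_n,\lambda,f_{\Emp_n,\lambda})=0$, where $\Emp_n:=\frac1n\sum_{i=1}^n\delta_{(X_i,Y_i)}$; uniqueness comes from the strict convexity contributed by the term $\lambda\|\cdot\|_H^2$.

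Second, I would establish that the solution map $(Q,\lambda)\mapsto f_{Q,\lambda}$ is Hadamard-differentiable at $(P,\lambda_0)$ by an implicit-function-theorem argument. The partial derivative of $G$ in the $f$-direction $g$ is
\[
  \partial_f G(P,\lambda_0,\SVMplz)\,g\;=\;2\lambda_0 g+\E_P\big[\Lss(X,Y,\SVMplz(X))\,g(X)\,\Phi(X)\big].
\]
Convexity forces $\Lss\ge 0$, so the integral term is a positive semidefinite bounded operator and $\partial_f G$ equals $2\lambda_0$ times the identity plus a positive semidefinite operator; hence it is boundedly invertible with $\|(\partial_f G)^{-1}\|\le(2\lambda_0)^{-1}$. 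This coercivity is the structural fact that drives the whole scheme. The growth bounds (\ref{theorem-sqrt-n-consistency-1}) together with $r>d/2$ --- which guarantees that $H$ embeds continuously into a space of continuous functions and that $\Phi$ and the relevant derivatives are bounded --- supply the integrability and continuity needed to differentiate $G$ in the measure argument as well.

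Third, I would invoke the functional delta method. An empirical-process central limit theorem gives $\sqrt{n}(\Emp_n-P)\leadsto\mathds{G}_P$ in $\ell^\infty(\mathcal{F})$ for an appropriate class $\mathcal{F}$, where $\mathds{G}_P$ is a $P$-Brownian bridge. Composing with the Hadamard derivative of the solution map yields
\[
  \sqrt{n}\big(f_{\Emp_n,\lambda_0}-\SVMplz\big)\;\leadsto\;-\big(\partial_f G\big)^{-1}A(\mathds{G}_P)\;=:\;\mathds{H}_P,
\]
where $A$ denotes the continuous linear derivative of $G$ in its measure argument. Since $\mathds{H}_P$ is a bounded linear image of a Gaussian process, it is itself a tight, mean-zero Gaussian random element of $H$, which gives the asserted Gaussianity and tightness as well as $\E\la f,\mathds{H}_P\ra=0$ for all $f\in H$.

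Finally, I would pass from the fixed $\lambda_0$ to the data-driven $\Lambda_n$. By the joint Hadamard differentiability in $(Q,\lambda)$, the increment $\sqrt{n}(f_{\mathbf{D}_n,\Lambda_n}-f_{\mathbf{D}_n,\lambda_0})$ is, to first order, $(\partial_\lambda\SVMplz)\cdot\sqrt{n}(\Lambda_n-\lambda_0)$ plus an asymptotically negligible remainder; since $\partial_\lambda\SVMplz\in H$ is fixed and $\sqrt{n}(\Lambda_n-\lambda_0)\to 0$ in probability, this term vanishes in probability and Slutsky's lemma transfers the limit $\mathds{H}_P$ to the $\Lambda_n$-version. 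I expect the main obstacle to lie in the rigorous verification of the Donsker property of the relevant $H$-valued function class and in upgrading G\^ateaux to Hadamard differentiability of the solution map: this is precisely where the smoothness condition $r>d/2$ and the envelope bounds in (\ref{theorem-sqrt-n-consistency-1}) must be combined with care, since without the regularity furnished by $r>d/2$ the entropy of the class need not be controllable.
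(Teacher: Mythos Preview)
Your proposal is essentially correct and matches the approach of \cite{hable2012a}, whose ingredients are recalled in the appendix of the present paper (this theorem is cited, not reproved, here): one shows Hadamard differentiability of the solution map $S$ at $P$ with derivative $S_P'$ built from the invertible operator $K_P=\partial_f G$, establishes the Donsker property of a suitable class $\G$ so that $\sqrt{n}(\Emp_{\mathbf D_n}-P)\leadsto\mathds G_P$ in $\ell_\infty(\G)$, and applies the functional delta method to obtain $\mathds H_P=S_P'(\mathds G_P)$.

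The one place where the paper's route differs slightly from yours is the treatment of the random $\Lambda_n$. Instead of invoking joint Hadamard differentiability in $(Q,\lambda)$, the paper uses the algebraic identity
\[
  f_{\mu,\lambda}\;=\;f_{\frac{\lambda_0}{\lambda}\mu,\lambda_0}\;=\;S\Big(\tfrac{\lambda_0}{\lambda}\mu\Big),
\]
which absorbs the variation in $\lambda$ into a rescaling of the measure. Since $\sqrt{n}(\Lambda_n-\lambda_0)\to 0$ in probability implies $\sqrt{n}\big(\tfrac{\lambda_0}{\Lambda_n}\Emp_{\mathbf D_n}-P\big)\leadsto\mathds G_P$ as well, a single application of the delta method to $S$ handles both the empirical fluctuation and the random regularization parameter at once. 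Your two-variable approach is equally valid; the scaling trick just avoids having to set up the joint differentiability in $(Q,\lambda)$ separately.
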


Recall that
a map $\psi\,:\,\,H\rightarrow\R^m$ is Hadamard differentiable at
some $f_0\in H$ if and only if there exists a 
$\psi^\prime_{f_0}
 =(\psi^\prime_{f_0,1},\dots,\psi^\prime_{f_0,m})\in H^m
$
such that, for every 
sequence $t_\ell\searrow0$ in $\R$, and for every sequence
$h_{\ell}\rightarrow h$ in $H$,
$$\lim_{\ell\rightarrow\infty}
  \left|\frac{\psi(f_0+t_\ell h_{\ell})-\psi(f_0)}{t_\ell}
         -\big\la \psi^\prime_{f_0},h\big\ra
  \right| 
  \;=\;0\;.
$$
The element $\psi^\prime_{f_0}\in H^m$ is called derivative of
$\psi$ at $f_0$. For $h\in H$ and $\psi^\prime_{f_0}
 =(\psi^\prime_{f_0,1},\dots,\psi^\prime_{f_0,m})\in H^m
$,
the expression $\la \psi^\prime_{f_0},h\ra$ denotes the element
of $\R^m$ whose components are given by 
$\la \psi^\prime_{f_0,j},h\ra$, $j\in\{1,\dots,m\}$. 
 
By a routine application of the functional
delta method \cite[Theorem 3.9.4]{vandervaartwellner1996}, 
we get the following corollary: 

\begin{corollary}\label{cor-finite-dim-asymptotic-normality}
  Let Assumption \ref{basic-assumptions} be fulfilled,
  let $\lambda_{0}\in(0,\infty)$,
  and let $\psi\,:\,\,H\rightarrow\R^m$ be Hadamard-differentiable
  in $\SVMplz$ with derivative $\psi^\prime_{\SVMplz}$.
  Then, there is a covariance matrix $\Sigma_P\in\R^{m\times m}$
  such that,
  for every Borel-measurable sequence
  of random regularization parameters $\Lambda_{n}$ with
  $$\sqrt{n}\big(\Lambda_{n}-\lambda_0\big)
    \;\xrightarrow[\;n\rightarrow\infty\;]{}\;0
    \qquad\text{in probability\,,}
  $$
  it holds that
  $$\sqrt{n}
    \Big(\psi\big(f_{\mathbf{D}_{n},\Lambda_{n}}
             \big)-
         \psi\big(\SVMplz\big)
    \Big)
    \;\;\leadsto\;\;\mathcal{N}_m(0,\Sigma_P)\;.
  $$
  The limit $\mathcal{N}_m(0,\Sigma_P)$ is equal to
  the distribution of 
  $\big\la\psi^\prime_{\SVMplz},\mathds{H}_P\big\ra$
  where $\mathds{H}_P$ is given by (\ref{theorem-sqrt-n-consistency-2}).
\end{corollary}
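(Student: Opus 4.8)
The plan is to apply the functional delta method of \cite[Theorem 3.9.4]{vandervaartwellner1996} to the weak convergence supplied by Theorem \ref{theorem-sqrt-n-consistency}, and then to identify the resulting limit as a centered Gaussian vector in $\R^m$. First I would record that, under Assumption \ref{basic-assumptions} and for any admissible sequence $\Lambda_n$ with $\sqrt{n}(\Lambda_n-\lambda_0)\to 0$ in probability, Theorem \ref{theorem-sqrt-n-consistency} gives
$$\sqrt{n}\big(f_{\mathbf{D}_n,\Lambda_n}-\SVMplz\big)\;\leadsto\;\mathds{H}_P\qquad\text{in }H,$$
where $\mathds{H}_P$ is a tight, Borel-measurable, mean-zero Gaussian process taking values in $H$.

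Next I would check that the candidate derivative $h\mapsto\la\psi'_{\SVMplz},h\ra$ is a continuous linear map from $H$ into $\R^m$, as the delta method requires: each component is linear in $h$ by bilinearity of the inner product, and boundedness follows from Cauchy--Schwarz, since $\|\la\psi'_{\SVMplz},h\ra\|^2=\sum_{j=1}^m|\la\psi'_{\SVMplz,j},h\ra|^2\le\big(\sum_{j=1}^m\|\psi'_{\SVMplz,j}\|_H^2\big)\|h\|_H^2$. The displayed definition preceding the corollary is exactly Hadamard differentiability of $\psi$ at $\SVMplz$ tangentially to all of $H$, with this derivative: for every $t_\ell\searrow 0$ and every $h_\ell\to h$ the difference quotient converges to $\la\psi'_{\SVMplz},h\ra$. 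Since the limit $\mathds{H}_P$ lives in the full tangent space $H$, no restricted tangency is needed, and the delta method yields
$$\sqrt{n}\Big(\psi\big(f_{\mathbf{D}_n,\Lambda_n}\big)-\psi\big(\SVMplz\big)\Big)\;\leadsto\;\la\psi'_{\SVMplz},\mathds{H}_P\ra.$$

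It then remains to identify this limit. Because $h\mapsto\la\psi'_{\SVMplz},h\ra$ is continuous and linear and $\mathds{H}_P$ is a tight Gaussian random element of $H$, the image $\la\psi'_{\SVMplz},\mathds{H}_P\ra$ is Gaussian in $\R^m$: for any $a\in\R^m$ the scalar $a\transp\la\psi'_{\SVMplz},\mathds{H}_P\ra=\la\sum_{j=1}^m a_j\psi'_{\SVMplz,j},\mathds{H}_P\ra$ is a continuous linear functional of the Gaussian process, hence univariate Gaussian, so multivariate normality follows by the Cram\'er--Wold device. Its mean vanishes, since the zero-mean property $\E\la f,\mathds{H}_P\ra=0$ for all $f\in H$ applied to $f=\psi'_{\SVMplz,j}$ gives $\E\la\psi'_{\SVMplz,j},\mathds{H}_P\ra=0$ for each $j$. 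Defining $\Sigma_P\in\R^{m\times m}$ by $(\Sigma_P)_{jk}=\E\big[\la\psi'_{\SVMplz,j},\mathds{H}_P\ra\la\psi'_{\SVMplz,k},\mathds{H}_P\ra\big]$, which is symmetric and positive semidefinite, we obtain $\la\psi'_{\SVMplz},\mathds{H}_P\ra\sim\mathcal{N}_m(0,\Sigma_P)$, the asserted limit.

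The only point requiring genuine care is verifying that the displayed notion of Hadamard differentiability matches the hypotheses of \cite[Theorem 3.9.4]{vandervaartwellner1996}, namely the simultaneous two-sequence form in $t_\ell$ and $h_\ell$ and tangency to the full space in which $\mathds{H}_P$ is supported; once this is in place, the continuous-linear-image-of-a-Gaussian argument and the mean computation are routine, so the corollary follows as claimed.
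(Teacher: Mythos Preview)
Your proposal is correct and follows essentially the same route as the paper: apply the functional delta method \cite[Theorem 3.9.4]{vandervaartwellner1996} to the weak convergence in Theorem \ref{theorem-sqrt-n-consistency}, then observe that the continuous linear image $\la\psi'_{\SVMplz},\mathds{H}_P\ra$ of the zero-mean Gaussian process $\mathds{H}_P$ is a centered multivariate normal. The paper's proof is terser---it simply cites \cite[\S\,3.9.2]{vandervaartwellner1996} for the last step where you invoke Cram\'er--Wold and Cauchy--Schwarz explicitly---but the argument is the same.
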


Accordingly, in order to derive 
asymptotic confidence intervals, the main issue which
remains to be solved is to
calculate or rather 
consistently estimate
the covariance matrix $\Sigma_P$ . In principle, $\Sigma_P$
is completely known if $P$ is known -- as can be seen from the
proof of Theorem \ref{theorem-sqrt-n-consistency} given in 
\cite{hable2012a}.
This suggests
to estimate $\Sigma_P$ by a plug-in estimator where
$P$ is replaced by the empirical measure $\mathds{P}_{\mathbf{D}_n}$. 
However, this is a challenging 
task because $\mathds{H}_P$ is given by
$\mathds{H}_P=S_P^\prime(\mathds{G}_P)$ where
$S_P^\prime$ is a (complicated) continuous linear operator and
$\mathds{G}_P$ is a random variable which takes its values 
in a large function space. Hence, calculating 
$\Sigma_P=
 \text{Cov}\big(\big\la\psi^\prime_{\SVMplz},\mathds{H}_P\big\ra\big)
$
means to calculate an integral with respect to a measure on
that function space. Fortunately, this can be avoided
as follows from Prop.\ \ref{prop-covariance-matrix}.
There, $\Sigma_P$ is specified in a way which is more accessible
to a plug-in estimator.
The
consistency of the resulting plug-in estimator is given in
Theorem \ref{theorem-consistency-covariance-estimator}.
Note that $\Sigma_P$ can be degenerated to 0 in Corollary
\ref{cor-finite-dim-asymptotic-normality}. In order to 
derive asymptotic confidence sets,
degeneracy has to be excluded by adding additional assumptions
(Assumption \ref{assumptions-non-deg-of-the-limit-marginals}) below. 

\begin{proposition}\label{prop-covariance-matrix}
  Let Assumption \ref{basic-assumptions} be fulfilled,
  let $\lambda_{0}\in(0,\infty)$,
  and let $\psi\,:\,\,H\rightarrow\R^m$ be Hadamard-differentiable
  in $\SVMplz$ with derivative $\psi^\prime_{\SVMplz}$.
  Define 
  $$g_{P,\lambda_0}:\;\;
    \XY\;\rightarrow\;\R\,,\quad
    (x,y)\;\mapsto\; 
    -\Ls\big(x,y,\SVMplz(x)\big)
    \big\la\psi^\prime_{\SVMplz},K_{P}^{-1}\big(\Phi(x)\big)\big\ra
  $$
  where $K_{P}$ denotes the continuous linear operator
  defined in (\ref{derivative-svm-functional-K-component-app}).
  Then, the covariance matrix $\Sigma_P$ in
  Corollary \ref{cor-finite-dim-asymptotic-normality}
  is equal to
  \begin{eqnarray}\label{prop-covariance-matrix-2}
    \Sigma_P\;=\;
    \textup{Cov}
      \big(g_{P,\lambda_0}(X_1,Y_1)
      \big)\;.
  \end{eqnarray}
\end{proposition}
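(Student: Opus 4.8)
The plan is to read off from Corollary \ref{cor-finite-dim-asymptotic-normality} that $\mathcal{N}_m(0,\Sigma_P)$ is the law of $\la\psi^\prime_{\SVMplz},\mathds{H}_P\ra$, so that $\Sigma_P=\textup{Cov}\big(\la\psi^\prime_{\SVMplz},\mathds{H}_P\ra\big)$, and then to make the abstract Gaussian limit $\mathds{H}_P$ explicit enough that the continuous linear functional $\la\psi^\prime_{\SVMplz},\cdot\ra$ collapses it to a vector-valued integral against $g_{P,\lambda_0}$. As a preliminary I would record that $g_{P,\lambda_0}\in L_2(P)$, so that the right-hand side of (\ref{prop-covariance-matrix-2}) exists: on the compact set $\X$ the feature map obeys $\|\Phi(x)\|_H=\sqrt{k(x,x)}\le c_k<\infty$, hence $\SVMplz$ is bounded, say with range in $[-a,a]$; since $K_P$ equals $2\lambda_0\,\mathrm{Id}$ plus a nonnegative operator (convexity of $L$ gives $\Lss\ge 0$), $K_P^{-1}$ is bounded with $\|K_P^{-1}\|\le 1/(2\lambda_0)$, and therefore $|g_{P,\lambda_0}(x,y)|\le \|K_P^{-1}\|\,\|\psi^\prime_{\SVMplz}\|\,c_k\,b_a^\prime(x,y)$ with $b_a^\prime\in L_2(P)$ by (\ref{theorem-sqrt-n-consistency-1}).

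First I would recall from the proof of Theorem \ref{theorem-sqrt-n-consistency} in \cite{hable2012a} the representation $\mathds{H}_P=S_P^\prime(\mathds{G}_P)$, where $\mathds{G}_P$ is the tight Gaussian limit of the empirical process $\sqrt{n}(\Emp_{\mathbf{D}_n}-P)$ and the Hadamard derivative $S_P^\prime$ of the functional $P\mapsto\SVMplz$ acts as $S_P^\prime(\mu)=-K_P^{-1}\big(\int\Ls(x,y,\SVMplz(x))\,\Phi(x)\,\mu(d(x,y))\big)$, with $K_P$ the operator from (\ref{derivative-svm-functional-K-component-app}). Because $K_P^{-1}$ is a bounded linear operator and $\la\psi^\prime_{\SVMplz},\cdot\ra$ is continuous and linear, both commute with the (Bochner) integral, which gives
\[
  \la\psi^\prime_{\SVMplz},\mathds{H}_P\ra
  \;=\;-\int \Ls\big(x,y,\SVMplz(x)\big)\,
      \la\psi^\prime_{\SVMplz},K_P^{-1}(\Phi(x))\ra\,d\mathds{G}_P(x,y)
  \;=\;\int g_{P,\lambda_0}\,d\mathds{G}_P,
\]
the integrand being exactly $g_{P,\lambda_0}$. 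Equivalently, and more transparently for the covariance, I would use the asymptotically linear (influence-function) representation underlying the delta method, namely $\sqrt{n}(f_{\mathbf{D}_{n},\Lambda_{n}}-\SVMplz)=-K_P^{-1}\big(\tfrac1{\sqrt n}\sum_{i}[\Ls(X_i,Y_i,\SVMplz(X_i))\Phi(X_i)-\E_P[\Ls\,\Phi]]\big)+o_P(1)$, in which the centering $\E_P[\Ls\,\Phi]=-2\lambda_0\SVMplz$ stems from the first-order optimality condition for $\SVMplz$; applying $\la\psi^\prime_{\SVMplz},\cdot\ra$ and interchanging it with $K_P^{-1}$ and the sum turns each summand into $g_{P,\lambda_0}(X_i,Y_i)-\E_P[g_{P,\lambda_0}]$, so that $\sqrt{n}\la\psi^\prime_{\SVMplz},f_{\mathbf{D}_{n},\Lambda_{n}}-\SVMplz\ra=\tfrac1{\sqrt n}\sum_{i}\big(g_{P,\lambda_0}(X_i,Y_i)-\E_P[g_{P,\lambda_0}]\big)+o_P(1)$.

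From here the conclusion is immediate by either route. In the influence-function form, the multivariate central limit theorem applied to the i.i.d.\ mean-zero summands $g_{P,\lambda_0}(X_i,Y_i)-\E_P[g_{P,\lambda_0}]$ (well-defined by the preliminary $L_2$ bound) yields the limit $\mathcal{N}_m\big(0,\textup{Cov}(g_{P,\lambda_0}(X_1,Y_1))\big)$; in the delta-method form, $\int g_{P,\lambda_0}\,d\mathds{G}_P$ is centered Gaussian with covariance given by the Brownian-bridge covariance of $\mathds{G}_P$, again equal to $\textup{Cov}(g_{P,\lambda_0}(X_1,Y_1))$. Matching this against $\Sigma_P=\textup{Cov}\big(\la\psi^\prime_{\SVMplz},\mathds{H}_P\ra\big)$ establishes (\ref{prop-covariance-matrix-2}). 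The hard part is not the final central limit theorem but the bookkeeping in the middle step: one must confirm that the abstract derivative $S_P^\prime$ of \cite{hable2012a} genuinely has the stated integral form and that the componentwise inner product with $\psi^\prime_{\SVMplz}$ may legitimately be pulled inside both $K_P^{-1}$ and the integral defining $\mathds{G}_P$. Once the vector-valued integrand is reduced to the scalar function $g_{P,\lambda_0}$, identifying the covariance is routine.
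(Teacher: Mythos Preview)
Your influence-function route is a valid and genuinely different argument from the paper's. The paper never writes $\int g_{P,\lambda_0}\,d\mathds{G}_P$, and for good reason: $\mathds{G}_P$ is an element of $B_0\subset\ell_\infty(\G)$, not a signed measure, so the integral formula (\ref{derivative-svm-functional-app}) for $S_P'$ does not apply to it directly. The paper's device is to use self-adjointness of $K_P^{-1}$ to rewrite $\big\la\psi'_{\SVMplz,j},K_P^{-1}(\Phi(x))\big\ra=f_j(x)$ with $f_j:=K_P^{-1}(\psi'_{\SVMplz,j})\in H$; the point is that the normalised function $\Ls_{\SVMplz}\|f_j\|_H^{-1}f_j$ then belongs to the index class $\G_2\subset\G$, so the identity $\big\la\psi'_{\SVMplz,j},S_P'(G)\big\ra=-\|f_j\|_H\,G\big(\Ls_{\SVMplz}\|f_j\|_H^{-1}f_j\big)$ is first checked for $G\in\mathrm{lin}(B_S)$ and then extended by continuity to all of $B_0$, where $\mathds{G}_P$ lives. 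Your first route omits exactly this self-adjointness/class-membership/density argument, and without it the display $\la\psi'_{\SVMplz},\mathds{H}_P\ra=\int g_{P,\lambda_0}\,d\mathds{G}_P$ is only heuristic.

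Your second route sidesteps the problem cleanly: $\sqrt{n}(\mathds{P}_{\mathbf{D}_n}-P)$ \emph{is} a signed measure for every $n$, so (\ref{derivative-svm-functional-app}) applies to it, and since $h\mapsto\la\psi'_{\SVMplz},K_P^{-1}(h)\ra$ is a bounded linear map it commutes with the finite sum by linearity alone---no self-adjointness needed. Together with the $o_P(1)$ remainder from the functional delta method \cite[Theorem~3.9.4]{vandervaartwellner1996} (already invoked in the proof of Theorem~\ref{theorem-sqrt-n-consistency}, which also absorbs the random $\Lambda_n$), this yields the asymptotically linear expansion and the multivariate CLT finishes. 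What your approach buys is that it never needs to evaluate $S_P'$ on non-measure elements of $B_0$; what the paper's approach buys is a direct identification of $\la\psi'_{\SVMplz},\mathds{H}_P\ra$ with an evaluation of the limiting Brownian bridge at an explicit element of the Donsker class, which is conceptually tidier and reusable elsewhere (e.g.\ in Lemma~\ref{lemma-non-degeneracy-of-the-limit-marginals}).
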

It follows from Prop.\ \ref{prop-covariance-matrix} that
$\Sigma_P$ could be estimated by the standard covariance estimator
for the $\R^m$-valued i.i.d.\ random variables
$$g_{P,\lambda_0}(X_1,Y_1),\dots,g_{P,\lambda_0}(X_n,Y_n)
$$
if $P$ was known. However, as $P$ is unknown, we have to replace
$P$ by the empirical measure and 
$\psi^\prime_{\SVMplz}$ by an 
estimator $\psi^\prime_{\mathbf{D}_{n},\Lambda_n}$ of 
$\psi^\prime_{\SVMplz}$.
Then, we may estimate $\Sigma_P$
by the non-i.i.d.\ random variables
$$g_{\mathbf{D}_{n},\Lambda_{n}}(X_1,Y_1),\dots,
  g_{\mathbf{D}_{n},\Lambda_{n}}(X_n,Y_n)
$$
where 
\begin{eqnarray}\label{def-of-random-g-for-cov-estimator}
  g_{\mathbf{D}_{n},\Lambda_{n}}(x,y)\;=\;
  -\Ls\big(x,y,f_{\mathbf{D}_{n},\Lambda_{n}}(x)\big)
   \big\la\psi^\prime_{\mathbf{D}_{n},\Lambda_{n}},
           K_{\mathbf{D}_{n},\Lambda_n}^{-1}\big(\Phi(x)\big)
   \big\ra
\end{eqnarray}
and $K_{\mathbf{D}_{n}(\omega),\Lambda_n(\omega)}:\;H\,\rightarrow\,H$
is the continuous linear operator given by
\begin{eqnarray}\label{def-of-random-K}
  K_{\mathbf{D}_{n},\Lambda_n}(f)
  \,=\,
  2\Lambda_n f+
  \frac{1}{n}\!
  \sum_{i=1}^{n}\!
    \Lss\big(X_i,Y_i,f_{\mathbf{D}_{n},\Lambda_{n}}(X_i)
        \big)f(X_i)\Phi(X_i)
\end{eqnarray}
for every $f\in H$.   
The following theorem states that the resulting plug-in 
covariance estimator is strongly consistent. It is also shown
that the estimator is measurable. This is not obvious 
as the proof of Theorem \ref{theorem-sqrt-n-consistency}
is based on the theory of empirical processes
and the map 
$D_n\mapsto\mathds{P}_{D_n}
$
(which maps a set of data to its empirical measure
as an element of a certain function space)
is typically not Borel-measurable; see e.g.\ 
\cite[\S\,1.1]{vandervaartwellner1996}.

\begin{assumption}\label{assumption-estimator-of-hadamard-derivative}
  Let $\psi\,:\,\,H\rightarrow\R^m$ be Hadamard-differentiable
  at $\SVMplz$ with derivative $\psi^\prime_{\SVMplz}$
  and let $\psi^\prime_{\mathbf{D}_{n},\Lambda_{n}}$ be an
  estimator of $\psi^\prime_{\SVMplz}$ which is
  strongly consistent, i.e.,
  \begin{eqnarray}\label{assumption-estimator-of-hadamard-derivative-1}
    \big\|\psi^\prime_{\mathbf{D}_{n},\Lambda_{n}}-\psi^\prime_{\SVMplz}
    \big\|_{H^m}
    \;\xrightarrow[\;n\rightarrow\infty\;]{}\;0
    \qquad\text{almost surely}.
  \end{eqnarray}
\end{assumption}

\begin{theorem}\label{theorem-consistency-covariance-estimator}
  Let Assumption \ref{basic-assumptions} and Assumption
  \ref{assumption-estimator-of-hadamard-derivative}
  be fulfilled.
  Fix $\lambda_{0}\in(0,\infty)$
  and let $\Sigma_P\in\R^{m\times m}$ be the covariance matrix in
  Corollary \ref{cor-finite-dim-asymptotic-normality}. Then,
  for every Borel-measurable sequence
  of random regularization parameters $\Lambda_{n}$ with
  $$\sqrt{n}\big(\Lambda_{n}-\lambda_0\big)
    \;\xrightarrow[\;n\rightarrow\infty\;]{}\;0
    \qquad\text{almost surely\,,}
  $$
  the estimator
  $$\hat{\Sigma}_n(\mathbf{D}_n,\Lambda_n)\;=\;
    \frac{1}{n}\sum_{i=1}^n
      \Big(\tilde{g}_{\mathbf{D}_{n},\Lambda_{n}}(X_i,Y_i)
      \Big)\cdot
      \Big(\tilde{g}_{\mathbf{D}_{n},\Lambda_{n}}(X_i,Y_i)
      \Big)^{\mathsf{\scriptscriptstyle T}}
  $$
  with
  $$\tilde{g}_{\mathbf{D}_{n},\Lambda_{n}}\!(X_i,Y_i)
    \,:=\,
    g_{\mathbf{D}_{n},\Lambda_{n}}\!(X_i,Y_i)
            -\frac{1}{n}\sum_{j=1}^n
               g_{\mathbf{D}_{n},\Lambda_{n}}\!(X_j,Y_j)
    \quad\forall\,i\in\{1,\dots,n\}
  $$
  is measurable and strongly consistent, i.e.,
  $$\hat{\Sigma}_n(\mathbf{D}_n,\Lambda_n)
    \;\xrightarrow[\;n\rightarrow\infty\;]{}\;
    \Sigma_P
    \qquad\text{almost surely}.
  $$
\end{theorem}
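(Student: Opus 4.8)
The plan is to prove consistency by comparing $\hat\Sigma_n(\mathbf{D}_n,\Lambda_n)$ with an \emph{oracle} version in which the data-dependent ingredients of $g_{\mathbf{D}_n,\Lambda_n}$ are replaced by their population counterparts $\SVMplz$, $\psi^\prime_{\SVMplz}$ and $K_P$: the oracle covariance converges to $\Sigma_P$ by the strong law of large numbers, and the plug-in error is shown to be negligible almost surely. Three elementary consequences of Assumption \ref{basic-assumptions} are used repeatedly. First, by compactness of $\X$ and continuity of $k$, $\kappa := \sup_{x\in\X}\|\Phi(x)\|_H = \sup_x\sqrt{k(x,x)} < \infty$, so $|f(x)| \le \kappa\|f\|_H$ for all $f\in H$. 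Second, fixing any $a > \kappa\|\SVMplz\|_H$, the almost-sure convergence $\|f_{\mathbf{D}_n,\Lambda_n}-\SVMplz\|_H \to 0$ (which follows from the strong consistency of the estimator, \cite[Cor.\ 3.7]{HableChristmann2011}, together with $\Lambda_n \to \lambda_0$ a.s.\ and the stability of the estimate in the regularization parameter) guarantees that, for almost every $\omega$, the arguments $f_{\mathbf{D}_n,\Lambda_n}(x),\SVMplz(x)$ lie in $[-a,a]$ for all $x\in\X$ and all large $n$, so the deterministic bounds $b_a^\prime \in L_2(P)$ and $b_a^{\prime\prime} < \infty$ of (\ref{theorem-sqrt-n-consistency-1}) apply. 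Third, convexity of $L$ gives $\Lss \ge 0$, whence $K_{\mathbf{D}_n,\Lambda_n} \ge 2\Lambda_n\,\mathrm{Id}$ and $K_P \ge 2\lambda_0\,\mathrm{Id}$ are boundedly invertible with $\|K_P^{-1}\|_{\mathrm{op}} \le (2\lambda_0)^{-1}$ and, for all large $n$, $\|K_{\mathbf{D}_n,\Lambda_n}^{-1}\|_{\mathrm{op}} \le \lambda_0^{-1}$ almost surely.

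The heart of the argument is the almost-sure operator-norm convergence $K_{\mathbf{D}_n,\Lambda_n}^{-1} \to K_P^{-1}$, which I would deduce from $\|K_{\mathbf{D}_n,\Lambda_n}-K_P\|_{\mathrm{op}} \to 0$ via the resolvent identity $K_{\mathbf{D}_n,\Lambda_n}^{-1}-K_P^{-1} = K_{\mathbf{D}_n,\Lambda_n}^{-1}(K_P-K_{\mathbf{D}_n,\Lambda_n})K_P^{-1}$ and the uniform bound on $\|K_{\mathbf{D}_n,\Lambda_n}^{-1}\|_{\mathrm{op}}$. The term $2(\Lambda_n-\lambda_0)\,\mathrm{Id}$ vanishes, and I split the remaining (integral) part by inserting the oracle operator that uses $\SVMplz$ in place of $f_{\mathbf{D}_n,\Lambda_n}$ inside $\Lss$. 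Each oracle summand is the rank-one operator $f \mapsto \Lss(X_i,Y_i,\SVMplz(X_i))\,\la f,\Phi(X_i)\ra\,\Phi(X_i)$, bounded in Hilbert--Schmidt norm by $b_a^{\prime\prime}\kappa^2$, so the oracle-minus-population part converges by the strong law of large numbers in the separable Hilbert space of Hilbert--Schmidt operators. The plug-in part has operator norm at most $\kappa^2\,\tfrac1n\sum_{i=1}^n|\Lss(X_i,Y_i,f_{\mathbf{D}_n,\Lambda_n}(X_i))-\Lss(X_i,Y_i,\SVMplz(X_i))|$, and this is the main obstacle: because $\Y$ may be unbounded, $\Lss$ is not uniformly continuous over the relevant range, so a naive modulus-of-continuity bound fails. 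I would handle it by truncation: fixing $M$, on $\{|Y_i|\le M\}$ the arguments range over the compact set $\X\times[-M,M]\times[-a,a]$ on which $\Lss$ \emph{is} uniformly continuous, so this contribution is controlled by the modulus of continuity at $\sup_x|f_{\mathbf{D}_n,\Lambda_n}(x)-\SVMplz(x)| \le \kappa\|f_{\mathbf{D}_n,\Lambda_n}-\SVMplz\|_H \to 0$; on $\{|Y_i|>M\}$ I would use the crude bound $2b_a^{\prime\prime}$ together with $\tfrac1n\#\{i:|Y_i|>M\}\to P(|Y|>M)$ a.s. Letting $n\to\infty$ and then $M\to\infty$ drives this term to zero.

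Once $K_{\mathbf{D}_n,\Lambda_n}^{-1}\to K_P^{-1}$ is established, I would show $\tfrac1n\sum_{i=1}^n|g_{\mathbf{D}_n,\Lambda_n}(X_i,Y_i)-g_{P,\lambda_0}(X_i,Y_i)|^2 \to 0$ a.s. Adding and subtracting, $|g_{\mathbf{D}_n,\Lambda_n}-g_{P,\lambda_0}|$ is bounded by the product of $|\Ls(X_i,Y_i,f_{\mathbf{D}_n,\Lambda_n}(X_i))-\Ls(X_i,Y_i,\SVMplz(X_i))| \le b_a^{\prime\prime}\kappa\|f_{\mathbf{D}_n,\Lambda_n}-\SVMplz\|_H$ (uniform in $(x,y)$, since $b_a^{\prime\prime}$ is a constant and by the mean value theorem applied to $\Ls$) with the bounded factor $|\la\psi^\prime_{\mathbf{D}_n,\Lambda_n},K_{\mathbf{D}_n,\Lambda_n}^{-1}\Phi(X_i)\ra|$, plus the factor $|\Ls(X_i,Y_i,\SVMplz(X_i))| \le b_a^\prime(X_i,Y_i)$ times the inner-product difference, the latter being at most $\kappa\big(\lambda_0^{-1}\|\psi^\prime_{\mathbf{D}_n,\Lambda_n}-\psi^\prime_{\SVMplz}\|_{H^m} + \|\psi^\prime_{\SVMplz}\|_{H^m}\|K_{\mathbf{D}_n,\Lambda_n}^{-1}-K_P^{-1}\|_{\mathrm{op}}\big)$, a deterministic null sequence by Assumption \ref{assumption-estimator-of-hadamard-derivative} and the previous paragraph. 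Squaring, averaging and using $\tfrac1n\sum_i b_a^\prime(X_i,Y_i)^2 \to \E[(b_a^\prime)^2]<\infty$ a.s.\ yields the claim. Since the $g_{P,\lambda_0}(X_i,Y_i)$ are i.i.d.\ with $\E|g_{P,\lambda_0}(X,Y)|^2 < \infty$, the oracle empirical covariance converges a.s.\ to $\Sigma_P = \mathrm{Cov}(g_{P,\lambda_0}(X_1,Y_1))$ by Proposition \ref{prop-covariance-matrix} and the strong law, and a Cauchy--Schwarz estimate (combined with the boundedness of $\tfrac1n\sum_i|g_{P,\lambda_0}(X_i,Y_i)|^2$) transfers this convergence to $\hat\Sigma_n(\mathbf{D}_n,\Lambda_n)$; the empirical centering constants cause no difficulty.

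For measurability, the point is that, in contrast to the non-Borel-measurable empirical-measure map underlying the proof of Theorem \ref{theorem-sqrt-n-consistency}, the estimator $\hat\Sigma_n$ is an explicit finite composition of measurable maps. The solution map $(D_n,\lambda)\mapsto f_{D_n,\lambda}$ of (\ref{setup-regularized-empirical-risk}) is measurable, the evaluations $f\mapsto f(x_i)=\la f,\Phi(x_i)\ra$ and the functions $\Ls,\Lss$ are continuous, and $(D_n,\lambda)\mapsto K_{D_n,\lambda}$ depends measurably on the data; since operator inversion is continuous on the open set of boundedly invertible operators, $(D_n,\lambda)\mapsto K_{D_n,\lambda}^{-1}\Phi(x_i)$ is measurable as well. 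Hence each $\tilde g_{\mathbf{D}_n,\Lambda_n}(X_i,Y_i)$, its outer products, and their average $\hat\Sigma_n(\mathbf{D}_n,\Lambda_n)$ are measurable.
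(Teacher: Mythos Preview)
Your proposal is correct and follows essentially the same architecture as the paper's proof: reduce to showing $\tfrac1n\sum_i g_{\mathbf D_n,\Lambda_n}(Z_i)\to\E g_{P,\lambda_0}$ and $\tfrac1n\sum_i g_{\mathbf D_n,\Lambda_n}g_{\mathbf D_n,\Lambda_n}^{\mathsf T}(Z_i)\to\E[g_{P,\lambda_0}g_{P,\lambda_0}^{\mathsf T}]$, decompose $g_{\mathbf D_n,\Lambda_n}-g_{P,\lambda_0}$ exactly as in the paper's display (\ref{theorem-consistency-covariance-estimator-p13}), bound the $L'$-difference by $b_a''\,\|f_{\mathbf D_n,\Lambda_n}-f_{P,\lambda_0}\|_\infty$ via the mean value theorem, bound the inner-product difference by $\|\psi'_{\mathbf D_n,\Lambda_n}-\psi'_{f_{P,\lambda_0}}\|_{H^m}$ and $\|K_{\mathbf D_n,\Lambda_n}^{-1}-K_P^{-1}\|$, and finish with the SLLN for $b_a'$ and $(b_a')^2$.

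The one genuine difference is how you obtain $\|K_{\mathbf D_n,\Lambda_n}^{-1}-K_P^{-1}\|\to0$ and $f_{\mathbf D_n,\Lambda_n}\to f_{P,\lambda_0}$. The paper does not argue these directly: it invokes the Glivenko--Cantelli property of the function class $\mathcal G$ from \cite{hable2012a} to get $\mathbb P_{\mathbf D_n}\to P$ in $\ell_\infty(\mathcal G)$, and then quotes the continuity of $\mu\mapsto K_\mu^{-1}$ and $\mu\mapsto f_{\mu,\lambda_0}$ on $B_S$ established in \cite{hable2012a}. You instead give a self-contained argument: resolvent identity plus the uniform bound $\|K_{\mathbf D_n,\Lambda_n}^{-1}\|\le\lambda_0^{-1}$, then a Hilbert--Schmidt SLLN for the oracle part of $K_{\mathbf D_n,\Lambda_n}-K_P$ and a truncation-in-$Y$ argument for the $L''$-plug-in part (correctly recognizing that $L''$ is not uniformly continuous when $\Y$ is unbounded). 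Your route avoids the empirical-process machinery of \cite{hable2012a} at the cost of the extra truncation step; the paper's route is shorter because that machinery is already in place. For measurability, the paper's Lemma \ref{lemma-measurability-of-covariance-estimator} spells out the Carath\'eodory-function argument you sketch, showing separately that $(D,\lambda,g)\mapsto K_{D,\lambda,g}^{-1}(f)$ is continuous for each fixed $f$ and continuous in $f$ for each fixed $(D,\lambda,g)$.
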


The following remark specifies a natural candidate for 
an estimator of $\psi^\prime_{\SVMplz}$; the proof is given in
the appendix. 
\begin{remark}\label{remark-estimator-of-hadamard-derivative}
  If $\psi$ is Hadamard-differentiable
  at every $f\in H$ with derivative $\psi^\prime_f$ and if
  $f\mapsto\psi^\prime_f$ is continuous, then Assumption
  \ref{assumption-estimator-of-hadamard-derivative} is fulfilled
  for the estimator 
  $$\psi^\prime_{\mathbf{D}_{n},\Lambda_{n}}\;:=\;
    \psi^\prime_{f_{\mathbf{D}_{n},\Lambda_{n}}}
  $$
  -- provided that $\Lambda_{n}$ converges to $\lambda_0$ almost surely.
\end{remark}

The calculation of the estimator $\hat{\Sigma}_n(\mathbf{D}_n,\Lambda_n)$
for a given data set is an issue of its own because it is
burdened by the fact that we have to 
solve $n$ equations 
$$K_{\mathbf{D}_{n},\Lambda_{n}}(f_i)\;=\;\Phi(X_i)\,, 
  \qquad i\in\{1,\dots,n\},
$$
in the typically infinite dimensional function space $H$.
As we will see in Subsection 
\ref{subsec-computation} below, this problem
can satisfactorily be solved 
(Prop.\ \ref{prop-calculation-of-covariance-estimator}).
In fact, these equations can be solved jointly, essentially 
by calculating the Moore-Penrose pseudoinverse of an
$n\times n$-matrix once only.

\medskip

In order to derive asymptotic confidence intervals based on
Corollary \ref{cor-finite-dim-asymptotic-normality}, it is 
desirable that the covariance matrix 
$\Sigma_P$ has full rank. 
Lemma \ref{lemma-non-degeneracy-of-the-limit-marginals}
in the Appendix
yields that 
this can be achieved by
the following two weak conditions:

\begin{assumption}\label{assumptions-non-deg-of-the-limit-marginals}
  Assume that, for $P_{\X}(dx)$\,-\,a.e.\ $x\in\X$, 
  \begin{eqnarray}\label{non-degeneracy-of-the-limit-marginals-cond-1}
    \exists\,y_1,y_2\in\textup{supp}\big(P(dy|x)\big)
    \,\,\text{ s.t.\ }\,\,
    \Ls\big(x,y_1,\SVMplz(x)\big)\not=
    \Ls\big(x,y_2,\SVMplz(x)\big).\,
  \end{eqnarray}
  For every $j\in\{1,\dots,m\}$, let
  $\psi^\prime_{\SVMplz,j}\in H$ denote the $j$-th component
  of $\psi^\prime_{\SVMplz}$
  and assume that
  \begin{eqnarray}\label{non-degeneracy-of-the-limit-marginals-cond-2}
    \psi^\prime_{\SVMplz,1},\dots,\psi^\prime_{\SVMplz,m}\,\,
    \text{ are linearly independent on }\,
    \textup{supp}(P_{\X})\;.
  \end{eqnarray}
\end{assumption}
Due to continuity,
Assumption (\ref{non-degeneracy-of-the-limit-marginals-cond-2})
can be reformulated to the following condition: 
\begin{eqnarray}\label{non-degeneracy-of-the-limit-marginals-cond-3}
    a\Transp \psi^\prime_{\SVMplz}\,=\,
    0\;\;\;P_{\X}\text{-a.s.}
    \,\text{ for some }a\in\R^m
    \qquad\Rightarrow\qquad
    a=0\;.
\end{eqnarray}
As we will see in the examples in the applications section,
Assumption \ref{assumptions-non-deg-of-the-limit-marginals} indeed
provides weak and simple conditions. 
E.g., in case of the least-squares loss
or the logistic loss, Assumption 
(\ref{non-degeneracy-of-the-limit-marginals-cond-1})
is equivalent to assuming that $P(dy|x)$ is not a Dirac measure.

\medskip

From the above results and assumptions, it follows that
$$\sqrt{n}\cdot\hat{\Sigma}_n(\mathbf{D}_n,\Lambda_n)^{-\frac{1}{2}}
    \Big(\psi\big(f_{\mathbf{D}_{n},\Lambda_{n}}
             \big)-
         \psi\big(\SVMplz\big)
    \Big)
    \;\;\leadsto\;\;\mathcal{N}_m\big(0,\textup{Id}_{m\times m}\big)
$$
so that we get elliptical confidence sets which are
asymptotically correct:
\begin{theorem}\label{theorem-asymptotic-confidence-sets}
  Let $\lambda_{0}\in(0,\infty)$ and
  let Assumption \ref{basic-assumptions}, Assumption
  \ref{assumption-estimator-of-hadamard-derivative}, 
  and Assumption \ref{assumptions-non-deg-of-the-limit-marginals}
  be fulfilled. 
  Let $\Lambda_{n}$ be a sequence
  of Borel-measurable random regularization parameters  with
  $$\sqrt{n}\big(\Lambda_{n}-\lambda_0\big)
    \;\xrightarrow[\;n\rightarrow\infty\;]{}\;0
    \qquad\text{almost surely}\;.
  $$
  Fix any $\alpha\in(0,1)$, let $\chi_{m,\alpha}^2$
  be the $(1-\alpha)$th quantile of the
  chi-squared distribution with $m$ degrees of freedom and
  $$C_{n,\alpha}(\mathbf{D}_n,\Lambda_n)
    :=
    \Big\{w\in\R^m
    \Big|\,\,\big\|
             \hat{\Sigma}_n(\mathbf{D}_n,\Lambda_n)^{-\frac{1}{2}}
             \big(w-\psi(f_{\mathbf{D}_{n},\Lambda_{n}})
             \big)
             \big\|^2_{\R^m}
             \leq \, \tfrac{\chi_{m,\alpha}^2}{n}
    \Big\}.
  $$
  Then,
  $$Q\Big(\psi\big(\SVMplz\big)
           \,\in\,
          C_{n,\alpha}(\mathbf{D}_n,\Lambda_n)
     \Big)
    \;\xrightarrow[\;n\rightarrow\infty\;]{}\;1-\alpha\;.
  $$
\end{theorem}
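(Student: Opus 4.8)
The plan is to reduce the coverage statement to a single convergence in distribution, obtained via Slutsky's theorem from the pieces already established. First, observe that by the very definition of $C_{n,\alpha}(\mathbf{D}_n,\Lambda_n)$ the event $\{\psi(\SVMplz)\in C_{n,\alpha}(\mathbf{D}_n,\Lambda_n)\}$ coincides with the event
$$\big\|\hat{\Sigma}_n(\mathbf{D}_n,\Lambda_n)^{-\frac12}\sqrt{n}\big(\psi(f_{\mathbf{D}_{n},\Lambda_{n}})-\psi(\SVMplz)\big)\big\|_{\R^m}^2\;\leq\;\chi_{m,\alpha}^2,$$
since multiplying the defining inequality by $n$ absorbs the factor $1/n$ into the norm and $\|{-v}\|=\|v\|$. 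Writing $Z_n:=\sqrt{n}\big(\psi(f_{\mathbf{D}_{n},\Lambda_{n}})-\psi(\SVMplz)\big)$, it therefore suffices to prove $\|\hat{\Sigma}_n^{-1/2}Z_n\|_{\R^m}^2\leadsto\chi_m^2$ and then to read off the limiting probability from the definition of the quantile $\chi_{m,\alpha}^2$.

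Next I would assemble the three ingredients that are already available. By Corollary \ref{cor-finite-dim-asymptotic-normality}, $Z_n\leadsto\mathcal{N}_m(0,\Sigma_P)$. By Theorem \ref{theorem-consistency-covariance-estimator}, $\hat{\Sigma}_n(\mathbf{D}_n,\Lambda_n)\to\Sigma_P$ almost surely, hence in probability. Finally, Assumption \ref{assumptions-non-deg-of-the-limit-marginals} together with Lemma \ref{lemma-non-degeneracy-of-the-limit-marginals} guarantees that $\Sigma_P$ is positive definite, so that $\Sigma_P^{-1/2}$ exists. Since the map $A\mapsto A^{-1/2}$ is continuous on the open cone of symmetric positive definite matrices, and $\hat{\Sigma}_n$ lands in this cone with probability tending to one (it converges almost surely to the positive definite $\Sigma_P$), the continuous mapping theorem yields $\hat{\Sigma}_n^{-1/2}\to\Sigma_P^{-1/2}$ in probability.

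With these in hand, Slutsky's theorem (in its matrix-times-vector form) gives
$$\hat{\Sigma}_n^{-1/2}Z_n\;\leadsto\;\Sigma_P^{-1/2}\,\mathcal{N}_m(0,\Sigma_P)\;=\;\mathcal{N}_m\big(0,\Sigma_P^{-1/2}\Sigma_P\Sigma_P^{-1/2}\big)\;=\;\mathcal{N}_m(0,\textup{Id}_{m\times m}),$$
and applying the continuous mapping theorem to $v\mapsto\|v\|_{\R^m}^2$ then gives $\|\hat{\Sigma}_n^{-1/2}Z_n\|_{\R^m}^2\leadsto\chi_m^2$. Because the distribution function of $\chi_m^2$ is continuous and $\chi_{m,\alpha}^2$ is by definition its $(1-\alpha)$th quantile, convergence in distribution evaluated at this continuity point yields
$$Q\big(\psi(\SVMplz)\in C_{n,\alpha}(\mathbf{D}_n,\Lambda_n)\big)=Q\big(\|\hat{\Sigma}_n^{-1/2}Z_n\|_{\R^m}^2\leq\chi_{m,\alpha}^2\big)\;\longrightarrow\;1-\alpha,$$
which is the claim.

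The main obstacle I anticipate is not the distributional limit but the careful handling of the matrix square root and its inverse. One must verify that $\Sigma_P$ is genuinely invertible --- this is precisely where Assumption \ref{assumptions-non-deg-of-the-limit-marginals} enters through the non-degeneracy result of Lemma \ref{lemma-non-degeneracy-of-the-limit-marginals} --- and that $\hat{\Sigma}_n^{-1/2}$ is well-defined on an event of probability tending to one, even though the sample covariance $\hat{\Sigma}_n$ may be singular for small $n$. Setting $\hat{\Sigma}_n^{-1/2}$ to an arbitrary measurable value on the (eventually negligible) event where $\hat{\Sigma}_n$ fails to be positive definite does not affect the limit, and measurability of the resulting estimator is already secured by Theorem \ref{theorem-consistency-covariance-estimator}.
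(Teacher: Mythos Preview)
Your proposal is correct and follows essentially the same route as the paper's proof: both combine Corollary~\ref{cor-finite-dim-asymptotic-normality}, Theorem~\ref{theorem-consistency-covariance-estimator}, and continuity of the matrix square root via Slutsky's theorem to obtain $\sqrt{n}\,\hat{\Sigma}_n^{-1/2}\big(\psi(f_{\mathbf{D}_n,\Lambda_n})-\psi(\SVMplz)\big)\leadsto\mathcal{N}_m(0,\textup{Id}_{m\times m})$, and then apply the continuous mapping theorem to $v\mapsto\|v\|^2$ together with continuity of the $\chi_m^2$ distribution function at its $(1-\alpha)$th quantile. Your version is in fact slightly more explicit than the paper's in addressing the potential singularity of $\hat{\Sigma}_n$ for small $n$ and in invoking Lemma~\ref{lemma-non-degeneracy-of-the-limit-marginals} for the positive definiteness of $\Sigma_P$.
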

Note that the confidence set $C_{n,\alpha}(\mathbf{D}_n,\Lambda_n)$
is an ellipsoid in $\R^m$ 
which is centered at $\psi(f_{\mathbf{D}_{n},\Lambda_{n}})$
and
whose principal axes are given by
$$\sqrt{\frac{\chi_{m,\alpha}^2\gamma_1}{n}\,}\cdot v_1\,,
  \,\dots,\,\sqrt{\frac{\chi_{m,\alpha}^2\gamma_m}{n}\,}\cdot v_m
$$
where $\gamma_1,\dots,\gamma_m$ are the eigenvalues
and $v_1,\dots,v_m$ are corresponding orthonormal
eigenvectors of the matrix $\hat{\Sigma}_n(\mathbf{D}_n,\Lambda_n)$.

\subsection{Computation of Asymptotic Confidence Sets} 
   \label{subsec-computation}

The calculation of the estimator 
$\hat{\Sigma}_n(\mathbf{D}_n,\Lambda_n)$
for a given data set is burdened by the fact that we have to 
solve every of the following $n$ equations 
$$K_{\mathbf{D}_{n},\Lambda_{n}}(f_i)\;=\;\Phi(X_i)\,, 
  \qquad i\in\{1,\dots,n\},
$$
in the typically infinite dimensional function space $H$.
In particular for a large sample size $n$, this seems to
be problematic. However, 
Prop.\ \ref{prop-calculation-of-covariance-estimator} 
below yields that the problem
can essentially be reduced to the calculation of a single 
Moore-Penrose pseudoinverse of an $n\times n$-matrix after the 
following preparation:
Let $D_n=\big((x_1,y_1),\dots,(x_n,y_n)\big)\in(\XY)^n$. 
Then, there is
always a maximal subset
$\{\Phi(x_{i_1}),\dots,\Phi(x_{i_r})\}$
of $\{\Phi(x_1),\dots,\Phi(x_n)\}$ such that
$\Phi(x_{i_1}),\dots,\Phi(x_{i_r})$ are linearly independent --
i.e.\ $\{\Phi(x_{i_1}),\dots,\Phi(x_{i_r})\}$ is a basis
of the vector space spanned by $\Phi(x_1),\dots,\Phi(x_n)$.
Accordingly, for every $i\in\{1,\dots,n\}$,
there are $\beta_{1i},\dots,\beta_{ri}\,\in\,\R$ such that
\begin{eqnarray}\label{prop-calculation-of-covariance-estimator-prep-1}
  \Phi(x_i)=\sum_{j=1}^r \beta_{ji}\Phi(x_{i_j})\;. 
\end{eqnarray}
Define
\begin{eqnarray}\label{prop-calculation-of-covariance-estimator-prep-2}
  B_{D_n}=
    \left(
      \begin{array}{ccc}
        \beta_{11} & \hdots & \beta_{1n} \\
        \vdots & \ddots & \vdots \\
        \beta_{r1} & \hdots & \beta_{rn} \\
      \end{array}
    \right)
  \;\in\;\R^{r\times n}\;.
\end{eqnarray}
E.g., in case of a Gaussian RBF kernel, vectors
$\Phi(x_{i_1}),\dots,\Phi(x_{i_r})$ are linearly independent if
and only if all $x_{i_1},\dots,x_{i_r}$ differ; see e.g.\ 
\cite[Theorem 2.18]{schoelkopf2002}.
Hence, in this case, finding $B_{D_n}$ only means to
identify all ties in the covariates -- and, if 
there are no such ties, $B_{D_n}$ is just the
$n\times n$\,-\,identity matrix.

\begin{proposition}\label{prop-calculation-of-covariance-estimator}
  Let Assumption \ref{basic-assumptions} be fulfilled.
  Fix any set of data
  $D_n=\big((x_1,y_1),\dots,(x_n,y_n)\big)\in(\XY)^n$
  and any $\lambda\in(0,\infty)$. 
  Define $B_{D_n}$ according to 
  (\ref{prop-calculation-of-covariance-estimator-prep-1})
  and (\ref{prop-calculation-of-covariance-estimator-prep-2}).
  Let 
  $\Lss_{D_n,\lambda}\in\R^{n\times n}$ 
  denote the
  diagonal matrix with diagonal entries
  $$\Lss\big(x_1,y_1,f_{D_{n},\lambda}(x_1)
        \big)
    ,\dots,
    \Lss\big(x_n,y_n,f_{D_{n},\lambda}(x_n)
        \big),
  $$
  define the $n\times n$-matrix
  $$A_{D_n,\lambda}\;=\;
    2\lambda\cdot\textup{Id}_{n\times n}\,+\,\frac{1}{n}\cdot
    \Lss_{D_n,\lambda}\cdot
    \left(
      \begin{array}{ccc}
        k(x_1,x_1) & \hdots & k(x_1,x_n) \\
        \vdots & \ddots & \vdots \\
        k(x_n,x_1) & \hdots & k(x_n,x_n) \\
      \end{array}
    \right),
  $$ 
  and let $(B_{D_n}A_{D_n,\lambda})^-$ be the 
  Moore-Penrose pseudoinverse of $B_{D_n}A_{D_n,\lambda}$. 
  Then, for
  every $x\in\X$ and $y\in\Y$, 
  $$K_{D_{n},\lambda}^{-1}\big(\Phi(x)\big)\;=\;
    \frac{1}{2\lambda}\Phi(x)+
    \sum_{i=1}^n\alpha_i(x)\Phi(x_i)
  $$
  and
  $$g_{D_{n},\lambda}(x,y)\,=\,-
    \Ls\big(x,y,f_{D_{n},\lambda}(x)\big)
    \!\cdot\!
    \bigg(\frac{1}{2\lambda}
            \psi_{f_{D_{n},\lambda}}^\prime(x)+
          \sum_{i=1}^n\!\alpha_i(x)
            \psi_{f_{D_{n},\lambda}}^\prime(x_i)\!\!
    \bigg)
  $$
  where
  $$\left(\!\!\!
      \begin{array}{c}
        \alpha_1(x) \\ \vdots \\ \alpha_n(x)
      \end{array}\!\!\!
    \right)
    \;=\;
    -\frac{1}{2 n\lambda}\cdot (B_{D_n}A_{D_n,\lambda})^- B_{D_n}
    \!\cdot\!
    \left(\!\!\!
      \begin{array}{c}
        \Lss\big(x_1,y_1,f_{D_{n},\lambda}(x_1)
        \big)
        k(x_1,x)  \\ 
        \vdots \\ 
        \Lss\big(x_n,x_n,f_{D_{n},\lambda}(x_n)
        \big)
        k(x_n,x)   
      \end{array}\!\!\!
    \right).
  $$
\end{proposition}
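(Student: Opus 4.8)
The plan is to solve the defining equation $K_{D_{n},\lambda}(g)=\Phi(x)$ explicitly for $g:=K_{D_{n},\lambda}^{-1}(\Phi(x))$. First I would record that $K_{D_{n},\lambda}$ is invertible: it is self-adjoint, and since convexity of $L$ gives $\Lss\ge 0$ one has $\la K_{D_{n},\lambda}(f),f\ra\ge 2\lambda\|f\|_H^2$, so $K_{D_{n},\lambda}$ is bounded below (equivalently, it is a positive finite-rank perturbation of $2\lambda\,\textup{Id}$), whence $g$ exists and is unique. Rearranging (\ref{def-of-random-K}) forces the ansatz
$$g=\frac{1}{2\lambda}\Phi(x)+\sum_{i=1}^{n}\alpha_i(x)\Phi(x_i),\qquad \alpha_i(x)=-\frac{1}{2n\lambda}\Lss\big(x_i,y_i,f_{D_{n},\lambda}(x_i)\big)\,g(x_i).$$
Using the reproducing property $g(x_i)=\la g,\Phi(x_i)\ra$ together with $\la\Phi(x_j),\Phi(x_i)\ra=k(x_i,x_j)$ turns these relations into a finite linear system. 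Collecting $\boldsymbol\alpha(x)=(\alpha_1(x),\dots,\alpha_n(x))^{\transp}$, writing $\mathbf K=\big(k(x_i,x_j)\big)_{i,j}$ and $\mathbf k_x=\big(k(x_1,x),\dots,k(x_n,x)\big)^{\transp}$, a short computation yields $A_{D_{n},\lambda}\,\boldsymbol\alpha(x)=-\tfrac{1}{2n\lambda}\,\Lss_{D_{n},\lambda}\,\mathbf k_x$, which is exactly the system whose right-hand side appears in the statement.

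The delicate point, and the main obstacle, is that when $\Phi(x_1),\dots,\Phi(x_n)$ are linearly dependent the matrix $A_{D_{n},\lambda}$ is singular, so $\boldsymbol\alpha(x)$ is not the unique solution of this system; I must therefore show that the Moore--Penrose formula nevertheless reproduces the correct $g$. The observation that makes this work is that $g$ depends on the coefficient vector only through $B_{D_{n}}\boldsymbol\alpha(x)$: by (\ref{prop-calculation-of-covariance-estimator-prep-1}) the span-term equals $\sum_{j=1}^{r}\big(B_{D_{n}}\boldsymbol\alpha(x)\big)_j\,\Phi(x_{i_j})$, and since the basis vectors $\Phi(x_{i_1}),\dots,\Phi(x_{i_r})$ are linearly independent, two coefficient vectors produce the same $g$ if and only if they agree after left-multiplication by $B_{D_{n}}$. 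Applying $B_{D_{n}}$ to the linear system gives a consistent system (solved by the true $\boldsymbol\alpha(x)$), so $\hat{\boldsymbol\alpha}(x):=-\tfrac{1}{2n\lambda}(B_{D_{n}}A_{D_{n},\lambda})^{-}B_{D_{n}}\Lss_{D_{n},\lambda}\mathbf k_x$ is again a solution of it and differs from $\boldsymbol\alpha(x)$ only by some $\delta\in\ker(B_{D_{n}}A_{D_{n},\lambda})$.

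It then remains to prove the implication $B_{D_{n}}A_{D_{n},\lambda}\,\delta=0\ \Rightarrow\ B_{D_{n}}\delta=0$, which forces $B_{D_{n}}\hat{\boldsymbol\alpha}(x)=B_{D_{n}}\boldsymbol\alpha(x)$ and hence identifies $\hat{\boldsymbol\alpha}(x)$ and $\boldsymbol\alpha(x)$ for the purpose of computing $g$. Here I would use the factorization $\mathbf K=B_{D_{n}}^{\transp}G_{D_{n}}B_{D_{n}}$, where $G_{D_{n}}=\big(k(x_{i_a},x_{i_b})\big)_{a,b}$ is the positive definite Gram matrix of the basis. Writing $u:=B_{D_{n}}\delta$, a direct computation gives $B_{D_{n}}A_{D_{n},\lambda}\,\delta=\big(2\lambda\,\textup{Id}_{r\times r}+\tfrac1n B_{D_{n}}\Lss_{D_{n},\lambda}B_{D_{n}}^{\transp}G_{D_{n}}\big)u$. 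Since $\Lss_{D_{n},\lambda}\succeq0$ and $G_{D_{n}}\succ0$, the bracketed matrix is similar (via $G_{D_{n}}^{1/2}$) to $2\lambda\,\textup{Id}_{r\times r}+\tfrac1n G_{D_{n}}^{1/2}B_{D_{n}}\Lss_{D_{n},\lambda}B_{D_{n}}^{\transp}G_{D_{n}}^{1/2}$, a symmetric matrix with all eigenvalues $\ge 2\lambda$, hence invertible; therefore $u=0$, i.e.\ $B_{D_{n}}\delta=0$, as required.

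With $K_{D_{n},\lambda}^{-1}(\Phi(x))$ thus established in the claimed form, the formula for $g_{D_{n},\lambda}(x,y)$ follows routinely: substituting it into the definition of $g_{D_{n},\lambda}$ (the deterministic analogue of (\ref{def-of-random-g-for-cov-estimator}) with $\psi^\prime_{f_{D_{n},\lambda}}$ in place of $\psi^\prime_{\mathbf D_{n},\Lambda_{n}}$), expanding $\la\psi^\prime_{f_{D_{n},\lambda}},K_{D_{n},\lambda}^{-1}(\Phi(x))\ra$ by linearity, and invoking the reproducing property $\la\psi^\prime_{f_{D_{n},\lambda},j},\Phi(z)\ra=\psi^\prime_{f_{D_{n},\lambda},j}(z)$ componentwise gives the stated expression. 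I expect all genuine difficulty to be concentrated in the third paragraph, the pseudoinverse argument resolving the non-uniqueness of $\boldsymbol\alpha(x)$; the remaining steps are standard manipulations in the RKHS.
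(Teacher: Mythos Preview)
Your argument is correct and follows the same overall route as the paper: rearrange $K_{D_n,\lambda}(f)=\Phi(x)$ into the ansatz $f=\tfrac{1}{2\lambda}\Phi(x)+\sum_i\alpha_i\Phi(x_i)$, reduce to a finite linear system, and take the Moore--Penrose solution. The one substantive difference is how the non-uniqueness of $\boldsymbol\alpha$ is handled. You single out the ``true'' $\boldsymbol\alpha$ coming from $g$, observe that the pseudoinverse $\hat{\boldsymbol\alpha}$ differs from it by some $\delta\in\ker(B_{D_n}A_{D_n,\lambda})$, and then prove $\ker(B_{D_n}A_{D_n,\lambda})\subset\ker B_{D_n}$ via the factorization $\mathbf K=B_{D_n}^{\transp}G_{D_n}B_{D_n}$ and a similarity argument with $G_{D_n}^{1/2}$. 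This is valid, but the paper avoids it entirely: it shows directly that $\boldsymbol\alpha$ solves the $H$-equation $\gamma\big(\sum_i\alpha_i\Phi(x_i)\big)=\sum_iw_i\Phi(x_i)$ \emph{if and only if} $B_{D_n}A_{D_n,\lambda}\boldsymbol\alpha=B_{D_n}w$, so \emph{any} solution of the reduced matrix system --- in particular the pseudoinverse one --- yields a legitimate $f$ with $K_{D_n,\lambda}(f)=\Phi(x)$, and invertibility of $K_{D_n,\lambda}$ (which you already recorded) forces this $f$ to equal $g$. In other words, the uniqueness of $K_{D_n,\lambda}^{-1}(\Phi(x))$ in $H$ already delivers what your spectral computation establishes at the matrix level, so the step you flag as the main difficulty is in fact dispensable.
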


\medskip

By use of this proposition, the calculation of 
the estimator 
$\hat{\Sigma}_n(D_n,\lambda)$
is unproblematic. 
According to its definition, it is enough to calculate
the values $g_{D_n,\lambda}(x_i,y_i)$, $i\in\{1,\dots,n\}$,
and, in order to do this, the matrices 
$B_{D_n}$ and $A_{D_n,\lambda}$ have to be defined 
and $(B_{D_n}A_{D_n,\lambda})^-$ has to be calculated
once only. Then, all values 
$g_{D_n,\lambda}(x_i,y_i)$, $i\in\{1,\dots,n\}$, 
can simultaneously be calculated by matrix calculus.
After that, it only remains to calculate 
the inverse of the matrix $\hat{\Sigma}_n(D_n,\lambda)$
in order to calculate the elliptical confidence set.
In order to obtain the principal axes of the ellipse,
one only has to calculate an (orthonormal) eigendecomposition of
$\hat{\Sigma}_n(D_n,\lambda)$ instead.

\section{Applications}\label{sec-applications}

In Subsection \ref{subsec-theory}, a general scheme is developed 
how to derive asymptotic confidence sets for values
$\psi(\SVMplz)$ of
functionals $\psi:H\rightarrow\R^m$. This general scheme is exemplified
in a few possible applications from which it can also be seen that
the assumptions made in Subsection \ref{subsec-theory} are moderate
and, equally important, not mathematically involved
so that they are comprehensible to
practitioners.

\medskip

\textit{The input and the output space.}
  Let $\mathcal{X}\subset\R^d$ be closed and bounded and
  let $\mathcal{Y}\subset\R$ be closed. That is,
  the setting covers regression with 
  $\Y=\R$ and classification with $\Y=\{-1,+1\}$ as well.
  
\medskip

\textit{The kernel $k$.} 
  Let $\tilde{k}:\R^d\times\R^d\rightarrow\R$ be a kernel
  which is $r$\,-\,times continuously differentiable kernel
  where
  $r>d/2$.
  Let $k:\XX\rightarrow\R$ be the restriction of $\tilde{k}$
  on $\XX$. Let $k\not=0$. That is, every of the
  most common kernels can be chosen: 
  a Gaussian RBF kernel, a polynomial kernel, the linear kernel,
  the exponential kernel, or sums and products
  of such kernels. 
  
\medskip

\textit{The loss function $L$.} We exemplarily
  consider the following three settings:
  \begin{itemize}
   \item[(A)] Regression with the least-squares loss:
     Let
     $$L(x,y,t)\;=\;(y-t)^2
       \qquad\forall\,(x,y,t)\,\in\,\XY\times\R
     $$
     and assume that $\mathbb{E}Y^4<\infty$.
   \item[(B)] Regression with the logistic loss:
     Fix a constant $\sigma>0$ and define
     $$L(x,y,t)\;=\;
       -\sigma\cdot
       \log\frac{4\exp\big(\frac{y-t}{\sigma}\big)
                }{\big(1+\exp\big(\frac{y-t}{\sigma}\big)\big)^2}
       \qquad\forall\,(x,y,t)\,\in\,\XY\times\R
     $$
     and assume that $\mathbb{E}Y^2<\infty$.
   \item[(C)] Classification:
     Let $\Y=\{-1,+1\}$ and choose the least-squares loss
     $$L(x,y,t)\;=\;(1-yt)^2
       \qquad\forall\,(x,y,t)\,\in\,\XY\times\R
     $$
     or the logistic loss
     $$L(x,y,t)\;=\;
       \log\big(1+\exp(y-t)\big)
       \qquad\forall\,(x,y,t)\,\in\,\XY\times\R\;.
     $$
  \end{itemize}
  In every of these settings, 
  Assumption \ref{basic-assumptions} is fulfilled. 
  Furthermore, (\ref{non-degeneracy-of-the-limit-marginals-cond-1})
  in Assumption \ref{assumptions-non-deg-of-the-limit-marginals}
  can be rewritten as
  \begin{eqnarray}\label{non-degeneracy-of-the-limit-marginals-cond-1-rew}
    \text{Var}(Y|\,x)\;\not=\;0
    \qquad\text{for }P_{\X}(dx)\text{\,-\,a.e.\ }x\in\X\,.
  \end{eqnarray}
  If $\text{Var}(Y|\,x)=0$ for some $x\in\X$, then 
  $Y$ is deterministically fixed by $X=x$. Of course,
  $\text{Var}(Y|\,x)=0$ for some $x\in\X$ can happen at most
  in case of heteroscedastic 
  (or even more complicated) error terms. In addition to
  (\ref{non-degeneracy-of-the-limit-marginals-cond-1}), 
  the only remaining assumption is Assumption
  (\ref{non-degeneracy-of-the-limit-marginals-cond-2}),
  which we have to take care of when choosing a
  functional $\psi$.
  
\medskip

\textit{The regularization parameter $\Lambda_n$}.
  The regularization parameter can be randomly chosen,
  e.g.\ by use of any data-driven method
  (cross validation etc.).
  The only requirement is to make sure that
  $\sqrt{n}\big(\Lambda_n-\lambda_0\big)\longrightarrow 0
  $ 
  almost surely for $n\rightarrow\infty$.
  A simple way to fulfill this condition
  for any data-driven method
  is to
  choose a (possibly large) constant $c\in(0,\infty)$ and
  to modify the method in such a way that
  it picks a value from $[\lambda_0\,,\,\lambda_0+c/\sqrt{n\ln(n)}\,]$.
  Note that it is 
  indeed possible to
  use the same data for choosing the regularization parameter
  as for building the final estimate - just as usually done by 
  practitioners, 
  e.g., when applying cross validation.

\medskip

\textit{The functional $\psi$.} With these choices
  and assumptions, 
  the asymptotic confidence set 
  (Theorem \ref{theorem-asymptotic-confidence-sets}) is valid for
  every functional $\psi:H\rightarrow\R^m$
  which is Hadamard-differentiable at $\SVMplz$ and 
  fulfills (\ref{non-degeneracy-of-the-limit-marginals-cond-2}).
  In the following, some concrete examples for $\psi$ are 
  listed or even worked out in detail. In most cases, 
  $\psi$ is continuous and linear so that
  the derivative $\psi^\prime_{\SVMplz}$ is exactly known
  as it does not depend on the
  unknown $\SVMplz$. If $\psi^\prime_{\SVMplz}$ is exactly known,
  then Assumption (\ref{non-degeneracy-of-the-limit-marginals-cond-2})
  can be checked in real applications by use of the following 
  ``test'': Define the $m\times n$-matrix
  $$\Psi :=\big(\psi^\prime_{\SVMplz}(x_1),\dots,
                \psi^\prime_{\SVMplz}(x_n)
           \big)
  $$
  where $x_1,\dots,x_n$ are the observed values of the input variables.
  If Assumption (\ref{non-degeneracy-of-the-limit-marginals-cond-2})
  is violated, then the probability that 
  $\Psi$ has rank $m$ (i.e.\ full rank for $n>m$) is equal to 0.
  (This follows from continuity of $\psi^\prime_{\SVMplz}$ and
  the fact that $P_{\X}(\textup{supp}(P_{\X}))=1$.)
  That is, if the observed $\Psi$ has full rank, one can assume
  that (\ref{non-degeneracy-of-the-limit-marginals-cond-2}) is fulfilled.

\medskip

\textbf{Example 1}: Pointwise confidence intervals

Fix some $x_1,\dots,x_m\,\in\,\X$ and define
$$\psi(f)\;=\;(f(x_1),\dots,f(x_m))^{\scriptscriptstyle\mathsf{T}}\,,
  \qquad f\in H\,.
$$
Since $\psi:H\rightarrow\R^m$ is continuous and linear, 
$\psi$ is continuously Hadamard-differentiable. The derivative
is given by
$$\psi_f^\prime\;=\;
  \big(\Phi(x_1),\dots,\Phi(x_m)\big)^{\scriptscriptstyle\mathsf{T}}
  \,\in\,H^m\,,\qquad f\in H\,.
$$
Condition (\ref{non-degeneracy-of-the-limit-marginals-cond-2})
can be checked as described above.
Since
$$\psi_f^\prime(x)\;=\;
  \big(k(x,x_1),\dots,k(x,x_m)\big)^{\scriptscriptstyle\mathsf{T}}
  \qquad\forall\,x\in\X\,,\;\;\;f\in H\,,
$$
it follows from Prop.\ \ref{prop-calculation-of-covariance-estimator} 
that
$$g_{\mathbf{D}_{n},\Lambda_n}\!(x,y)\,=\,-
    \Ls\big(x,y,f_{\mathbf{D}_{n},\Lambda_n}(x)\big)
    \!\cdot\!
    \left(\!\!\!
      \begin{array}{c}
        \frac{1}{2\Lambda_n}k(x,x_1)+\sum_{i=1}^n\alpha_i(x)k(X_i,x_1) \\
        \vdots\\
        \frac{1}{2\Lambda_n}k(x,x_m)+\sum_{i=1}^n\alpha_i(x)k(X_i,x_m)
      \end{array}\!\!\!
    \right)
$$
where the $\alpha_i(x)$, $i\in\{1,\dots,n\}$, are calculated
according to
Prop.\ \ref{prop-calculation-of-covariance-estimator}. 
Fix any $\alpha\in(0,1)$. Then, 
Theorem \ref{theorem-asymptotic-confidence-sets} says that
$$Q\Big(\big(\SVMplz(x_1),\dots,\SVMplz(x_m)\big)
        \;\in\;C_{n,\alpha}(\mathbf{D}_{n},\Lambda_{n})
   \Big)
    \;\xrightarrow[\;n\rightarrow\infty\;]{}\;1-\alpha\;.
$$
where
$C_{n,\alpha}(\mathbf{D}_{n},\Lambda_{n})$
is the elliptical confidence set as
defined in Theorem \ref{theorem-asymptotic-confidence-sets}.
\hfill$\Box$

\medskip

Due to the reproducing property \cite[Def.\ 4.18]{steinwart2008}, 
Example 1 is a special case of
the following example.

\medskip

\textbf{Example 2}: Confidence intervals for inner products

Fix some $h_1,\dots,h_m\,\in\,H$ which are 
linearly independent on the support of
$P_\X$ and
define
$$\psi(f)\;=\;
  \big(\la f,h_1\ra,\dots,\la f,h_m\ra
  \big)^{\scriptscriptstyle\mathsf{T}}\,,
  \qquad f\in H\,.
$$
Since $\psi:H\rightarrow\R^m$ is continuous and linear, 
$\psi$ is continuously Fr\'echet differentiable and the derivative
is given by
$$\psi_f^\prime\;=\;
 (h_1,\dots,h_m)^{\scriptscriptstyle\mathsf{T}}
  \,\in\,H^m\,,\qquad f\in H\,,
$$
and condition (\ref{non-degeneracy-of-the-limit-marginals-cond-2})
is fulfilled.
It follows from Prop.\ \ref{prop-calculation-of-covariance-estimator} 
that
$$g_{\mathbf{D}_{n},\Lambda_n}(x,y)\,=\,-
    \Ls\big(x,y,f_{\mathbf{D}_{n},\Lambda_n}(x)\big)
    \!\cdot\!
    \left(
      \begin{array}{c}
        \frac{1}{2\Lambda_n}h_1(x)+\sum_{i=1}^n\alpha_i(x)h_1(X_i) \\
        \vdots\\
        \frac{1}{2\Lambda_n}h_m(x)+\sum_{i=1}^n\alpha_i(x)h_m(X_i)
      \end{array}
    \right)
$$
where $\alpha_i(x)$, $i\in\{1,\dots,n\}$, are calculated
according to
Prop.\ \ref{prop-calculation-of-covariance-estimator}. 
Fix any $\alpha\in(0,1)$. Then, 
Theorem \ref{theorem-asymptotic-confidence-sets} says that
$$Q\Big(\big(\la\SVMplz,h_1\ra,\dots,\la\SVMplz,h_m\ra\big)
        \;\in\;C_{n,\alpha}(\mathbf{D}_{n},\Lambda_{n})
   \Big)
    \;\xrightarrow[\;n\rightarrow\infty\;]{}\;1-\alpha\;.
$$
where
$C_{n,\alpha}(\mathbf{D}_{n},\Lambda_{n})$
is the elliptical confidence set as
defined in Theorem \ref{theorem-asymptotic-confidence-sets}.
\hfill$\Box$

\medskip

\textbf{Example 3}: Confidence set for the gradient

Fix any $x_0$ in the interior of $\X$ and, 
for every $f\in H$, let
$$\psi(f)\;=\;\partial f(x_0)\;\in\;\R^d
$$
be the gradient vector of $f$ in $x_0$. According to 
\cite[p.\ 130ff]{steinwart2008},
the partial derivative of $f$ in $x_0$ with respect to the 
$j$-th coordinate
of $x$ is given by
$\partial_j f(x_0)=\big\la f,\partial_j\Phi(x_0)\big\ra$.
Hence, this is again
a special case of Example 2 and it follows 
that
$$\psi_f^\prime(x)\;=\;
  \frac{\partial}{\partial \tilde{x}}
  k(x,\tilde{x})\Big|_{\tilde{x}=x_0}
  \qquad\forall\,x\in\X\,,\;\;f\in H\,.
$$
Again, Assumption (\ref{non-degeneracy-of-the-limit-marginals-cond-2})
can be checked as described above.
\hfill$\Box$

\medskip

\textbf{Example 4}: Confidence set for integrals

Fix any Borel set $B\subset\X$ and, 
for every $f\in H$, define
$$\psi(f)\;=\;\int_B f\,dP_{\X}\;\in\;\R^d\;.
$$
This is again a special case of Example 2, the derivative is
given by
$$\psi^\prime_f(x)\;=\;\int_B k(x,\tilde{x})\,P_{\X}(d\tilde{x})
  \qquad\forall\,x\in\X\,,\;\;f\in H
$$
and Assumption (\ref{non-degeneracy-of-the-limit-marginals-cond-2})
can be checked as described above.
\hfill$\Box$

\medskip

\textbf{Example 5}: Confidence interval for the $H$-norm
and for the $L_2$-norm

The map
$$f\;\mapsto\;\psi(f)=\|f\|^2_H
$$
is continuously Hadamard differentiable
with derivative $\psi_f^\prime=2f$ at $f$; see 
e.g.\ \cite[Example 5.1.6(c)]{denkowski2003}.
Condition (\ref{non-degeneracy-of-the-limit-marginals-cond-2})
is fulfilled if $\SVMplz$ is not 
$P_\X$\,-\,almost surely equal to 0. Hence,
it is possible to construct a confidence interval
for $\|\SVMplz\|_H^2$
according to Theorem \ref{theorem-asymptotic-confidence-sets}
and, therefore, also for $\|\SVMplz\|_H$ by
taking square roots.\\
Similarly, for any $B\subset\R^d$, the map 
$$f\;\mapsto\;\psi(f)=\|f\|^2_{L_2(B,\lambda^d)}
  =\int_B \big(f(x)\big)^2\,dx
$$
is continuously Hadamard-differentiable and the derivative at any
$f\in H$
is equal to $\psi_f^\prime=\int_B 2f(x)\Phi(x)\,dx$
(this follows from \cite[Lemma 2.21]{steinwart2008} 
where $L(x,y,t)=t^2$). 
Again, Condition (\ref{non-degeneracy-of-the-limit-marginals-cond-2})
is fulfilled if $\SVMplz$ is not $P_\X$\,-\,almost surely 
equal to 0 on $B$. 
This can be shown by considering the RKHS
which consists of the restrictions of the elements $f\in H$
on $\text{supp}(P_{\X})$.
\hfill$\Box$

\medskip

Similarly, to Example 4,
the map $f\mapsto\|f-\SVMplz\|_H^2$ is continuously
differentiable so that one might be tempted to
apply $\psi(f)=\|f-\SVMplz\|_H^2$ in Theorem
\ref{theorem-asymptotic-confidence-sets}
in order to obtain a confidence band for 
the whole function $\SVMplz$ and not just for a finite number of points
as in Example 1. However, this is not possible because then
the derivative is given by $\psi_f^\prime=2(f-\SVMplz)$
so that $\psi_{\SVMplz}^\prime=0$ which violates
(\ref{non-degeneracy-of-the-limit-marginals-cond-2}).
The mathematical reason behind is that, 
according to the continuous mapping
theorem, $\|\sqrt{n}(f_{\mathbf{D}_{n},\Lambda_{n}}-\SVMplz)\|^2$
weakly converges to $\|\mathds{H}_P\|_H^2$. That is,
$\|f_{\mathbf{D}_{n},\Lambda_{n}}-\SVMplz\|^2$ converges with
rate $n$ while the confidence sets obtained from Theorem
\ref{theorem-asymptotic-confidence-sets}
are based on the rate $\sqrt{n}$.
By estimating quantiles of the distribution of 
$\|\mathds{H}_P\|_H^2$, it would be possible to derive
confidence bands for 
the whole function $\SVMplz$. However, estimating quantiles 
of the distribution of $\|\mathds{H}_P\|_H^2$ is a matter
of its own and cannot be done by use of the results of
Subsection \ref{subsec-theory} -- among other things 
because $\|\mathds{H}_P\|_H^2$ is not normally distributed
(as $\|\mathds{H}_P\|_H^2\geq0$).

\section{Simulations}

\subsection{Confidence sets for function values}\label{sec-simulation-1}

\textit{The model.} The situation
$$Y_i\;=\;f_{0}(X_i)+\varepsilon_i\;,\qquad i\in\{1,\dots,n\}
$$
is considered with the regression function
\begin{eqnarray}\label{simulation-1-true-function}
  f_0(x)\,=\,\log(x+2)+0.7\sin(3x)+0.7\cos(2x)\,.
\end{eqnarray}
The errors $\varepsilon_i$ are drawn i.i.d.\ from the 
standard normal distribution
and the covariates $X_i$ are drawn i.i.d.\ from the 
uniform distribution on $[0,5]$.
The simulation consists of 5000 data sets
with sample sizes $n$ equal to 250, 500, and 1000. 
The confidence sets apply to $\SVMplz$ with 
$\lambda_0=0.00001$ but 
the $L_1$-distance between $\SVMplz$ and 
the actual regression function $f_0$ is approximately equal
to 0.026 and the maximal pointwise distance is 
approximately equal to 0.091 so that the difference between
$\SVMplz$ and $f_0$ can be almost ignored for practical purposes here.
Three kinds of confidence sets are considered:
a univariate one for $\SVMplz(\tilde{x}_0)$
with $\tilde{x}_0=3$,
a multivariate one
for the four values $\SVMplz(\tilde{x})$, 
$\tilde{x}\in\{1,2,3,4\}$,
and a multivariate one
for the seven values $\SVMplz(\tilde{x})$, 
$\tilde{x}\in\{1,1.5,2,2.5,3,3.5,4\}$.
The nominal (asymptotic)
confidence level is 0.95.\\
\textit{Estimation.} The regularized kernel method
was applied with the Gaussian RBF kernel 
$k(x,x^\prime)=\exp(\gamma\|x-x^\prime\|_{\R^d}^2)$
and the logistic loss function with parameter $\sigma=0.5$.
Following \cite{caputo2002} and \cite[p.\ 9]{kernlab},
the hyperparameter
$\gamma$ of the kernel was fixed to 0.5 which is 
about the inverse of the median of the values 
$\|x_i-x_j^\prime\|_{\R^1}^2$.
The regularization parameter was chosen within the values
$$0.00001,\;\;0.00005,\;\;0.0001,\;\;0.0005,\;\;0.001,\;\;0.005,\;\;0.01
$$
in a data-driven way
by a fivefold cross-validation.\\ 
\textit{Performance results.}
Table \ref{table-example-1-results-univariate} 
lists the simulated coverage probabilities 
and, in case of the univariate
confidence interval, the average length($\pm$ standard deviation)
of the intervals obtained by 5000 data sets.
Figure \ref{figure-example-1-boxplots-variance-estimation} 
shows the boxplots for the estimates of 
the asymptotic variance $\Sigma_P$ of 
$\sqrt{n}(f_{\mathbf{D}_n,\Lambda_n}(\tilde{x_0})
          -\SVMplz(\tilde{x}_0)
         )
$
for the different
sample sizes. 
In addition, 
Figure \ref{figure-example-1-band-confidence-intervals}
shows the plot of the true function $\SVMplz$ 
and the pointwise univariate confidence 
interval for every $\tilde{x}\in[0,5]$
obtained for four different
data sets with $n=500$. This is only for 
illustration purposes and must not be mixed with a simultaneous
confidence band; the band around 
the true function is \emph{not} a simultaneous confidence band.

\begin{table}
\begin{center}
\begin{tabular}{r|cc|c|c}
      & \multicolumn{2}{c|}{1-dim.} 
      & 4-dim. & 7-dim. \\
  $n$ & \text{Cov.\,prob. }$(\%)$ & \text{ length} & 
        \text{Cov.\,prob. }$(\%)$ & 
        \text{Cov.\,prob. }$(\%)$  \\
  \hline
  250 & 92.7 & 0.61$\pm$0.09 & 91.6 & 79.4  \\
  500 & 94.0 & 0.44$\pm$0.04 & 93.1 & 91.1  \\
  1000 & 94.7 & 0.32$\pm$0.02 & 94.5 & 93.0  \\
\end{tabular}
\end{center}
\caption{Simulated coverage probability of the 
  confidence sets obtained by 5000 data sets in Subsection
  \ref{sec-simulation-1}.
}\label{table-example-1-results-univariate}
\end{table}
 
\begin{figure}
\begin{center}
  \includegraphics[width=0.7\textwidth]{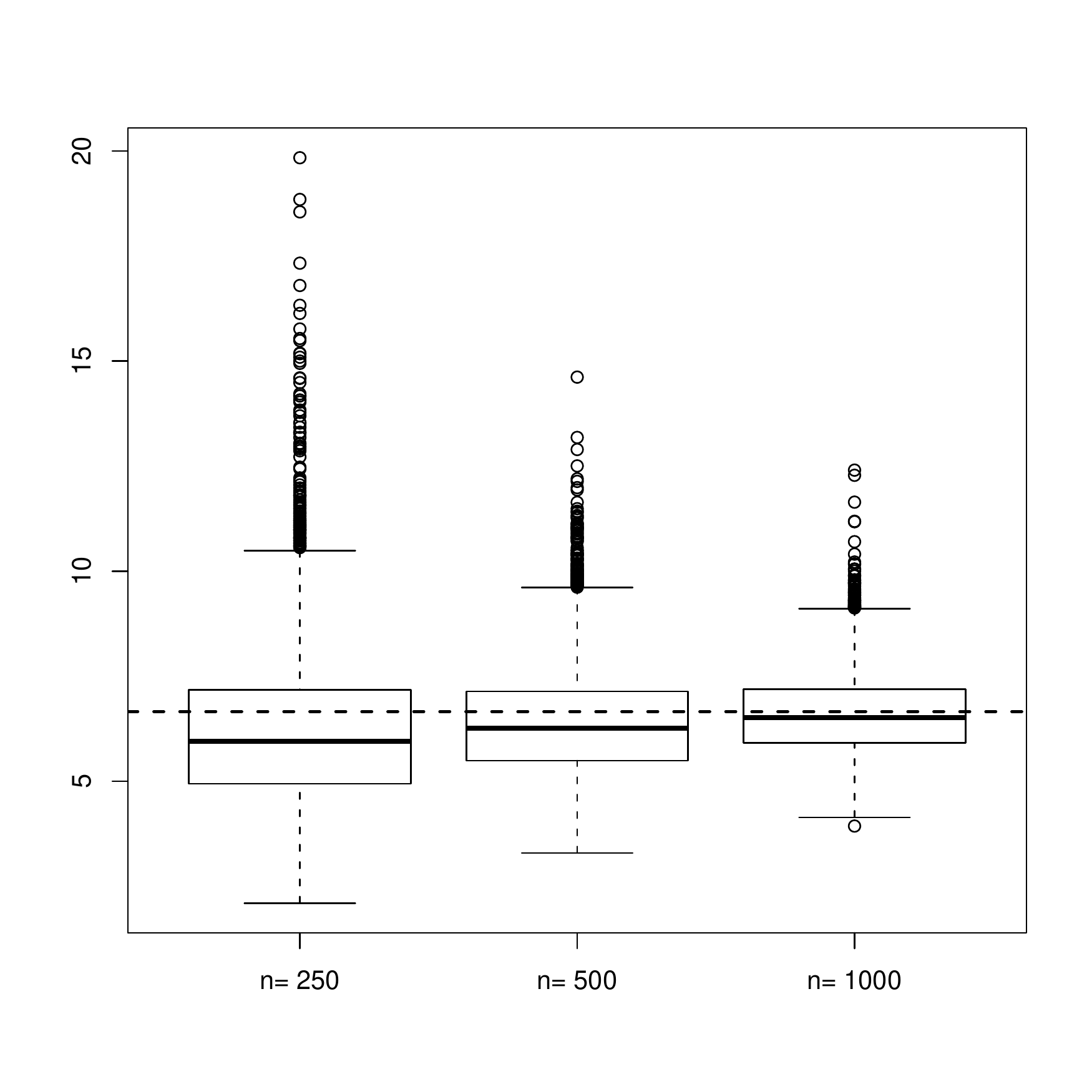}
  \caption{Boxplots for the estimation of 
   the asymptotic variance $\Sigma_P$ of 
   $\sqrt{n}(f_{\mathbf{D}_n,\Lambda_n}(\tilde{x_0})
             -\SVMplz(\tilde{x}_0)
            )
   $
   for the different
   sample sizes $n$ in Subsection \ref{sec-simulation-1}.}
  \label{figure-example-1-boxplots-variance-estimation}
\end{center}
\end{figure}

\begin{figure}
\begin{center}
  \includegraphics[width=\textwidth]{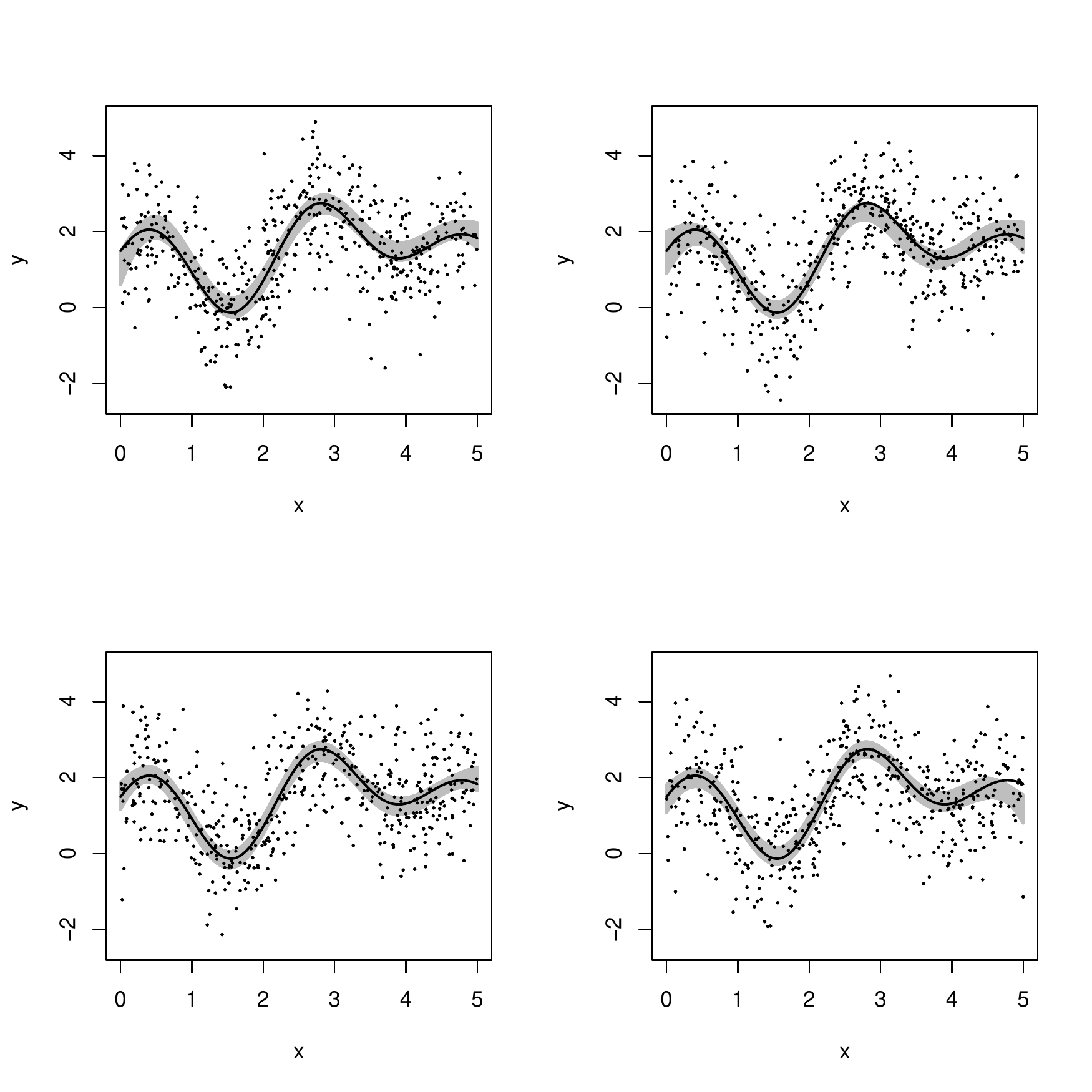}
  \caption{Estimated pointwise $0.95$-confidence intervals 
    (grey area) for four different data sets with sample size
    $n=500$ and the function $\SVMplz$ (solid line)
    in Subsection \ref{sec-simulation-1}.
    }\label{figure-example-1-band-confidence-intervals}
\end{center}
\end{figure}

\subsection{Confidence set for the gradient}\label{sec-simulation-2}

\textit{The model.} Two situations are considered, the
univariate one
$$Y_i\;=\;f_{0}(X_{i})
          +\varepsilon_i\;,\qquad i\in\{1,\dots,n\}\,,
$$
exactly as in Subsection \ref{sec-simulation-1},
and the multivariate one
$$Y_i\;=\;f_{0}(X_{i,1})+\sin(1.5X_{i,2})
          +\varepsilon_i\;,\qquad i\in\{1,\dots,n\}
$$
where $f_0$ is as in (\ref{simulation-1-true-function}).
The errors $\varepsilon_i$ are drawn i.i.d.\ from the 
standard normal distribution. 
In the univariate case,
the covariates $X_i$ 
are drawn i.i.d.\ from the 
uniform distribution on $[0,5]$ and,
in the multivariate case, the covariates
$X_{i,1}$ 
are also drawn i.i.d.\ from the 
uniform distribution on $[0,5]$ and
the covariates
$X_{i,2}$ 
are drawn i.i.d.\ from the 
uniform distribution on $[-1,1]$.
In both cases, we consider confidence sets for
$\psi(\SVMplz)=\partial \SVMplz(x_0)$ 
with $\lambda_0=0.00001$ where,
in the univariate case, $x_0=3$ and, in the
multivariate case, $x_0=(3,0)$.
Accordingly, the confidence set is an interval
in the univariate case and an ellipse in the multivariate case. 
The nominal (asymptotic)
confidence level is 0.95.\\
\textit{Estimation.} The regularized kernel method
was applied with the Gaussian RBF kernel 
and the logistic loss function with parameter $\sigma=0.5$.
Following \cite{caputo2002} and \cite[p.\ 9]{kernlab},
the hyperparameter
$\gamma$ of the kernel was fixed to $1/3$ which is 
about the inverse of the median of the values $\|x_i-x_j\|_{\R^2}^2$\,.
The regularization parameter was chosen as in 
Subsection \ref{sec-simulation-1}.\\ 
\textit{Performance results.}
Table \ref{table-example-2-results} 
lists the simulated coverage probabilities 
and, in case of the univariate
confidence interval, the average length($\pm$ standard deviation)
of the intervals obtained by 5000 data sets. For $n=1000$
in the multivariate case,
Figure \ref{figure-ellipses}
shows the estimates  
$\psi(f_{\mathbf{D}_n,\Lambda_n})=\partial f_{\mathbf{D}_n,\Lambda_n}(x_0)
$
obtained in the 5000 runs (gray points), the true value
$\psi(\SVMplz)$ (as cross $\times$), and the 
ellipse (dashed boundary) 
$$\Big\{w\in\R^m
    \Big|\,\,\big\|
             \Sigma_P^{-\frac{1}{2}}
             \big(w-\psi(\SVMplz)
             \big)
             \big\|^2_{\R^m}
             \leq \, \tfrac{\chi_{m,\alpha}^2}{n}
    \Big\}.
$$
in each plot. Asymptotically, this ellipse
contains the estimate
$\psi(f_{\mathbf{D}_n,\Lambda_n})$
with probability $0.95$.
In addition, each plot shows the estimate 
$\psi(f_{\mathbf{D}_n,\Lambda_n})$ (as black point)
and illustrates the estimated covariance matrix
$\hat{\Sigma}_n(\mathbf{D}_n,\Lambda_n)$
by showing the ellipse (solid boundary)
$$\Big\{w\in\R^m
    \Big|\,\,\big\|
             \hat{\Sigma}_n(\mathbf{D}_n,\Lambda_n)^{-\frac{1}{2}}
             \big(w-\psi(\SVMplz)
             \big)
             \big\|^2_{\R^m}
             \leq \, \tfrac{\chi_{m,\alpha}^2}{n}
    \Big\}
$$
given by the estimate $\hat{\Sigma}_n(\mathbf{D}_n,\Lambda_n)$
in one of the first four runs of the simulation.
\begin{figure}
\begin{center}
  \includegraphics[width=0.7\textwidth]{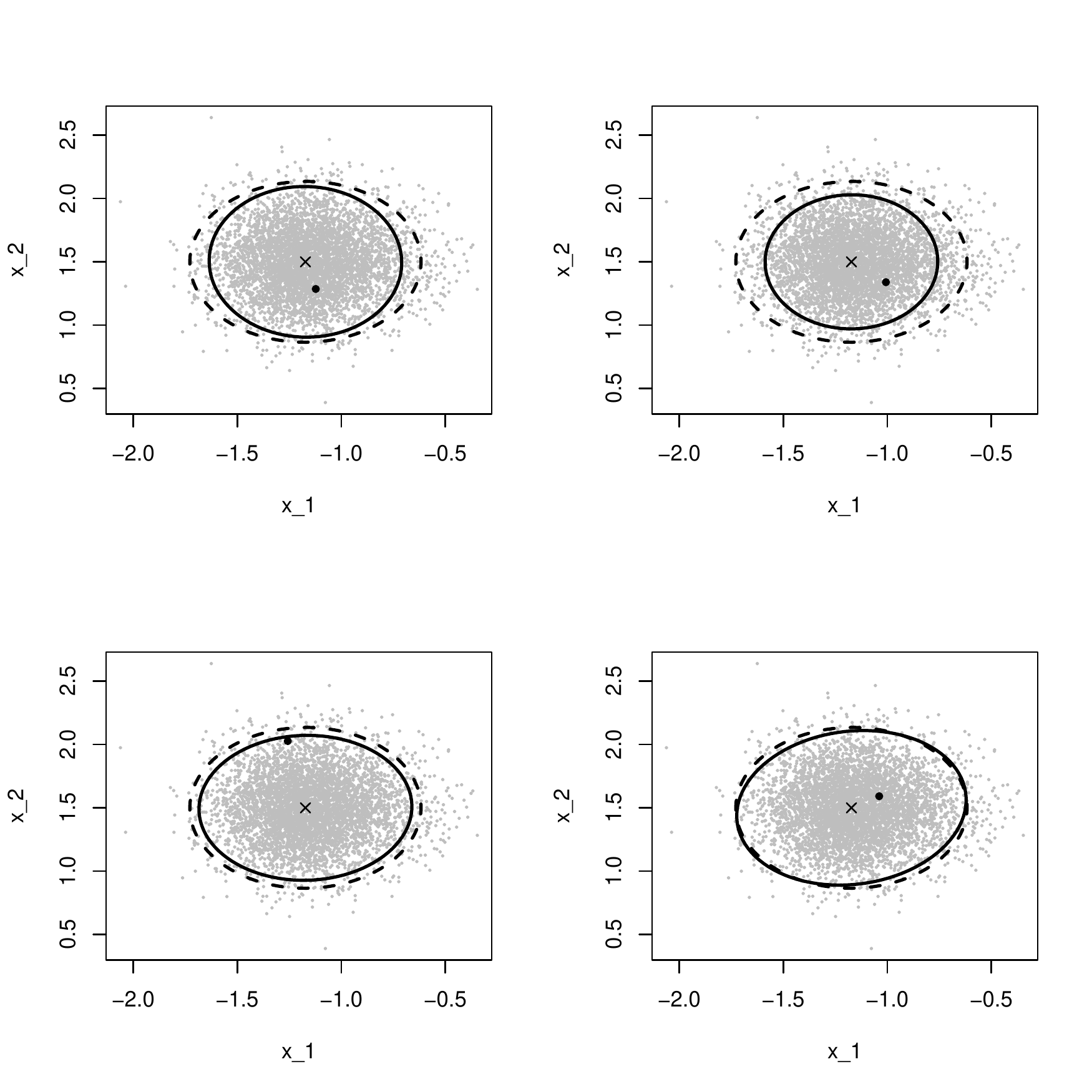}
  \caption{For $n=1000$ in the multivariate case,
    each plot shows the estimates  
    $\psi(f_{\mathbf{D}_n,\Lambda_n})$
    obtained in the 5000 runs (gray points), the true value
    $\psi(\SVMplz)$ (as cross $\times$), and the 
    ellipse (dashed boundary) which asymptotically contains the estimate
    $\psi(f_{\mathbf{D}_n,\Lambda_n})$
    with probability $0.95$. 
    Each of the four plots shows the estimate 
    $\psi(f_{\mathbf{D}_n,\Lambda_n})$ (plack point) and
    the ellipse where the true covariance 
    $\Sigma_P$ is replaced by the estimate
    $\hat{\Sigma}_n(\mathbf{D}_n,\Lambda_n)$
    (solid boundary) in one of the first four runs of
    the simulation in Subsection \ref{sec-simulation-2}.
    }\label{figure-ellipses}
\end{center}
\end{figure}

\begin{table}
\begin{center}
\begin{tabular}{r|cc|c|}
      & \multicolumn{2}{c|}{1-dim.} 
      & 2-dim.  \\
  $n$ & \text{Cov.\,prob. }$(\%)$ & \text{ length} & 
        \text{Cov.\,prob. }$(\%)$ \\
  \hline
  250 & 84.0 & 0.27$\pm$0.23 & 74.5   \\
  500 & 90.0 & 0.17$\pm$0.13 & 83.4   \\
  1000 & 91.5 & 0.11$\pm$0.09 & 91.3   \\
\end{tabular}
\end{center}
\caption{Simulated coverage probability of the 
  confidence sets obtained by 5000 data sets
  in Subsection \ref{sec-simulation-2}.
}\label{table-example-2-results}
\end{table}

\section{Conclusions}\label{sec-conclusions}

Regularized kernel methods 
constitute an important class of standard learning
algorithms in machine learning. 
As theoretical investigations 
concerning asymptotic properties
have manly focused on rates of convergence,
the lack of (asymptotic) results on statistical
inference is a serious limitation
for their use in mathematical statistics.
Therefore, the article derives asymptotically correct
confidence sets for $\psi(f_{P,\lambda_0})$
where $f_{P,\lambda_0}$ denotes the minimizer
of the regularized risk in the reproducing kernel
Hilbert space $H$ and $\psi:H\rightarrow\mathds{R}^m$
is any Hadamard-differentiable functional.
That is, the confidence sets do not apply to the 
minimizer $f^\ast$ of the unregularized risk, which would be
the quantity of primary interest, but to the minimizer of the
regularized risk. On the one hand, this is due to the 
so-called no-free-lunch theorem and obtaining confidence sets 
for $f^\ast$ would require a number of technical assumptions
which can hardly be made plausible in practical applications.
Without such assumptions, $f^\ast$ does not need to exist,
if it exists, it does not have to be unique, and the 
rate of convergence depends on unknown properties.
Technical assumptions can completely be avoided in this article;
all assumptions are simple and can easily be 
communicated to practitioners. On the other hand, it is exemplified
in a simulated example (Subsection \ref{sec-simulation-1}) 
that the difference between
$f^\ast$ and $f_{P,\lambda_0}$ is negligible for practical
purposes even for moderately small $\lambda_0>0$. 

The derivation of the confidence sets
is done by use of asymptotic normality of a large class of
regularized kernel methods and by the derivation
of a strongly consistent
estimator for the unknown covariance matrix of the 
limiting normal distribution. To this end, the following
non-trivial problems had to be solved satisfactorily:
(i) the derivation of a manageable formula for the
covariance matrix, which is accessible for a plug-in
estimator, (ii) strong consistency of
the plug-in estimator, (iii) the exclusion of
degeneracy of the covariance matrix by simple and week conditions,
and (iv) the derivation of an algorithm
for the calculation of the estimator 
which is computationally tractable also for moderately large sample sizes.

Applications include (multivariate) pointwise
confidence sets for values of $f_{P,\lambda_0}$
and confidence sets for gradients, integrals,
and norms. However, the derivation of simultaneous confidence bands
is a matter of further research. It follows from 
\cite[Theorem 3.1]{hable2012a}
that 
$\sqrt{n}
 \big\|f_{\mathbf{D}_n,\Lambda_n}-f_{P,\lambda_0}
 \big\|_\infty
 \,\leadsto\,\big\|\mathds{H}_P\big\|_\infty
$\,.
Hence, simultaneous 
confidence bands could be obtained if it is possible to derive
a consistent estimator for quantiles of 
$\big\|\mathds{H}_P\big\|_\infty$.

\section{Appendix: Proofs}\label{sec-appendix}

Assumption \ref{basic-assumptions} is valid
in the whole appendix.
Since the results of Section 
\ref{sec-asymptotic-confidence} are based on results and proofs in
\cite{hable2012a}, we have to recall the 
quite technical setting from \cite[\S\,A.1]{hable2012a} at first.

In order to shorten notation, define
$$L_{f}\;:\;\;\XY\;\rightarrow\;\R\,,\qquad
  (x,y)\;\mapsto\;L_{f}(x,y)\,=\,L\big(x,y,f(x)\big)
$$
for every function $f:\X\rightarrow\R$\,. Accordingly,
define 
$$\Ls_{f}(x,y)=\Ls\big(x,y,f(x)\big)\qquad\text{and}\qquad
  \Lss_{f}(x,y)=\Lss\big(x,y,f(x)\big)
$$ 
for every
$(x,y)\in\XY$.
As $L$ is a $P$-square-integrable Nemitski loss function
of order $p\in[1,\infty)$\,,
there is a $b\in L_{2}(P)$ and a constant $c\in(0,\infty)$
such that
\begin{eqnarray}\label{def-nemitski-loss}
  \big|L(x,y,t)\big|\;\leq\;b(x,y)+c|t|^{p}
  \qquad\forall\,(x,y,t)\in\XY\times\R\;.
\end{eqnarray}

Let
$$\mathcal{G}_{1}\;\;:=\;\;
  \big\{g:\XY\rightarrow\R\;
  \big|\;\;\exists\,z\in\R^{d+1}\;\;\text{such that}\;\;g=I_{(-\infty,z]}
  \big\}
$$
be the set of all indicator functions $I_{(-\infty,z]}$.
Define 
$\,c_{0}\,:=\,
  \sqrt{\frac{1}{\lambda_{0}}\int b\, \,dP\,}+1\,
$,
$$\mathcal{G}_{2}\;\;:=\;\;
  \left\{g:\XY\rightarrow\R\;
    \Bigg|\;\;
      \begin{array}{c}
        \exists\,f_{0}\in H\,, \;\;\exists\,f\in H 
           \;\;\text{such that} \\
        \|f_{0}\|_{H}\leq c_{0}\,,\,\;
        \|f\|_{H}\leq 1\;\,\text{and}\\
        g=\Ls_{f_{0}}f
      \end{array}
  \right\}\;,
$$
and
$$\G\;\;:=\;\;\G_{1}\cup\G_{2}\cup\{b\}\;.
$$
Let $\ell_{\infty}(\G)$ be the set of all bounded functions
$G:\,\G\rightarrow\R
$
with norm $\|G\|_{\infty}=\sup_{g\in\G}\big|G(g)\big|$\,.
Define
$$B_{S}\;:=\;
  \left\{G:\G\rightarrow\R\;
    \Bigg|\;\;
      \begin{array}{c}
        \exists\,\mu\not=0\;\text{a finite measure on}\;
                  \XY\;\text{such that} \\
        G(g)=\int g \,d\mu \;\,\forall\,g\in\G \,,\\
        b\in L_{2}(\mu)\,, \;\;
        b_{a}^{\prime}\in L_{2}(\mu)
        \;\,\,\forall\,a\in(0,\infty)
      \end{array}
  \right\}
$$  
and $B_{0}:=\textup{cl}\big(\textup{lin}(B_{S})\big)$ the closed linear
span of $B_{S}$
in $\ell_{\infty}(\G)$\,. That is, $B_{S}$ is a subset of
$\ell_{\infty}(\G)$ whose elements correspond to 
finite measures. The assumptions on $L$ and $P$ imply that
$\G\rightarrow\R,\;\;g\mapsto\int g\,dP$
is a well-defined element of $B_{S}$\,. Most often, we identify
an element $G\in B_S$ with its corresponding finite measure
$\mu$. That is, we write $\mu(g)=G(g)=\int g\,d\mu$ for
every $g\in\G$.

\smallskip

Let $\mu\in B_S$. 
Then,
$$S(\mu)\;:=\;f_{\mu,\lambda_{0}}\;=\;
  \text{arg}\inf_{f\in H}
    \bigg(\int L\big(x,y,f(x)\big)\,\mu\big(d(x,y)\big)
			\,+\,\lambda_{0}\|f\|_{H}^{2}
    \bigg)\;.
$$
This defines a map
$S:\,B_{S}\rightarrow H\,.
$
As the multiplication by a strictly positive real number does
not change the ``$\text{arg\,inf}$'', we have
\begin{eqnarray}\label{app-trace-back-to-standard-reg-parameter}
  f_{\mu,\lambda}\;=\;f_{\frac{\lambda_0}{\lambda}\mu,\lambda_0}
  \;=\;S\big(\tfrac{\lambda_0}{\lambda}\mu\big)
  \qquad\forall\,\mu\in B_S\,,
  \;\; \lambda\in(0,\infty)\;.
\end{eqnarray}

Let $\mu\in B_S$ such that $\mu(b)<P(b)+\lambda_0$.
Then, it is shown in \cite[Theorem A.8]{hable2012a} that,
$S$ is Hadamard differentiable in
$\mu$ tangentially to $B_0$. The derivative in $\mu$ is given by
\begin{eqnarray}\label{derivative-svm-functional-app}
  S^\prime_\mu(\nu)\,=
  -K_{\mu}^{-1}\Bigg(\!\int\!\! \Ls_{f_{\mu,\lambda_0}}(x,y)\Phi(x)\,
                     \nu(d(x,y))\!
             \Bigg)
  \qquad\forall\,\nu\in\textup{lin}(B_{S})\quad
\end{eqnarray}
and
\begin{eqnarray}\label{derivative-svm-functional-K-component-app}
  K_{\mu}:\;H\,\rightarrow\,H,\quad
  f\;\mapsto\;
  2\lambda_0 f+\!\int\!\!\Lss_{f_{\mu,\lambda_{0}}}(x,y)f(x)\Phi(x)
               \,\mu(d(x,y))\;.
\end{eqnarray}
Note that the integrals with respect to the 
finite signed measure $\nu$ in 
(\ref{derivative-svm-functional-app})
and the measure $\mu$ in
(\ref{derivative-svm-functional-K-component-app})
are Bochner integrals as the integrands are
$H$-valued functions. According to 
\cite[Lemma A.5]{hable2012a}, $K_\mu$ is
an invertible continuous linear operator and, according to 
\cite[Theorem A.8]{hable2012a},
the derivative
$S^\prime_\mu:B_0\rightarrow H$ is a continuous linear operator.
The following relation between $K_{\mu}$ and the random
$K_{\mathbf{D}_n,\Lambda_n}$ defined in
(\ref{def-of-random-K}) is valid:
\begin{eqnarray}\label{relation-random-K-deterministic-K}
  K_{\mathbf{D}_n,\Lambda_n}\;=\;
  \frac{\Lambda_n}{\lambda_0}
  K_{\frac{\lambda_0}{\Lambda_n}\mathds{P}_{\mathbf{D}_n}}\;.
\end{eqnarray}

If we identify the empirical measure
$\mathds{P}_{\mathbf{D}_{n}(\omega)}$
and $P$ with their corresponding elements in
$\ell_{\infty}(\G)$, it is shown in 
\cite[Lemma A.9]{hable2012a} that
\begin{eqnarray}\label{appendix-convergence-of-emp-process}
  \sqrt{n}\big(\mathds{P}_{\mathbf{D}_{n}}-P\big)
    \;\leadsto\;
    \mathds{G}_P
    \qquad\text{in}\quad\ell_{\infty}(\G)
\end{eqnarray}
where $\mathds{G}_P:\Omega\rightarrow\ell_{\infty}(\G)$
is a tight Borel-measurable Gaussian process.
Then, it is shown in 
\cite[Proof of Theorem 3.1]{hable2012a}
that
\begin{eqnarray}\label{appendix-asympt-normality-of-svm}
  \sqrt{n}
    \big(f_{\mathbf{D}_{n},\Lambda_{n}}-\SVMplz
    \big)
    \;\;\leadsto\;\;\mathds{H}_P\;=\;S^\prime_P\big(\mathds{G}_P\big)
    \qquad \text{in}\;\;H\,.
\end{eqnarray}

\begin{proof}
 \item[\textbf{Proof of Corollary 
       \ref{cor-finite-dim-asymptotic-normality}:%
       }
      ]
  According to the delta-method 
  \cite[Theorem 3.9.4]{vandervaartwellner1996},
  it follows from (\ref{appendix-asympt-normality-of-svm}) and 
  Hadamard-differentiability of $\psi$ in $\SVMplz$ that
  $$\sqrt{n}
    \Big(\psi\big(f_{\mathbf{D}_{n},\Lambda_{n}}
             \big)-
         \psi\big(\SVMplz\big)
    \Big)
    \;\;\leadsto\;\;\big\la\psi^\prime_{\SVMplz},\mathds{H}_P\big\ra\;.
  $$
  Since $f\mapsto\big\la\psi^\prime_{\SVMplz},f\big\ra$ is a continuous
  linear operator and $\mathds{H}$ is a zero-mean Gaussian
  process, it follows that the limit distribution
  is a multivariate normal distribution with mean zero, i.e.,
  the distribution of
  $\big\la\psi^\prime_{\SVMplz},\mathds{H}_P\big\ra$
  is equal to
  $\mathcal{N}_m(0,\Sigma_P)$
  for some covariate matrix $\Sigma_P\in\R^{m\times m}$;
  see e.g.\ \cite[\S\,3.9.2]{vandervaartwellner1996}.
\end{proof}

\begin{proof}
 \item[\textbf{Proof of Prop.\ 
       \ref{prop-covariance-matrix}:%
       }
      ]
  First, it is a direct consequence of the definition of
  the continuous linear operator $K_P$ that $K_P$ is self-adjoint and,
  accordingly, the inverse $K_P^{-1}$
  is again self-adjoint; see 
  \cite[Lemma VI.2.10]{dunford1958}.
  Define $f_j:=K_P^{-1}(\psi_{\SVMplz,j}^\prime)\,\in\,H$
  and note that \cite[(5.4)]{steinwart2008} implies
  \begin{eqnarray}\label{prop-covariance-matrix-p201}
    \Ls_{\SVMplz}\|f_j\|_H^{-1}f_j\;\in\;\G\,.
  \end{eqnarray}
  Since $K_P^{-1}$
  is self-adjoint, it follows
  for every $G\in\text{lin}(B_S)$ with corresponding signed
  measure $\mu$ that 
  \begin{eqnarray*}
    \lefteqn{
    \big\la \psi_{\SVMplz,j}^\prime, S_P^\prime(G)\big\ra
    \,\stackrel{(\ref{derivative-svm-functional-app})}{=}\,
         -\bigg\la f_j,
                  \int \!\!\Ls_{\SVMplz}\Phi\,d\mu
          \bigg\ra
    \,\stackrel{(\ast)}{=}\,
       -\!\int \!\!\Ls_{\SVMplz}\la f_j,\Phi\ra\,d\mu =       
    }\\
    &&=\,-\int \Ls_{\SVMplz}f_j\,d\mu
       \,\stackrel{(\ref{prop-covariance-matrix-p201})}=\,
           - \|f_j\|_H\cdot G\big(\Ls_{\SVMplz}\|f_j\|_H^{-1}f_j\big)\;.
            \qquad\qquad\qquad\quad\,\,
  \end{eqnarray*}
  where $(\ast)$ follows from interchangeability of Bochner integrals
  with continuous linear operators;
  see e.g.\ \cite[Theorem 3.10.16 and Remark 3.10.17]{denkowski2003}.
  Next, it follows from continuity of $S_P^\prime$ that
  \begin{eqnarray}\label{prop-covariance-matrix-p204}
    \big\la \psi_{\SVMplz,j}^\prime, S_P^\prime(G)\big\ra
    \;=\;-\|f_j\|_H\cdot G\big(\Ls_{\SVMplz}\|f_j\|_H^{-1}f_j\big)
  \end{eqnarray}
  is valid even for every $G\in B_0$ where $B_0$ denotes
  the closed linear span of $B_S$ in $\ell_\infty(\G)$.
  Since $\mathds{G}_P$ takes its values in $B_0$, it follows from 
  $\mathds{H_P}=S_P^\prime(\mathds{G}_P)$ now
  that
  \begin{eqnarray}\label{prop-covariance-matrix-p202}
    \big\la \psi_{\SVMplz,j}^\prime, \mathds{H}_P\big\ra
    \;=\;-\|f_j\|_H\cdot 
         \mathds{G}_P\big(\Ls_{\SVMplz}\|f_j\|_H^{-1}f_j\big)
    \quad\forall\,j\in\{1,\dots,m\}\,.\;\,
  \end{eqnarray}
  According to (\ref{appendix-convergence-of-emp-process})
  and (\ref{prop-covariance-matrix-p201}),
  $$\left(
      \begin{array}{c}
        \mathds{G}_P\big(\Ls_{\SVMplz}\|f_1\|_H^{-1}f_1\big)\\
        \vdots \\
        \mathds{G}_P\big(\Ls_{\SVMplz}\|f_m\|_H^{-1}f_m\big)\\
      \end{array}
    \right)
    \;\sim\;\mathcal{N}_m\big(0,\tilde{\Sigma}_P\big)
  $$
  where $\tilde{\Sigma}_P$ is the covariance matrix
  of 
  $$\Big(\Ls_{\SVMplz}(X,Y)\|f_1\|_H^{-1}f_1(X),\dots,
        \Ls_{\SVMplz}(X,Y)\|f_m\|_H^{-1}f_m(X)
    \Big)^{\scriptscriptstyle\mathsf{T}}\;;
  $$
  see, e.g., \cite[p.\ 81f]{vandervaartwellner1996}.
  Let $C$ denote the diagonal matrix with diagonal entries
  $\|f_1\|_H,\dots,\|f_m\|_H$. 
  Then, it follows from 
  (\ref{prop-covariance-matrix-p202}) that
  \begin{eqnarray}\label{prop-covariance-matrix-p203}
    \Sigma_P\;=\;
    \text{Cov}\Big(\big\la \psi_{\SVMplz}^\prime, \mathds{H}_P\big\ra\Big)
    \;=\;C\tilde{\Sigma}_P C\,.
  \end{eqnarray}
  Since, according to the reproducing property
  and self-adjointness of $K_P^{-1}$,
  \begin{eqnarray*}
    f_j(x)
    &=&\Big(K_P^{-1}(\psi_{\SVMplz,j}^\prime)\Big)(x)
       \;=\;\Big\la K_P^{-1}\big(\psi_{\SVMplz,j}^\prime\big),\Phi(x)
            \Big\ra\;=\\
    &=&\Big\la \psi_{\SVMplz,j}^\prime,K_P^{-1}\big(\Phi(x)\big)
       \Big\ra
       \qquad\forall\,j\in\{1,\dots,m\}\,,
  \end{eqnarray*}
  it follows that 
  $-\Ls_{\SVMplz}\|f_j\|_H^{-1}f_j=\|f_j\|_H^{-1}g_{P,\lambda_0,j}$
  where $g_{P,\lambda_0,j}$ denotes the $j$-th component
  of $g_{P,\lambda_0}$.  
  Hence, $\tilde{\Sigma}_P=\text{Cov}\big(C^{-1}g_{P,\lambda_0}(X,Y)\big)$
  so that (\ref{prop-covariance-matrix-p203}) implies
  $\Sigma_P=\text{Cov}\big(g_{P,\lambda_0}(X,Y)\big)$.
\end{proof}

\begin{lemma}\label{lemma-measurability-of-covariance-estimator}
  Under the Assumptions of Theorem 
  \ref{theorem-consistency-covariance-estimator},
  the covariance estimator $\hat{\Sigma}_n(\mathbf{D}_n,\Lambda_n)$
  is measurable with respect to
  $\mathcal{A}$ and $\mathds{B}^{\otimes m^2}$.
\end{lemma}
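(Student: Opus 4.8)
The plan is to avoid the non-measurable map $D_n\mapsto\mathds{P}_{D_n}$ into $\ell_\infty(\G)$ altogether and to establish measurability instead through the explicit finite-dimensional representation of the estimator supplied by Proposition \ref{prop-calculation-of-covariance-estimator}. Since $\hat{\Sigma}_n(\mathbf{D}_n,\Lambda_n)$ is, by its very definition, a fixed polynomial (sums and products) in the centered vectors $\tilde{g}_{\mathbf{D}_n,\Lambda_n}(X_i,Y_i)$, and each of these is in turn a fixed linear combination of the $g_{\mathbf{D}_n,\Lambda_n}(X_j,Y_j)$, $j\in\{1,\dots,n\}$, it suffices to show that for each $i\in\{1,\dots,n\}$ the map
$$\omega\;\mapsto\;g_{\mathbf{D}_n(\omega),\Lambda_n(\omega)}\big(X_i(\omega),Y_i(\omega)\big)\;\in\;\R^m$$
is $\mathcal{A}$-$\mathds{B}^{\otimes m}$-measurable. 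Composition with the continuous polynomial operations then yields measurability of $\hat{\Sigma}_n$ into $(\R^{m\times m},\mathds{B}^{\otimes m^2})$.

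First I would decompose $g_{\mathbf{D}_n,\Lambda_n}(X_i,Y_i)$ into the building blocks exhibited in Proposition \ref{prop-calculation-of-covariance-estimator}: the scalar $\Ls\big(X_i,Y_i,f_{\mathbf{D}_n,\Lambda_n}(X_i)\big)$, the function values $f_{\mathbf{D}_n,\Lambda_n}(X_j)$ and $\psi^\prime_{\mathbf{D}_n,\Lambda_n}(X_j)$, the kernel evaluations $k(X_j,X_\ell)$, and the coefficient vector $(\alpha_1(X_i),\dots,\alpha_n(X_i))^{\mathsf{\scriptscriptstyle T}}$ assembled from $\Lambda_n$, the matrices $A_{\mathbf{D}_n,\Lambda_n}$ and $B_{\mathbf{D}_n}$, and the Moore-Penrose pseudoinverse $(B_{\mathbf{D}_n}A_{\mathbf{D}_n,\Lambda_n})^-$. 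It is essential here to proceed through this \emph{finite-dimensional} representation rather than through the operator $K_{\mathbf{D}_n,\Lambda_n}$ of (\ref{def-of-random-K}) directly, precisely because the latter passes through the non-measurable empirical measure. The continuity of $k$, $\Ls$ and $\Lss$ from Assumption \ref{basic-assumptions} renders the kernel and loss-derivative terms continuous in the data once the inner function values are supplied. Measurability of $\omega\mapsto f_{\mathbf{D}_n(\omega),\Lambda_n(\omega)}(X_j(\omega))$ follows from the elementary continuity of the finite-sample solution map $(D_n,\lambda)\mapsto f_{D_n,\lambda}$ into $H$ (a consequence of the representer theorem and strict convexity), composed with the measurable random elements $\mathbf{D}_n,\Lambda_n$ and the continuous point evaluation $f\mapsto f(x)=\la f,\Phi(x)\ra$; measurability of $\omega\mapsto\psi^\prime_{\mathbf{D}_n(\omega),\Lambda_n(\omega)}(X_j(\omega))$ follows likewise, since $\psi^\prime_{\mathbf{D}_n,\Lambda_n}$ is a measurable $H^m$-valued estimator (which holds in particular for the candidate of Remark \ref{remark-estimator-of-hadamard-derivative}).

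The hard part will be the measurability of the coefficient vector $(\alpha_1,\dots,\alpha_n)$, because its construction passes through two maps that are only piecewise continuous: the selection of a maximal linearly independent subset $\{\Phi(x_{i_1}),\dots,\Phi(x_{i_r})\}$ encoded in $B_{\mathbf{D}_n}$, and the pseudoinverse. To control the first, I would fix a deterministic rule (e.g.\ take the lexicographically first index set $I=\{i_1,\dots,i_r\}$ of maximal cardinality for which the Gram submatrix $(k(x_{i_a},x_{i_b}))_{a,b}$ is nonsingular) and partition $(\XY)^n$ according to which $I$ is thereby selected. Each cell of this finite partition is a Borel set, being cut out by determinant (equivalently rank) conditions on submatrices of the Gram matrix $(k(x_j,x_\ell))_{j,\ell}$, which are continuous in the data; and on each cell $B_{\mathbf{D}_n}$ is obtained by solving fixed linear systems and hence depends continuously on the data. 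For the pseudoinverse I would invoke that $M\mapsto M^-$ is Borel measurable, being continuous on each fixed-rank stratum $\{M:\mathrm{rank}(M)=s\}$ with these strata Borel. Combining, $\omega\mapsto(\alpha_1(X_i(\omega)),\dots,\alpha_n(X_i(\omega)))$ is measurable on each cell and therefore measurable overall; and since the value $g_{\mathbf{D}_n,\Lambda_n}(X_i,Y_i)$ is intrinsically independent of the particular choice of linearly independent subset, the partition is a mere technical device introducing no ambiguity. Assembling the measurable blocks yields measurability of each $\omega\mapsto g_{\mathbf{D}_n,\Lambda_n}(X_i,Y_i)$, and hence of $\hat{\Sigma}_n(\mathbf{D}_n,\Lambda_n)$, as claimed.
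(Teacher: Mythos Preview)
Your argument is correct, but it takes a different and considerably more laborious route than the paper's. The paper proves measurability of $K_{\mathbf{D}_n,\Lambda_n}^{-1}\big(\Phi(X_i)\big)$ \emph{directly} in $H$: it introduces the auxiliary operator $K_{D,\lambda,g}(f)=2\lambda f+\frac{1}{n}\sum_i \Lss(x_i,y_i,g(x_i))f(x_i)\Phi(x_i)$, uses convexity of $L$ to obtain the uniform bound $\|K_{D,\lambda,g}^{-1}\|\leq(2\lambda)^{-1}$, and from this deduces that $(D,\lambda,g)\mapsto K_{D,\lambda,g}^{-1}(f)$ is continuous for each fixed $f\in H$. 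Together with continuity in $f$, this is a Carath\'eodory argument yielding measurability of $\big((D,\lambda,g),f\big)\mapsto K_{D,\lambda,g}^{-1}(f)$ without any matrices, basis selections, or pseudoinverses.

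Your stated reason for avoiding the direct route is a misconception. The operator $K_{\mathbf{D}_n,\Lambda_n}$ in (\ref{def-of-random-K}) is a \emph{finite sum} over the observations, not an integral against the empirical measure viewed as an element of $\ell_\infty(\G)$; nothing in it passes through the non-measurable map $D_n\mapsto\mathds{P}_{D_n}$. So the direct route is open, and taking it spares you the rank-stratification of $(\XY)^n$, the deterministic tie-breaking rule for the basis, and the separate measurability argument for the Moore--Penrose inverse. What your approach buys is a purely finite-dimensional bookkeeping that never touches $H$-valued continuity; what the paper's approach buys is a two-line Carath\'eodory argument that avoids all case analysis.
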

\begin{proof}
 \item[\textbf{Proof of Lemma 
       \ref{lemma-measurability-of-covariance-estimator}:%
       }
      ]
  It has to be shown
  that
  $g_{\mathbf{D}_{n},\Lambda_{n}}(X_i,Y_i)$
  is measurable for every $i\in\{1,\dots,n\}$.\\
  First, note that
  $\omega\mapsto
   f_{\mathbf{D}_{n}(\omega),\Lambda_{n}(\omega)}
  $
  is measurable because:
  for every fixed $\lambda>0$, the map
  $D\mapsto f_{D,\lambda}$ is continuous on $(\XY)^n$
  according to \cite[Lemma 5.13]{steinwart2008}
  and, for every fixed $D\in(\XY)^n$, the map
  $\lambda\mapsto f_{D,\lambda}$ is continuous 
  on $(0,\infty)$ according to 
  \cite[Theorem 5.17]{steinwart2008}; 
  hence,
  $(D,\lambda)\mapsto f_{D,\lambda}$ is a Caratheodory function and,
  therefore, measurable, see e.g.\ 
  \cite[Theorem 2.5.22]{denkowski2003}.\\
  Secondly, we show measurability of 
  $K_{\mathbf{D}_{n},\Lambda_{n}}^{-1}\big(\Phi(X_i)\big)$.
  To this end, define
  $$A_{D,\lambda,g}:\;H\,\rightarrow\,H,\quad
    f\;\mapsto\;
    \frac{1}{n}
    \sum_{i=1}^{n}
      \Lss\big(x_i,y_i,g(x_i)
          \big)f(x_i)\Phi(x_i)
  $$
  and
  $$K_{D,\lambda,g}:\;H\,\rightarrow\,H,\quad
    f\;\mapsto\;
    2\lambda f+A_{D,\lambda,g}(f)
  $$
  for every $D=\big((x_1,y_1),\dots,(x_n,y_n)\big)\in(\XY)^n$,
  $\lambda\in(0,\infty)$, and $g\in H$.
  That is, 
  $K_{\mathbf{D}_{n},\Lambda_{n}}=
   K_{\mathbf{D}_{n},\Lambda_{n},
      f_{\mathbf{D}_{n},\Lambda_{n}}%
     }
  $.
  The assumptions imply that,
  \begin{eqnarray}\label{lemma-measurability-of-covariance-estimator-p1}
    (\X\!\times\!\Y)^n\!\times\!(0,\infty)\!\times\! H
    \,\rightarrow\,H\,,\quad
    (D,\lambda,g)\,\mapsto\,K_{D,\lambda,g}(f)
    \quad\text{is continuous}\;\;
  \end{eqnarray}
  for every $f\in H$. Note that
  \begin{eqnarray*}
    \la f, A_{D,\lambda,g}(f)\ra
    &=&
       \frac{1}{n}
       \sum_{i=1}^{n}
         \Lss\big(x_i,y_i,g(x_i)
             \big)f(x_i)\la f,\Phi(x_i)\ra
       \;=\\
    &=&\frac{1}{n}\sum_{i=1}^{n}
      \Lss\big(x_i,y_i,g(x_i)
          \big)\big(f(x_i)\big)^2
       \;\geq\;0
  \end{eqnarray*}  
  because convexity of $t\mapsto L(x,y,t)$ implies
  $\Lss(x,y,t)\geq 0$. Hence,
  $$\|K_{D,\lambda,g}(f)\|_H^2\;=\;
    4\lambda^2\|f\|^2+2\lambda\la f, A_{D,\lambda,g}(f)\ra
    +\|A_{D,\lambda,g}(f)\|_H^2
    \;\geq\;4\lambda^2\|f\|^2
  $$
  for every $f\in H$, and this implies
  \begin{eqnarray}\label{lemma-measurability-of-covariance-estimator-p2}
    \|K_{D,\lambda,g}^{-1}\|\;\leq\;\frac{1}{2\lambda}
    \qquad\forall\,(D,\lambda,g)\in(\XY)^n\times(0,\infty)\times H\,.
  \end{eqnarray}
  Let the sequence $(D_\ell,\lambda_\ell,g_\ell)$, $\ell\in\N$,
  converge to some $(D,\lambda,g)\in(\XY)^n\times(0,\infty)\times H$.
  Fix any $f\in H$ and denote $h:=K_{D,\lambda,g}^{-1}(f)$.
  Then,
  \begin{eqnarray*}
    \lefteqn{
    \big\|K_{D_\ell,\lambda_\ell,g_\ell}^{-1}(f)
          -K_{D,\lambda,g}^{-1}(f)
    \big\|_H
    \;=\;\big\|K_{D_\ell,\lambda_\ell,g_\ell}^{-1}(K_{D,\lambda,g}(h))-h
         \big\|_H
    \;=
    }\\
    &=&\big\|K_{D_\ell,\lambda_\ell,g_\ell}^{-1}
                 \big(K_{D,\lambda,g}(h)
                      -K_{D_\ell,\lambda_\ell,g_\ell}(h)
                 \big)
        \big \|_H\;\leq\\
    &\stackrel{(\ref{lemma-measurability-of-covariance-estimator-p2})
               }{\leq}&
         \frac{1}{2\lambda_\ell}
         \big\|K_{D,\lambda,g}(h)-K_{D_\ell,\lambda_\ell,g_\ell}(h)
         \big\|_H
         \;\xrightarrow[\;\ell\rightarrow\infty\;]{}\;0
         \qquad\qquad\qquad\qquad
  \end{eqnarray*}
  according to (\ref{lemma-measurability-of-covariance-estimator-p1}).
  That is, $(D,\lambda,g)\mapsto K_{D,\lambda,g}^{-1}(f)$
  is continuous for every fixed $f\in H$. Since
  $f\mapsto K_{D,\lambda,g}^{-1}(f)$ is continuous for 
  every fixed $(D,\lambda,g)$, the function
  $\big((D,\lambda,g),f\big)\mapsto K_{D,\lambda,g}^{-1}(f)$
  is a Caratheodory function and, therefore, measurable. Since
  $f_{\mathbf{D}_{n},\Lambda_{n}}$ is measurable 
  as shown above
  and
  $K_{\mathbf{D}_{n},\Lambda_{n}}=
   K_{\mathbf{D}_{n},\Lambda_{n},
      f_{\mathbf{D}_{n},\Lambda_{n}}%
     }
  $, it follows that
  $K_{\mathbf{D}_{n},\Lambda_{n}}^{-1}\big(\Phi(X_i)\big)$ 
  is measurable.\\
  Finally, measurability of the estimator 
  $\psi^\prime_{\mathbf{D}_{n},\Lambda_{n}}$,
  measurability of $f_{\mathbf{D}_{n},\Lambda_{n}}$, and
  measurability of 
  $K_{\mathbf{D}_{n},\Lambda_{n}}^{-1}\big(\Phi(X_i)\big)$ 
  imply measurability of
  $$g_{\mathbf{D}_{n},\Lambda_{n}}(X_i,Y_i)\;=\;
    -\Ls\big(X_i,Y_i,f_{\mathbf{D}_{n},\Lambda_{n}}(X_i)
        \big)
     \big\la\psi^\prime_{\mathbf{D}_{n},\Lambda_{n}},
             K_{\mathbf{D}_{n},\Lambda_{n}}^{-1}\big(\Phi(X_i)\big)
     \big\ra\;.
  $$
\end{proof}

\begin{proof}
 \item[\textbf{Proof of Theorem 
       \ref{theorem-consistency-covariance-estimator}:%
       }
      ]
  For every $j\in\{1,\dots,m\}$, let
  $\psi^\prime_{\SVMplz,j}\in H$ denote the $j$-th component
  of $\psi^\prime_{\SVMplz}$ and, accordingly, let
  $\psi^\prime_{\mathbf{D}_n,\Lambda_n,j}\in H$,
  $g_{\mathbf{D}_{n},\Lambda_{n},j}$, and
  $g_{P,\lambda_{0},j}$ denote the $j$-th component
  of $\psi^\prime_{\mathbf{D}_n,\Lambda_n}$,
  $g_{\mathbf{D}_{n},\Lambda_{n}}$, and
  $g_{P,\lambda_{0}}$ respectively. 
  Define
  $Z_i=(X_i,Y_i)$ for every $i\in\N$. 
  Measurability of $g_{\mathbf{D}_{n},\Lambda_{n},j}(Z_i)$
  is shown in the proof of 
  Lemma \ref{lemma-measurability-of-covariance-estimator}.
  Define $a:=\|\SVMplz\|_\infty+1\,\in\,[1,\infty)$
  and 
  $c:=\max_j
      \big\|\psi^\prime_{f_{P,\lambda_{0},j}}\big\|_H\cdot
      \big\|K_P^{-1}\big\|\cdot\|k\|_\infty
  $
  where $\big\|K_P^{-1}\big\|$ denotes the operator norm 
  of the continuous linear operator $K_P^{-1}$.
  Then, the definition of $g_{P,\lambda_0,j}$
  and (\ref{theorem-sqrt-n-consistency-1}) imply
  \begin{eqnarray}\label{theorem-consistency-covariance-estimator-p9}
    \big|g_{P,\lambda_0,j}(z)\big|\;\leq\;
    c\cdot b_a^\prime(z)
    \qquad\forall\,z\in\XY\;.
  \end{eqnarray}
  Hence, $g_{P,\lambda_0,j}$ is $P$-square integrable.
  Fix any $j,\ell\in\{1,\dots,m\}$.
  We have to show
  \begin{eqnarray}
    \label{theorem-consistency-covariance-estimator-p1}
      \frac{1}{n}\sum_{i=1}^n 
        g_{\mathbf{D}_{n},\Lambda_{n},j}(Z_i)
      &\xrightarrow[\;n\rightarrow\infty\;]{\,\text{a.s.}\,}&
      \mathbb{E}\big[g_{P,\lambda_{0},j}(Z_1)\big]  \\
    \label{theorem-consistency-covariance-estimator-p2}
      \frac{1}{n}\!\sum_{i=1}^n 
        g_{\mathbf{D}_{n},\Lambda_{n},j}(Z_i)
        g_{\mathbf{D}_{n},\Lambda_{n},\ell}(Z_i)
      &\xrightarrow[\;n\rightarrow\infty\;]{\,\text{a.s.}\,}&
      \mathbb{E}\big[g_{P,\lambda_{0},j}(Z_1)
                     g_{P,\lambda_{0},\ell}(Z_1)
                \big]\,.\qquad\quad
  \end{eqnarray}
  According to \cite[Lemma A.9]{hable2012a}, 
  $\G$ is a $P$-Donsker class and, therefore,
  a $P$-Glivenko-Cantelli class almost sure;
  see \cite[p.\ 82]{vandervaartwellner1996}.
  Hence,
  $\sup_{g\in\G}\big|\mathds{P}_{\mathbf{D}_{n}}(g)-P(g)\big|
   \,\longrightarrow\,0
  $ almost surely and, therefore, there is a measurable 
  set $\Omega_0\in\mathcal{A}$ such that $Q(\Omega_0)=1$ and
  $\sup_{g\in\G}\big|\mathds{P}_{\mathbf{D}_{n}(\omega)}(g)-P(g)\big|
   \,\longrightarrow\,0
  $
  for every $\omega\in\Omega_0$; see
  \cite[\S\,1.9 and Lemma 1.2.3]{vandervaartwellner1996}.
  Due to the law
  of large numbers, we can choose $\Omega_0\in\mathcal{A}$ in such 
  a way that,
  for every $\omega\in\Omega_0$,
  in addition,
  $\frac{1}{n}\sum_{i=1}^n g_{P,\lambda_{0},j}(Z_i(\omega))
  $
  and
  $\frac{1}{n}\sum_{i=1}^n g_{P,\lambda_{0},j}(Z_i(\omega))
                     g_{P,\lambda_{0},\ell}(Z_i(\omega))
  $ 
  and
  $\frac{1}{n}\sum_{i=1}^n b_a^\prime(Z_i(\omega))
  $ 
  and 
  $\frac{1}{n}\sum_{i=1}^n b_a^\prime(Z_i(\omega))^2
  $ converge to their expectations
  for $n\rightarrow\infty$. Furthermore, 
  due to the assumptions on 
  $\psi^\prime_{\mathbf{D}_n,\Lambda_n}$ and
  $\Lambda_n$,
  the set $\Omega_0$ can also be chosen
  in such a way that, in addition, 
  $\|\psi^\prime_{\mathbf{D}_n(\omega),\Lambda_n(\omega)}-
     \psi^\prime_{\SVMplz}
   \|_H
   \longrightarrow0
  $
  and
  $\Lambda_n(\omega)\longrightarrow\lambda_0$
  for every $\omega\in\Omega_0$.
  Fix any $\omega\in\Omega_0$;
  define $D_n:=\mathbf{D}_n(\omega)$ and $\lambda_n:=\Lambda_n(\omega)$
  for every $n\in\N$ and
  $(x_i,y_i):=z_i:=Z_i(\omega)$ for every $i\in\N$. That is, we have
  \begin{eqnarray} 
    &&\hspace*{-1.5cm}\label{theorem-consistency-covariance-estimator-p3}
      \lim_{n\rightarrow\infty}
    \sup_{g\in\G}\big|\mathds{P}_{D_{n}}(g)-P(g)\big|
    \;=\;0\,,
    \qquad \lim_{n\rightarrow\infty}\lambda_n\;=\;\lambda_0\,,\\
     &&\hspace*{-1.5cm}
              \label{theorem-consistency-covariance-estimator-p501}
      \lim_{n\rightarrow\infty}
      \big\|\psi^\prime_{D_n,\lambda_n}-
            \psi^\prime_{\SVMplz}
      \big\|_H
      \;=\;0\,,\\
     &&\hspace*{-1.5cm}
              \label{theorem-consistency-covariance-estimator-p101}
      \lim_{n\rightarrow\infty}
      \frac{1}{n}\sum_{i=1}^n 
        g_{P,\lambda_{0},j}(z_i)
      \;=\;\mathbb{E}_P\big[g_{P,\lambda_{0},j}
                       \big]\,,\\
     &&\hspace*{-1.5cm}\label{theorem-consistency-covariance-estimator-p4}
      \lim_{n\rightarrow\infty}
      \frac{1}{n}\sum_{i=1}^n 
        g_{P,\lambda_{0},j}(z_i)g_{P,\lambda_{0},\ell}(z_i)
      \;=\;\mathbb{E}_P\big[g_{P,\lambda_{0},j}g_{P,\lambda_{0},\ell}
                       \big]\,,\\
   &&\hspace*{-1.5cm}\label{theorem-consistency-covariance-estimator-p5}
      \lim_{n\rightarrow\infty}
      \frac{1}{n}\sum_{i=1}^n b_a^\prime(z_i)
      \;=\;\mathbb{E}_P b_a^\prime\,,\quad\text{and}\quad
    \lim_{n\rightarrow\infty}
      \frac{1}{n}\sum_{i=1}^n b_a^\prime(z_i)^2\;=\;
      \mathbb{E}_P b_a^{\prime\, 2}\,.
  \end{eqnarray}
  It is shown in \cite[(46) and (47)]{hable2012a} that 
  $S:B_S\rightarrow H,\,\,\mu\mapsto S(\mu)=f_{\mu,\lambda_0}$
  is continuous in $P$ and, therefore,
  \begin{eqnarray}\label{theorem-consistency-covariance-estimator-p6}
    \lim_{n\rightarrow\infty}f_{D_n,\lambda_n}
    \;\stackrel{(\ref{app-trace-back-to-standard-reg-parameter})}{=}\;
    \lim_{n\rightarrow\infty}
      S\big(\tfrac{\lambda_0}{\lambda_n}\mathds{P}_{D_n}\big)
    \;\stackrel{(\ref{theorem-consistency-covariance-estimator-p3})}{=}\;
    S(P)\;=\;f_{P,\lambda_0}\;.
  \end{eqnarray}
  In view of (\ref{theorem-consistency-covariance-estimator-p101}),
  it suffices to prove
  \begin{eqnarray}\label{theorem-consistency-covariance-estimator-p102}
      \frac{1}{n}\!\sum_{i=1}^n 
        g_{D_{n},\lambda_{n},j}(z_i)
      -\frac{1}{n}\!\sum_{i=1}^n 
        g_{P,\lambda_{0},j}(z_i)
      \;\xrightarrow[\;n\rightarrow\infty\;]{}\;
      0\,\;\;
  \end{eqnarray}
  in order to prove 
  (\ref{theorem-consistency-covariance-estimator-p1}). \\
  First, it is shown in the following that, for the fixed sequence
  $(z_n)_{n\in\N}\in\XY$, there is 
  an $n_j\in\N$ and a
  sequence 
  $(\varepsilon_{j,n})_{n\in\N}\subset[0,\infty)$ such that
  $\lim_{n\rightarrow\infty}\varepsilon_{j,n}=0$ and,
  for every $n\geq n_j$ and for every $z=(x,y)\in\XY$,
  \begin{eqnarray}\label{theorem-consistency-covariance-estimator-p11}
    \big|g_{D_{n},\lambda_{n},j}(z)-g_{P,\lambda_{0},j}(z)
    \big|
    \;\leq\;\varepsilon_{j,n}+\varepsilon_{j,n}\cdot b_a^\prime(z)\;.
  \end{eqnarray}
  To this end, note that 
  it is shown in \cite[(43)]{hable2012a}
  that $\mu\mapsto K_\mu^{-1}$ is continuous
  in $P$ and, therefore, it follows from 
  (\ref{theorem-consistency-covariance-estimator-p3}) that
  \begin{eqnarray}\label{theorem-consistency-covariance-estimator-p12}
    K_{D_{n},\lambda_{n}}^{-1}
    \;\stackrel{(\ref{relation-random-K-deterministic-K})}{=}\;
    \frac{\lambda_0}{\lambda_n}
    K_{\frac{\lambda_0}{\lambda_n}\mathds{P}_{D_n}}^{-1}
    \;\xrightarrow[\;n\rightarrow\infty\;]{}\;
    K_P^{-1}
    \qquad\text{in operator norm}.
  \end{eqnarray}
  The definitions imply
  \begin{eqnarray}\label{theorem-consistency-covariance-estimator-p13}
    \lefteqn{
    \big|g_{D_{n},\lambda_{n},j}(z)-g_{P,\lambda_{0},j}(z)
    \big|
    \;\leq\;
    }\\
    &\!\leq&\!\!\!
     \big|\big\la\psi^\prime_{D_{n},\lambda_{n},j},
                 K_{D_{n},\lambda_{n}}^{-1}\big(\Phi(x)\big)
          \big\ra
     \big|
     \cdot\big|\Ls_{f_{D_{n},\lambda_{n}}}(z)-
               \Ls_{\SVMplz}(z)
          \big|
     \;+\nonumber\\
    &&\!\!+\,
       \Big|
         \big\la\psi^\prime_{D_{n},\lambda_{n},j},
                K_{D_{n},\lambda_{n}}^{-1}\big(\Phi(x)\big)
         \big\ra
         -
         \big\la\psi^\prime_{\SVMplz,j},
                K_{P}^{-1}\big(\Phi(x)\big)
         \big\ra
       \Big|
       \!\cdot\!\big|\Ls_{\SVMplz}(z)\big|.\nonumber
  \end{eqnarray}
  Due to (\ref{theorem-consistency-covariance-estimator-p6}), 
  there is an $n_j\in\N$ such that
  $\|f_{D_{n},\lambda_{n}}\|_\infty<\|\SVMplz\|_\infty+1=a$
  for every $n\geq n_j$. Hence, the first summand converges to
  0 uniformly in $z\in\XY$
  because of (\ref{theorem-consistency-covariance-estimator-p6}),
  $\|\Phi(\tilde{x})\|_H\leq\|k\|_\infty\;\forall\,\tilde{x}\in\X$,
  (\ref{theorem-consistency-covariance-estimator-p501}), 
  (\ref{theorem-consistency-covariance-estimator-p12}), and
  \begin{eqnarray*}
    \lefteqn{
    \big|\big\la\psi^\prime_{D_{n},\lambda_{n},j},
                 K_{D_{n},\lambda_{n}}^{-1}\big(\Phi(x)\big)
         \big\ra
    \big|
     \cdot\big|\Ls_{f_{D_{n},\lambda_{n}}}(z)-
               \Ls_{\SVMplz}(z)
          \big|
    \;\leq}\\
    &&\stackrel{(\ref{theorem-sqrt-n-consistency-1})}{\leq}\;
    \big\|\psi^\prime_{D_{n},\lambda_{n},j}\big\|_H
    \cdot \big\|K_{D_{n},\lambda_{n}}^{-1}\big\|
    \cdot\sup_{\tilde{x}\in\X}\|\Phi(\tilde{x})\|_H\cdot
    b_a^{\prime\prime}\cdot
    \big\|f_{D_{n},\lambda_{n}}-\SVMplz\big\|_\infty\;. 
  \end{eqnarray*}
  For $\psi^\prime_{\ast,j}\in H$,
  let $\psi^\prime_{\ast,j}\circ K_{\mu}^{-1}$ denote the
  continuous linear operator 
  $h\mapsto\la\psi^\prime_{\ast,j}, K_{\mu}^{-1}(h)\ra$. Then, 
  the
  second summand in 
  (\ref{theorem-consistency-covariance-estimator-p13})
  is bounded via
  \begin{eqnarray*}
    \lefteqn{
    \Big|
         \big\la\psi^\prime_{D_{n},\lambda_{n},j},
                K_{D_{n},\lambda_{n}}^{-1}\big(\Phi(x)\big)
         \big\ra
         -
         \big\la\psi^\prime_{\SVMplz,j},
                K_{P}^{-1}\big(\Phi(x)\big)
         \big\ra
    \Big|
    \!\cdot\!\big|\Ls_{\SVMplz}(z)\big|
    \;\leq}\\
    &&\stackrel{(\ref{theorem-sqrt-n-consistency-1})}{\leq}\;
      \big\|\psi^\prime_{D_{n},\lambda_{n},j}\circ 
                K_{D_{n},\lambda_{n}}^{-1}
            -\psi^\prime_{\SVMplz,j}\circ K_{P}^{-1}
      \big\|
      \cdot\sup_{\tilde{x}\in\X}\|\Phi(\tilde{x})\|_H\cdot
       b_a^{\prime}(z) \;,\qquad  
  \end{eqnarray*}
  and 
  $\big\|\psi^\prime_{D_{n},\lambda_{n},j}\circ 
                K_{D_{n},\lambda_{n}}^{-1}
            -\psi^\prime_{\SVMplz,j}\circ K_{P}^{-1}
      \big\|
      \cdot\sup_{\tilde{x}\in\X}\|\Phi(\tilde{x})\|_H
  $
  converges to zero because of  
  $\|\Phi(\tilde{x})\|_H\leq\|k\|_\infty\;\forall\,\tilde{x}\in\X$,
  (\ref{theorem-consistency-covariance-estimator-p501}), and
  (\ref{theorem-consistency-covariance-estimator-p12}). 
  This proves that we can choose a null-sequence 
  $(\varepsilon_{j,n})_{n\in\N}\subset[0,\infty)$
  such that (\ref{theorem-consistency-covariance-estimator-p11}) 
  is fulfilled for every $n\geq n_j$ and every $z\in\XY$.
  Accordingly, there is 
  an $n_\ell\in\N$ and a
  sequence 
  $(\varepsilon_{\ell,n})_{n\in\N}\subset[0,\infty)$ such that
  $\lim_{n\rightarrow\infty}\varepsilon_{\ell,n}=0$ and,
  for every $n\geq n_\ell$ and for every $z\in\XY$,
  assertion 
  (\ref{theorem-consistency-covariance-estimator-p11})
  with $j$ replaced by $\ell$ is fulfilled. Then, due to
  (\ref{theorem-consistency-covariance-estimator-p9}),
  \begin{eqnarray}\label{theorem-consistency-covariance-estimator-p14}
    \big|g_{D_{n},\lambda_{n},\ell}(z)
    \big|
    \;\leq\;
    \varepsilon_{\ell,n}+(c+\varepsilon_{\ell,n})\cdot b_a^\prime(z)
    \qquad\forall\,z\in\XY\,,\;\;n\geq n_\ell\;.
  \end{eqnarray}
  Define $\varepsilon_n:=\max\{\varepsilon_{j,n},\varepsilon_{\ell,n}\}$
  for every $n\in\N$. Then, for every $n\geq n_j$,
  \begin{eqnarray*}
    \lefteqn{
    \bigg|
      \frac{1}{n}\!\sum_{i=1}^n 
        g_{D_{n},\lambda_{n},j}(z_i)
      -\frac{1}{n}\!\sum_{i=1}^n 
        g_{P,\lambda_{0},j}(z_i)
    \bigg|
    \;\leq\;
       \frac{1}{n}\sum_{i=1}^n
       \big|g_{D_{n},\lambda_{n},j}(z_i)-
            g_{P,\lambda_{0},j}(z_i)
       \big|
    }\\
    &&
     \stackrel{(\ref{theorem-consistency-covariance-estimator-p11})
              }{\leq}\;
          \frac{1}{n}\sum_{i=1}^n
            \big(\varepsilon_{n}+\varepsilon_{n}\cdot b_a^\prime(z_i)
            \big)
            \;\xrightarrow[\;n\rightarrow\infty\;]{}\;0
     \qquad\qquad\qquad\qquad\qquad\qquad\quad
  \end{eqnarray*}
  where convergence to 0 follows from
  $\lim_{n\rightarrow\infty}\varepsilon_n=0$
  and (\ref{theorem-consistency-covariance-estimator-p5}).
  That is, we have proven 
  (\ref{theorem-consistency-covariance-estimator-p1}).\\
  In view of (\ref{theorem-consistency-covariance-estimator-p4}),
  it suffices to prove
  \begin{eqnarray}\label{theorem-consistency-covariance-estimator-p10}
      \frac{1}{n}\!\sum_{i=1}^n 
        g_{D_{n},\lambda_{n},j}(z_i)
        g_{D_{n},\lambda_{n},\ell}(z_i)
      -\frac{1}{n}\!\sum_{i=1}^n 
        g_{P,\lambda_{0},j}(z_i)
        g_{P,\lambda_{0},\ell}(z_i)      
      \;\xrightarrow[\;n\rightarrow\infty\;]{}\;
      0\,.\;\;
  \end{eqnarray}
  in order to prove 
  (\ref{theorem-consistency-covariance-estimator-p2}).
  According to
  (\ref{theorem-consistency-covariance-estimator-p9}),
  (\ref{theorem-consistency-covariance-estimator-p11}), and
  (\ref{theorem-consistency-covariance-estimator-p14}),
  \begin{eqnarray*}
    \lefteqn{
    \bigg|
      \frac{1}{n}\!\sum_{i=1}^n 
        g_{D_{n},\lambda_{n},j}(z_i)
        g_{D_{n},\lambda_{n},\ell}(z_i)
      -\frac{1}{n}\!\sum_{i=1}^n 
        g_{P,\lambda_{0},j}(z_i)
        g_{P,\lambda_{0},\ell}(z_i)      
    \bigg|
    \;\leq}\\
    &\leq&
       \frac{1}{n}\sum_{i=1}^n
       \big|g_{D_{n},\lambda_{n},j}(z_i)-
            g_{P,\lambda_{0},j}(z_i)
       \big|\cdot
       \big|g_{D_{n},\lambda_{n},\ell}(z_i)
       \big|\;+\\
      &&\;\;+\;
       \frac{1}{n}\sum_{i=1}^n
       \big|g_{P,\lambda_{0},j}(z_i)
       \big|\cdot
       \big|g_{D_{n},\lambda_{n},\ell}(z_i)-
            g_{P,\lambda_{0},\ell}(z_i)
       \big|\;\leq\\
    &\leq&\frac{1}{n}\sum_{i=1}^n
            \big(\varepsilon_{n}+\varepsilon_{n}\cdot b_a^\prime(z_i)
            \big)\!\cdot\!
            \big(\varepsilon_{n}+(c+\varepsilon_{n})\cdot b_a^\prime(z_i)
            \big)
            \;+\\
      &&\;\;+\;
       \frac{1}{n}\sum_{i=1}^n
            c\cdot b_a^\prime(z_i)\cdot
            \big(\varepsilon_{n}+\varepsilon_{n}\cdot b_a^\prime(z_i)
            \big)\;=\\
    &=&\varepsilon_n^2+2(\varepsilon_n c+\varepsilon_n^2)\cdot
       \frac{1}{n}\!\sum_{i=1}^n b_a^\prime(z_i)
       +(2\varepsilon_n c+\varepsilon_n^2)\cdot
         \frac{1}{n}\!\sum_{i=1}^n b_a^\prime(z_i)^2
  \end{eqnarray*}
  and the last line converges to 0 as
  $\lim_{n\rightarrow\infty}\varepsilon_n=0$
  and due to (\ref{theorem-consistency-covariance-estimator-p5}).
\end{proof}

\begin{proof}
 \item[\textbf{Proof of Remark 
        \ref{remark-estimator-of-hadamard-derivative}:%
       }
      ]
  According to \cite[Lemma A.9]{hable2012a}, 
  $\G$ is a $P$-Donsker class and, therefore,
  a $P$-Glivenko-Cantelli class almost sure;
  see \cite[p.\ 82]{vandervaartwellner1996}.
  Hence,
  $\mathds{P}_{\mathbf{D}_n}$ converges to $P$ in $B_S$
  almost surely.
  It is shown in \cite[(46) and (47)]{hable2012a} that 
  $S:B_S\rightarrow H,\,\,\mu\mapsto S(\mu)=f_{\mu,\lambda_0}$
  is continuous in $P$ and, therefore,
  $$f_{\mathbf{D}_n,\Lambda_n}
    \;\stackrel{(\ref{app-trace-back-to-standard-reg-parameter})}{=}\;
      S\big(\tfrac{\lambda_0}{\Lambda_n}\mathds{P}_{\mathbf{D}_n}\big)
    \;\xrightarrow[\;n\rightarrow\infty\;]{\text{a.s.}}\;
    S(P)\;=\;f_{P,\lambda_0}\;.
  $$
\end{proof}

\begin{lemma}\label{lemma-non-degeneracy-of-the-limit-marginals}
  Let the Assumptions \ref{basic-assumptions} be fulfilled,
  let $\lambda_{0}\in(0,\infty)$,
  and let $\psi\,:\,\,H\rightarrow\R^m$ be Hadamard-differentiable
  in $\SVMplz$ with derivative $\psi^\prime_{\SVMplz}$.
  For every $j\in\{1,\dots,m\}$, let
  $\psi^\prime_{\SVMplz,j}\in H$ denote the $j$-th component
  of $\psi^\prime_{\SVMplz}$. Let
  $\Sigma_P\in\R^{m\times m}$ be the covariance matrix in
  Corollary \ref{cor-finite-dim-asymptotic-normality}.
  Assume that, for $P_{\X}(dx)$\,-\,a.e.\ $x\in\X$, there
  are $y_1,y_2\in\textup{supp}\big(P(dy|x)\big)$ such that
  \begin{eqnarray}\label{lemma-non-degeneracy-of-the-limit-marginals-1}
    \Ls\big(x,y_1,\SVMplz(x)\big)\,\not=\,
    \Ls\big(x,y_2,\SVMplz(x)\big)\,.
  \end{eqnarray}
  Then, 
  \begin{eqnarray}\label{lemma-non-degeneracy-of-the-limit-marginals-2}
    \Sigma_{P}\text{ has full rank}
    \quad\Leftrightarrow\quad
    \not\exists\, a\in\R^m\setminus\{0\}
    \text{ s.th.\ }
    \,\,a\Transp\psi^\prime_{\SVMplz}=0
    \;\,\,P_{\X}\text{-a.s.}
  \end{eqnarray}
\end{lemma}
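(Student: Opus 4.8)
The plan is to reduce the full-rank question for $\Sigma_P$ to the vanishing of the scalar variances $a\Transp\Sigma_P a$, and then to translate each such vanishing into the $P_{\X}$-a.s.\ degeneracy of $a\Transp\psi^\prime_{\SVMplz}$. By Prop.\ \ref{prop-covariance-matrix} the matrix $\Sigma_P=\textup{Cov}\big(g_{P,\lambda_0}(X_1,Y_1)\big)$ is symmetric and positive semidefinite, so it has full rank if and only if $a\Transp\Sigma_P a>0$ for every $a\in\R^m\setminus\{0\}$. Since $a\Transp\Sigma_P a=\textup{Var}\big(a\Transp g_{P,\lambda_0}(X_1,Y_1)\big)$, the equivalence (\ref{lemma-non-degeneracy-of-the-limit-marginals-2}) follows once I establish, for each fixed $a\in\R^m$,
$$\textup{Var}\big(a\Transp g_{P,\lambda_0}(X_1,Y_1)\big)=0
  \quad\Longleftrightarrow\quad
  a\Transp\psi^\prime_{\SVMplz}=0\;\;P_{\X}\text{-a.s.;}$$
negating this per-$a$ statement and quantifying over $a\neq0$ turns ``$\Sigma_P$ has full rank'' into exactly the right-hand side of (\ref{lemma-non-degeneracy-of-the-limit-marginals-2}). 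As a preliminary, exactly as in the proof of Prop.\ \ref{prop-covariance-matrix}, self-adjointness of $K_P^{-1}$ together with the reproducing property lets me write $a\Transp g_{P,\lambda_0}(x,y)=-\Ls_{\SVMplz}(x,y)\,h_a(x)$, where $h_a:=K_P^{-1}\big(a\Transp\psi^\prime_{\SVMplz}\big)\in H$ and hence $a\Transp\psi^\prime_{\SVMplz}=K_P(h_a)$.

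First I would show that the variance vanishes precisely when $h_a=0$ $P_{\X}$-a.s. Conditioning on $X_1$ and using the law of total variance gives
$$\textup{Var}\big(a\Transp g_{P,\lambda_0}(X_1,Y_1)\big)
  \;\geq\;
  \E\Big[h_a(X_1)^2\cdot
         \textup{Var}\big(\Ls_{\SVMplz}(X_1,Y_1)\,\big|\,X_1\big)\Big]\;\geq\;0\;.$$
Assumption (\ref{lemma-non-degeneracy-of-the-limit-marginals-1}) says that, for $P_{\X}$-a.e.\ $x$, the conditional distribution of $\Ls_{\SVMplz}(X_1,Y_1)$ given $X_1=x$ charges at least two distinct values and thus has strictly positive conditional variance. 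Hence the lower bound is $0$ if and only if $h_a=0$ $P_{\X}$-a.s.; combined with the trivial fact that $h_a=0$ $P_{\X}$-a.s.\ makes $a\Transp g_{P,\lambda_0}(X_1,Y_1)$ vanish $P$-a.s., this yields the equivalence ``$\textup{Var}=0\Leftrightarrow h_a=0$ $P_{\X}$-a.s.'' The claim is thereby reduced to showing that $h_a=0$ $P_{\X}$-a.s.\ if and only if $K_P(h_a)=0$ $P_{\X}$-a.s.

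I expect this last equivalence to be the main obstacle, because $K_P^{-1}$ does not obviously preserve $P_{\X}$-a.s.\ equality. Evaluating $K_P$ pointwise via the reproducing property and disintegrating $P(d(x,y))=P(dy|x)\,P_{\X}(dx)$ gives
$$K_P(h_a)(x^\prime)\;=\;2\lambda_0\,h_a(x^\prime)
  +\int \bar{L}(x)\,h_a(x)\,k(x,x^\prime)\,P_{\X}(dx)\,,
  \qquad
  \bar{L}(x):=\int \Lss_{\SVMplz}(x,y)\,P(dy|x)\;,$$
with $\bar{L}\geq0$ since convexity of $t\mapsto L(x,y,t)$ forces $\Lss\geq0$. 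The direction ``$h_a=0$ $P_{\X}$-a.s.\ $\Rightarrow K_P(h_a)=0$ $P_{\X}$-a.s.''\ is immediate, as then both summands vanish $P_{\X}$-a.s. For the converse I would assume $K_P(h_a)=0$ $P_{\X}$-a.s., multiply by $\bar{L}(x^\prime)h_a(x^\prime)$, integrate against $P_{\X}$, and abbreviate $g:=\bar{L}h_a$ to obtain
$$0\;=\;2\lambda_0\int \bar{L}\,h_a^{\,2}\,dP_{\X}
  \;+\;\Big\|\int g(x)\,\Phi(x)\,P_{\X}(dx)\Big\|_H^2\;,$$
where the double integral has been identified with a squared $H$-norm because $k$ is a reproducing kernel. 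Both terms are nonnegative, so each is $0$; in particular $\int \bar{L}h_a^{\,2}\,dP_{\X}=0$, forcing $\bar{L}h_a^{\,2}=0$ and hence $g=\bar{L}h_a=0$ $P_{\X}$-a.s. The delicate point is that $\bar{L}$ may vanish on part of $\textup{supp}(P_{\X})$, so $h_a=0$ cannot be read off from $\bar{L}h_a^{\,2}=0$ directly; instead I substitute $g=0$ back into the pointwise formula, which collapses the integral term and leaves $2\lambda_0 h_a=0$ $P_{\X}$-a.s., i.e.\ $h_a=0$ $P_{\X}$-a.s. Finally, since $h_a$ and the components of $\psi^\prime_{\SVMplz}$ belong to $H$ and are therefore continuous, ``$=0$ $P_{\X}$-a.s.''\ and ``$=0$ on $\textup{supp}(P_{\X})$''\ coincide, so the chain of equivalences is exactly the negation of (\ref{lemma-non-degeneracy-of-the-limit-marginals-2}).
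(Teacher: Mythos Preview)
Your argument is correct, and it follows the paper up through the identification $a\Transp g_{P,\lambda_0}=-\Ls_{\SVMplz}\cdot h_a$ with $h_a=K_P^{-1}(a\Transp\psi^\prime_{\SVMplz})$. The genuine divergence is in how you handle the equivalence ``$h_a=0$ $P_\X$-a.s.\ $\Leftrightarrow$ $K_P(h_a)=0$ $P_\X$-a.s.'' The paper proves this structurally: it restricts to $\overline{\X}=\textup{supp}(P_\X)$, passes to the restricted RKHS $\overline{H}$, observes that the restricted operator $\overline{K_P}$ is again invertible by \cite[Lemma A.5]{hable2012a}, and then uses that for continuous functions ``$=0$ $P_\X$-a.s.'' is the same as ``$=0$ on $\overline{\X}$''. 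Your route is more hands-on: you disintegrate $K_P$ via $\bar{L}(x)=\int\Lss_{\SVMplz}(x,y)\,P(dy|x)\geq0$, test against $\bar{L}h_a$, and split the resulting quadratic form into two nonnegative pieces whose vanishing forces first $\bar{L}h_a=0$ $P_\X$-a.s.\ and then, by substitution, $h_a=0$ $P_\X$-a.s. This is self-contained and avoids both the restricted-RKHS construction and the external invertibility lemma; the price is that it exploits the specific shape of $K_P$ (identity plus a nonnegative integral operator), whereas the paper's argument would transfer verbatim to any invertible $K_P$ that respects restriction to the support. One small point worth making explicit in your write-up: when you pass from ``$y_1,y_2\in\textup{supp}(P(\cdot|x))$ with $\Ls(x,y_1,\cdot)\neq\Ls(x,y_2,\cdot)$'' to ``the conditional law of $\Ls_{\SVMplz}(X_1,Y_1)$ given $X_1=x$ is not a Dirac'', you are implicitly using continuity of $y\mapsto\Ls(x,y,t)$ from Assumption~\ref{basic-assumptions}.
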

\begin{proof}
 \item[\textbf{Proof of Lemma 
       \ref{lemma-non-degeneracy-of-the-limit-marginals}:%
       }
      ]
  According to (\ref{prop-covariance-matrix-2}),
  we have
  \begin{eqnarray}\label{lemma-non-degeneracy-of-the-limit-marginals-p200}
    \Sigma_{P}\text{ has full rank}
    \quad\Leftrightarrow\quad
    \not\exists\,\, a\in\R^m\setminus\{0\},\;
    c\in\R\,:\;\,
    \,\,a\Transp g_{P,\lambda_0}=c
    \;\,\,P\text{-a.s.}
  \end{eqnarray}
  It is a direct consequence of the definition of
  the continuous linear operator $K_P$ that
  $K_P$ is self-adjoint and, accordingly, $K_P^{-1}$
  is again self-adjoint; see \cite[Lemma VI.2.10]{dunford1958}.   
  Hence, according to the reproducing property, we get
  \begin{eqnarray}\label{lemma-non-degeneracy-of-the-limit-marginals-p2}
    \lefteqn{
    g_{P,\lambda_0,j}(x,y)\;=\;
    -\Ls_{\SVMplz}(x,y)\big\la \psi^\prime_{\SVMplz,j},
                               K_P^{-1}\big(\Phi(x)\big)
                       \big\ra
    \;=}\\
    &\!=&\!\!\!
    -\Ls_{\SVMplz}\!(x,y)\big\la K_P^{-1}(\psi^\prime_{\SVMplz,j}),
                               \Phi(x)
                       \big\ra
    \,=\,
    -\Ls_{\SVMplz}\!(x,y)\big[\!K_P^{-1}(\psi^\prime_{\SVMplz,j})\!
                       \big]\!(x) \nonumber
  \end{eqnarray}
  and, therefore,
  \begin{eqnarray}\label{lemma-non-degeneracy-of-the-limit-marginals-p201}
    a\Transp g_{P,\lambda_0}\;=\;
    -\Ls_{\SVMplz}K_P^{-1}(a\Transp\psi^\prime_{\SVMplz})
    \qquad\forall\,a\in\R^m\;.
  \end{eqnarray}
  It will be shown below that, for every $f\in H$,
  \begin{eqnarray}\label{lemma-non-degeneracy-of-the-limit-marginals-p202}
    f=0\;\,\,P\text{-a.s.}
    \qquad\Leftrightarrow\qquad
    K_P^{-1}(f)=0\;\,\,P\text{-a.s.}
  \end{eqnarray}
  By use of these preparations and 
  (\ref{lemma-non-degeneracy-of-the-limit-marginals-p202}), the proof
  of (\ref{lemma-non-degeneracy-of-the-limit-marginals-2}) can be done
  quickly: First, assume that there is an
  $a\in\R^m\setminus\{0\}$ such that
  $a\Transp\psi^\prime_{\SVMplz}=0\;\,\,P_{\X}\text{-a.s.}$
  Then, it follows from 
  (\ref{lemma-non-degeneracy-of-the-limit-marginals-p202}),
  (\ref{lemma-non-degeneracy-of-the-limit-marginals-p201}), and
  (\ref{lemma-non-degeneracy-of-the-limit-marginals-p200})
  that $\Sigma_P$ does not have full rank. That is we have proven
  ``$\Rightarrow$'' in 
  (\ref{lemma-non-degeneracy-of-the-limit-marginals-2}).
  Next, in order to prove ``$\Leftarrow$'' in 
  (\ref{lemma-non-degeneracy-of-the-limit-marginals-2}),
  assume that $\Sigma_P$ does not have full rank. Then,
  according to (\ref{lemma-non-degeneracy-of-the-limit-marginals-p200})
  and (\ref{lemma-non-degeneracy-of-the-limit-marginals-p201}),
  there is an $a\in\R^m\setminus\{0\}$ and a $c\in\R$ such that,
  for $P_{\X}(dx)$-a.e.\ $x\in\X$  
  $$-\Ls_{\SVMplz}(x,\cdot)
     \big[K_P^{-1}(a\Transp\psi^\prime_{\SVMplz})\big](x)
    \,=\,c
    \;\;\;\;P(\cdot|x)\text{-a.s.}
  $$
  Hence, for $P_{\X}(dx)$-a.e.\ $x\in\X$,
  it follows from 
  (\ref{lemma-non-degeneracy-of-the-limit-marginals-1})
  and continuity of $y\mapsto \Ls_{\SVMplz}(x,y)$ 
  that 
  \begin{eqnarray}\label{lemma-non-degeneracy-of-the-limit-marginals-p3}
    \big[K_P^{-1}(a\Transp\psi^\prime_{\SVMplz})\big](x)
    \;=\;0\;.
  \end{eqnarray}
  According to (\ref{lemma-non-degeneracy-of-the-limit-marginals-p202}),
  this implies that $a\Transp\psi^\prime_{\SVMplz}=0$
  $P_{\X}$-a.s. That is, we have proven
  ``$\Leftarrow$'' in 
  (\ref{lemma-non-degeneracy-of-the-limit-marginals-2}).
  
  Now, it only remains to prove statement 
  (\ref{lemma-non-degeneracy-of-the-limit-marginals-p202}).
  To this end, define
  $\overline{\X}:=\text{supp}(P_{\X})$, 
  let $P_{\overline{\X}}$ be the restriction
  of $P_{\X}$ on the Borel-$\sigma$-algebra of $\overline{\X}$, and
  let $\overline{P}$ be the probability measure on
  $\overline{\X}\times \Y$ defined by
  $$\overline{P}(B)
    =\int\int I_B(x,y)\,P(dy|x)\,P_{\overline{\X}}(dx)
  $$
  for every $B$ in the Borel-$\sigma$-algebra of
  $\overline{\X}\times\Y$.
  In addition, let $\overline{k}$ be the restriction of the kernel $k$
  on $\overline{\X}\times\overline{\X}$
  and $\overline{\Phi}$ the corresponding canonical feature
  map. Then, the
  RKHS of $\overline{k}$ is
  $$\overline{H}\;:=\;
    \big\{\overline{f}:\overline{\X}\rightarrow\R\,
    \big|\,\overline{f}\text{ is the restriction of an }f\in H
           \text{ on }\overline{\X}
    \big\}\,;
  $$
  see e.g.\ \cite[\S\,4.2]{berlinet2004}. For every $f\in H$, let
  $\overline{f}$ denote the restriction of $f$ on
  $\overline{\X}$. 
  Define
  $$\overline{K_P}\,:\;\;\overline{H}\;\rightarrow\;\overline{H}\,,
    \qquad \overline{f}\;\mapsto\;
    2\lambda_0 \overline{f}
    +\int \Ls_{\SVMplz}(x,y)\overline{f}(x)\overline{\Phi}(x)
     \,\overline{P}\big(d(x,y)\big)\;.  
  $$
  As Assumption \ref{basic-assumptions}
  is also fulfilled for $\overline{\X}$ and $\overline{P}$ 
  instead of $\X$ and $P$, it follows from 
  \cite[Lemma A.5]{hable2012a} that
  $\overline{K_P}$ is invertible. 
  The definitions imply 
  $\overline{K(f)}=\overline{K_P}\left(\overline{f}\right)$
  for every $f\in H$ and, therefore,
  $$\overline{K_P}\Big(\overline{K_P^{-1}(f)}\Big)
    \;=\;\overline{K_P\big(K_P^{-1}(f)\big)}\;=\;\overline{f}
    \qquad\forall\,f\in H\;.
  $$
  Hence
  \begin{eqnarray}\label{lemma-non-degeneracy-of-the-limit-marginals-p5}
    \overline{K_P^{-1}(f)}\;=\;
    {\overline{K_P}}^{-1}\big(\overline{f}\big)
    \qquad\forall\,\,\overline{f}\in\overline{H}\;.
  \end{eqnarray}
  Since $\overline{k}$ is continuous and
  $\text{supp}(P_{\overline{\X}})=\overline{\X}$, it follows from
  \cite[Exercise 4.6]{steinwart2008} 
  that, for every $f\in H$,
  \begin{eqnarray}\label{lemma-non-degeneracy-of-the-limit-marginals-p6}
    f\,=\,0\quad P_{\X}\text{-a.s.}
    \qquad\Leftrightarrow\qquad
    \overline{f}\,=\,0
  \end{eqnarray}
  Hence, for every $f\in H$,
  \begin{eqnarray*}
    K_P^{-1}(f)
    \,=\,0\;\;\;P_{\X}\text{-a.s.}
    &
    \stackrel{(\ref{lemma-non-degeneracy-of-the-limit-marginals-p6})}{
              \Leftrightarrow
             }
    &
    \overline{K_P^{-1}(f)}\,=\,0
    \quad
      \stackrel{(\ref{lemma-non-degeneracy-of-the-limit-marginals-p5})}{
                \Leftrightarrow
               }
     \quad
       \overline{K_P}^{-1}(\overline{f})\,=\,0
       \quad\Leftrightarrow
       \\ 
     &\Leftrightarrow &
       \overline{f}\,=\,0
       \quad
       \stackrel{(\ref{lemma-non-degeneracy-of-the-limit-marginals-p6})}{
                \Leftrightarrow
               }
       \quad
       f
       \,=\,0\;\;\;P_{\X}\text{-a.s.}
  \end{eqnarray*} 
\end{proof}

\begin{proof}
 \item[\textbf{Proof of Theorem 
       \ref{theorem-asymptotic-confidence-sets}:%
       }
      ]
  Since taking the square root of a 
  symmetric positive definite matrix is continuous,
  see e.g.\
  \cite[\S\,7.8, Exercise 1]{serre2002},
  it follows from
  Theorem \ref{theorem-consistency-covariance-estimator},
  that
  $\hat{\Sigma}_n(\mathbf{D}_n,\Lambda_n)^{\frac{1}{2}}
   \longrightarrow\Sigma_P^{\frac{1}{2}}
  $
  almost surely for $n\rightarrow\infty$. Hence, 
  Corollary \ref{cor-finite-dim-asymptotic-normality} yields
  $$\sqrt{n}\cdot\hat{\Sigma}_n(\mathbf{D}_n,\Lambda_n)^{-\frac{1}{2}}
    \Big(\psi\big(f_{\mathbf{D}_{n},\Lambda_{n}}
             \big)-
         \psi\big(\SVMplz\big)
    \Big)
    \;\;\leadsto\;\;
    \mathcal{N}_m\big(0,\textup{Id}_{m\times m}\big)\,;
  $$
  see e.g.\ \cite[p.\ 11]{vandervaart1998}. Finally,
  weak convergence, the continuous mapping theorem,
  the portmanteau theorem, and the definition of the 
  chi-squared distribution imply
  \begin{eqnarray*}
    \lefteqn{
    \lim_{n\rightarrow\infty}Q\Big(\psi\big(\SVMplz\big)
           \,\in\,
          C_{n,\alpha}(\mathbf{D}_n,\Lambda_n)
     \Big)
    \;=
    }\\
    &=&\lim_{n\rightarrow\infty}
       Q\Big(\big\|\sqrt{n}\cdot
               \hat{\Sigma}_n(\mathbf{D}_n,\Lambda_n)^{-\frac{1}{2}}
               \big(\psi(\SVMplz)-\psi(f_{\mathbf{D}_{n},\Lambda_{n}})
               \big)
             \big\|^2_{\R^m}
               \leq \, \chi_{m,\alpha}^2
         \Big)\;=\\
     &=&1-\alpha\;.
  \end{eqnarray*}
\end{proof}

\begin{proof}
 \item[\textbf{Proof of Prop.\ 
       \ref{prop-calculation-of-covariance-estimator}:%
       }
      ]
  Let $\{\Phi(x_{i_1}),\dots,\Phi(x_{i_r})\}$ be the maximal
  linearly independent subset of 
  $\{\Phi(x_{1}),\dots,\Phi(x_{n})\}$ which
  defines $B_{D_n}$ according to
  (\ref{prop-calculation-of-covariance-estimator-prep-1})
  and
  (\ref{prop-calculation-of-covariance-estimator-prep-2}).
  Fix  any
  $x\in\X$ and any $y\in\Y$. 
  We have to find an 
  $f\in H$ such that
  $K_{D_n,\lambda}(f)=\Phi(x)$. 
  (The solution $f$
  depends on $D_n$ and $\lambda$
  though this is not
  made explicit in the notation.) Hence,
  by using $f(x_i)=\la f,\Phi(x_i)\ra$,
  \begin{eqnarray}\label{prop-calculation-of-covariance-estimator-p1}
    \Phi(x)\;=\;K_{D_n,\lambda}(f)\;=\;
    2\lambda f+
    \frac{1}{n}\sum_{i=1}^n\Lss_{f_{D_n,\lambda}}(x_i,y_i)
                           \la f,\Phi(x_i)\ra\Phi(x_i)\;.
  \end{eqnarray}
  Rearranging this equality yields
  $$f\;=\;
    \frac{1}{2\lambda}\Phi(x)-
    \frac{1}{2n\lambda}\sum_{i=1}^n
                           \Lss_{f_{D_n,\lambda}}(x_i,y_i)
                           \la f,\Phi(x_i)\ra\Phi(x_i)
  $$
  and, therefore,
  \begin{eqnarray}\label{prop-calculation-of-covariance-estimator-p2}
    f\,=\,\frac{1}{2\lambda}\Phi(x)+h
    \qquad\text{for some }\,h\in
    \text{lin}\big\{\Phi(x_1),\dots,\Phi(x_n)\big\}\,.
  \end{eqnarray}
  Define
  $$w_i\;:=\;
    -\frac{1}{2n\lambda}
              \Lss_{f_{D_n,\lambda}}(x_i,y_i)
              k(x_i,x)
    \qquad\forall\,i\in\{1,\dots,n\}
  $$
  and $w:=(w_1,\dots,w_n)^{\scriptscriptstyle\mathsf{T}}$.
  Putting (\ref{prop-calculation-of-covariance-estimator-p2})
  into (\ref{prop-calculation-of-covariance-estimator-p1}) again
  and a simple rearranging of the resulting equation 
  lead to
  \begin{eqnarray}\label{prop-calculation-of-covariance-estimator-p3}
    2\lambda h
    +\frac{1}{n}\sum_{i=1}^n
        \Lss_{f_{D_n,\lambda}}(x_i,y_i)
           \la h,\Phi(x_i)\ra\Phi(x_i)
    \;=\;\sum_{i=1}^n w_i\Phi(x_i)\;.
  \end{eqnarray}
  That is, $f$ solves (\ref{prop-calculation-of-covariance-estimator-p1})
  if and only if $f$ is of form 
  (\ref{prop-calculation-of-covariance-estimator-p2})
  where $h$ solves
  (\ref{prop-calculation-of-covariance-estimator-p3}). 
  Next, define the linear map
  $$\gamma\;:\;\;
    \text{lin}\big\{\Phi(x_1),\dots,\Phi(x_n)\big\}
    \;\rightarrow\;
    \text{lin}\big\{\Phi(x_1),\dots,\Phi(x_n)\big\}
  $$
  by
  $$\gamma(h)\;=\;2\lambda h
    +\frac{1}{n}\sum_{i=1}^n
        \Lss_{f_{D_n,\lambda}}(x_i,y_i)
           \la h,\Phi(x_i)\ra\Phi(x_i)    
  $$
  for every
  $h\in\text{lin}\big\{\Phi(x_1),\dots,\Phi(x_n)\big\}$. 
  That is, in order to find $h$ which fulfills 
  (\ref{prop-calculation-of-covariance-estimator-p3}) we have to
  find $\alpha_1,\dots,\alpha_n\in\R$
  such that
  \begin{eqnarray}\label{prop-calculation-of-covariance-estimator-p4}
    \gamma\bigg(\sum_{i=1}^n\alpha_i\Phi(x_i)\bigg)
    \;=\;\sum_{i=1}^n w_i\Phi(x_i)\;.
  \end{eqnarray}
  Existence of a solution $h$ and therefore, of 
  $\alpha_1,\dots,\alpha_n$
  is guaranteed as $K_{D_n,\lambda}$ is invertible.
  Let $a_{\ell i}$ be the $(\ell,i)$-entry of the
  matrix $A_{D_n,\lambda}$, $\ell,i\in\{1,\dots,n\}$.
  According to the definition of $A_{D_n,\lambda}$,
  \begin{eqnarray}\label{prop-calculation-of-covariance-estimator-p6}
    \gamma\big(\Phi(x_i)\big)\;=\;\sum_{\ell=1}^n a_{\ell i}\Phi(x_\ell)
    \qquad\forall\,i\in\{1,\dots,n\}\;.
  \end{eqnarray}
  It follows from 
  $$\sum_{i=1}^n w_i\Phi(x_i)
    \;\stackrel{(\ref{prop-calculation-of-covariance-estimator-prep-1})
               }{=}\;
    \sum_{i=1}^n w_i\!\cdot\!
                 \bigg(\sum_{j=1}^r\beta_{ji}\Phi(x_{i_j})\bigg)
    \;=\;
    \sum_{j=1}^r\bigg(\sum_{i=1}^n\beta_{ji}w_i\bigg)\Phi(x_{i_j})
  $$
  and 
  \begin{eqnarray*}
    \lefteqn{
    \gamma\bigg(\sum_{i=1}^n\alpha_i\Phi(x_i)\bigg)\;=\;
       \sum_{i=1}^n\alpha_i\gamma\big(\Phi(x_i)\big)
       \;\stackrel{(\ref{prop-calculation-of-covariance-estimator-p6})
                  }{=}\; 
       \sum_{i=1}^n\alpha_i\sum_{\ell=1}^n a_{\ell i}\Phi(x_\ell)\;=
    } \\
    &&\stackrel{(\ref{prop-calculation-of-covariance-estimator-prep-1})
               }{=}\;
       \sum_{i=1}^n\alpha_i\sum_{\ell=1}^n a_{\ell i}
          \sum_{j=1}^r\beta_{j\ell}\Phi(x_{i_j})\;=\;
       \sum_{j=1}^r
          \bigg(\sum_{i=1}^n\sum_{\ell=1}^n 
                \beta_{j\ell}a_{\ell i}\alpha_i
          \bigg)
          \Phi(x_{i_j})
  \end{eqnarray*}
  that 
  $\alpha:=(\alpha_1,\dots,\alpha_n)^{\scriptscriptstyle\mathsf{T}}$
  is a solution of (\ref{prop-calculation-of-covariance-estimator-p4})
  if and only if
  \begin{eqnarray}\label{prop-calculation-of-covariance-estimator-p7}
    \sum_{j=1}^r
          \bigg(\sum_{i=1}^n\sum_{\ell=1}^n 
                \beta_{j\ell}a_{\ell i}\alpha_i
          \bigg)
          \Phi(x_{i_j})
    \;=\;
    \sum_{j=1}^r\bigg(\sum_{i=1}^n\beta_{ji}w_i\bigg)\Phi(x_{i_j})\;.
  \end{eqnarray}
  Linear independence 
  of $\Phi(x_{i_1}),\dots,\Phi(x_{i_r})$ 
  implies that 
  (\ref{prop-calculation-of-covariance-estimator-p7}) is
  equivalent to
  $$\sum_{i=1}^n\beta_{ji}w_i
    \;=\;
    \sum_{i=1}^n\sum_{\ell=1}^n 
                \beta_{j\ell}a_{\ell i}\alpha_i
    \qquad\forall\,j\in\{1,\dots,r\}
  $$
  or, in matrix notation,
  \begin{eqnarray}\label{prop-calculation-of-covariance-estimator-p8}
    B_{D_n}\cdot w
    \;=\;B_{D_n}A_{D_n,\lambda}\cdot\alpha\;.
  \end{eqnarray}
  Summing up, we have proven that
  $\alpha\in\R^n$ is a solution of 
  (\ref{prop-calculation-of-covariance-estimator-p4})
  if and only if $\alpha$ solves 
  (\ref{prop-calculation-of-covariance-estimator-p8}).
  As already stated above, a solution of 
  (\ref{prop-calculation-of-covariance-estimator-p4})
  and, therefore, of 
  (\ref{prop-calculation-of-covariance-estimator-p8})
  exists. Hence,
  $$\alpha\;:=\;(B_{D_n}A_{D_n,\lambda})^- B_{D_n}w
  $$
  solves (\ref{prop-calculation-of-covariance-estimator-p8})
  and, therefore
  (\ref{prop-calculation-of-covariance-estimator-p4}).
\end{proof}

\section*{Acknowledgment}

I would like to thank Andreas Christmann and Thoralf
Mildenberger for discussions and valuable suggestions.

\bibliographystyle{abbrvnat}
\bibliography{literatur}

\end{document}